\newcolumntype{L}{>{\raggedright\arraybackslash}X}
\newlist{qa}{description}{1}
\setlist[qa]{labelwidth=2.2em,labelsep=0.8em,leftmargin=0pt,itemsep=0.8\baselineskip}
\renewcommand{\P}{\mathcal{P}}
\newcommand{\G}{\mathcal{G}}
\newcommand{\OpFlow}{\textsc{\small{OpFlow}}\xspace}
\newcommand{\SPL}{\textsc{\small{SPL}}\xspace}
\newcommand{\GP}{\textsc{\small{GP}}\xspace}
\newcommand{\UFR}{\textsc{\small{UFR}}\xspace}
\newcommand{\alg}{\textsc{\small{OFM}}\xspace}
\newcommand{\GANO}{\textsc{\small{GANO}}\xspace}
\newcommand{\Q}{\mathcal{Q}}
\newcommand{\Real}{\mathbb{R}}
\newcommand{\F}{\mathcal F}
\newcommand{\B}{\mathcal B}
\newcommand{\T}{\mathcal T}
\newcommand{\Prob}{\mathbb P}
\newcommand{\N}{\mathcal N}
\newcommand{\Jac}{\textbf{J}}
\newcommand{\wh}{\widehat}
\newcommand{\wb}{\overline}
\newtheorem{proposition}{Proposition}[section]
\title{Stochastic Process Learning via Operator Flow Matching}
\author{%
  Yaozhong Shi\\
  \text{California Institute of Technology} \\
  \texttt{yshi5@caltech.edu} \\
  \And
  Zachary E. Ross\\
  \text{California Institute of Technology}\\
  \texttt{zross@caltech.edu} \\
  \AND
  Domniki Asimaki\\
  \text{California Institute of Technology}\\
  \texttt{domniki@caltech.edu}
  \And
  Kamyar Azizzadenesheli\\
  \text{NVIDIA Corporation}\\
  \texttt{kaazizzad@gmail.com}\\
}
\begin{document}

\maketitle

\begin{abstract}
Expanding on neural operators, we propose a novel framework for stochastic process learning across arbitrary domains. In particular, we develop operator flow matching (\alg) for learning stochastic process priors on function spaces. \alg provides the probability density of the values of any collection of points and enables mathematically tractable functional regression at new points with mean and density estimation. Our method outperforms state-of-the-art models in stochastic process learning, functional regression, and prior learning. 
\end{abstract}
\section{Introduction}
Stochastic processes are foundational to many domains, from functional regression and physics-based data assimilation, to financial markets, geophysics, and black box optimization. Stochastic processes can serve as prior distributions over functions and can provide the density of any finite collection of points. Conventionally, these priors are designed by hand from predefined Gaussian processes (\GP) and their variants, and therefore assume that they adequately describe the phenomena of interest. However, many processes modeled in the natural world are not well described by \GP, Fig~\ref{fig:NS_reg}. Such models limit the flexibility and generalizability of these stochastic processes in real-world applications, leaving behind significant challenges for more general stochastic process learning (\SPL). 

In \SPL, the prior over the stochastic process is learned from data, i.e., a set of historical point evaluations in past experiments. Learning the prior over the process is crucial for universal functional regression (\UFR), which is a recently proposed Bayesian scheme for functional regression and takes \GP-regression as a special case when the prior is Gaussian~\citep{shi_universal_2024}. \UFR is important to scientific and engineering domains, including reanalysis, data completion-assimilation, uncertainty quantification, and black box optimization. 


In this paper, we introduce a novel operator learning framework termed operator flow matching (\alg) for learning priors over stochastic processes through the joint distribution of any collection of points. To achieve this, we theoretically and empirically generalize marginal optimal transport flow matching~\citep{tong_improving_2024} to infinite-dimensional function spaces where we map a \GP into a prior over function spaces through a flow differential equation. We then derive \SPL from the function space derivation and learn a prior over arbitrary sets of points. For \SPL, we map any collection of pointwise evaluations of a \GP to pointwise evaluations of target functions. This allows us to learn prior distributions over more general stochastic processes, hence enabling sampling values of any collection of points with their associated density and facilitating efficient \UFR. We leverage this capability by extending neural operators~\cite {azizzadenesheli_neural_2024}--designed initially to map functions between infinite-dimensional spaces--to maps between collections of points deploying their functional convergence properties. This serves as the essential architecture block in \alg.

After learning the prior and having access to the densities, \alg can be used for \UFR, where given any collection of points of the underlying function, we estimate the posterior mean value of any new collection of points and efficiently sample from their posterior values using stochastic gradient Langevin dynamics (SGLD)~\citep{welling_bayesian_2011}, i.e., a Gaussian sampling on the input \GP space (Fig~\ref{fig:demo}). We show that \alg significantly outperforms state-of-the-art (SOTA) methods, including deep \GP{}s, conditional models, and operator flows (\OpFlow)~\citep{kim_attentive_2019, gordon_convolutional_2020, garnelo_neural_2018, williams_gaussian_2006, salimbeni_doubly_2017, jankowiak_deep_2020,shi_universal_2024}. 

Generalizing GP-regression to regression over general stochastic process in a practical and implementable way demands a unified framework that spans many key fields. Because readers will have diverse backgrounds and expertise, we provide a set of potential questions and answers in Appendix~\ref{App:QandA} to further clarify the development and highlight our contributions.

\begin{figure*}[ht]
    \vspace*{-.3cm}
    \centering

    \includegraphics[width=0.95\textwidth,trim={2cm, 0, 1.5cm, 0},clip]
    {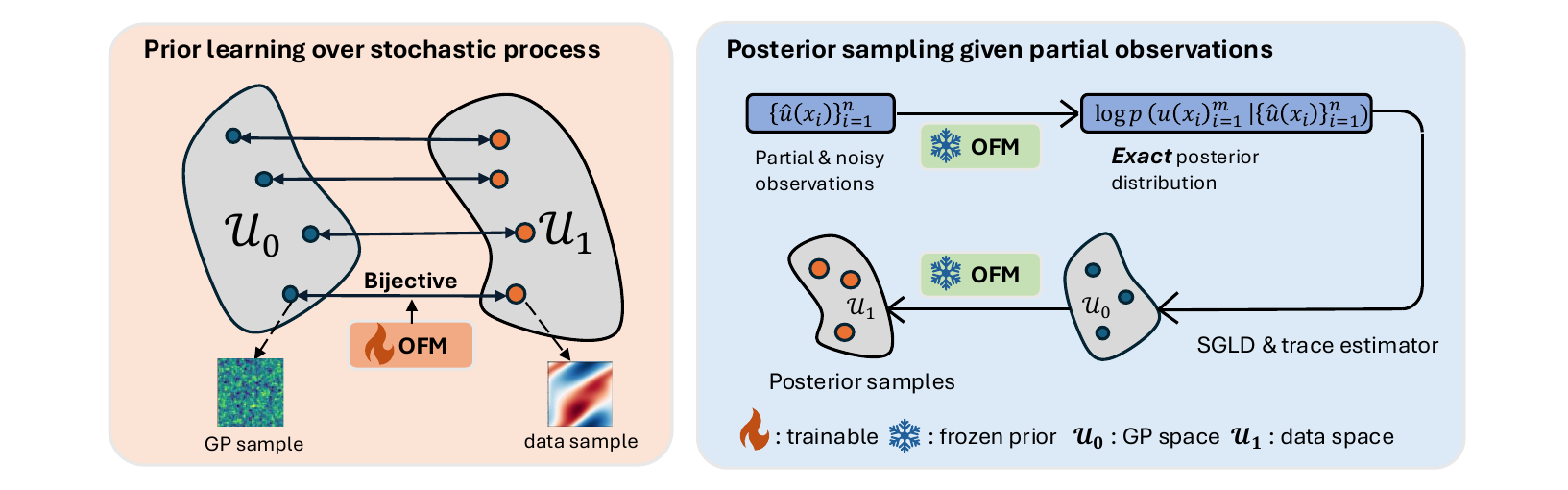}
    \caption{Two‑phase strategy for prior learning and posterior sampling. In the prior‑learning phase, \alg leverages the marginal optimal‑transport path to learn a bijective mapping between a predefined \GP and the unknown stochastic process that generates the training data. In the posterior‑sampling phase, the learned prior is frozen; given noisy, partial observations, the exact posterior is obtained via Bayes’ theorem. SGLD, aided by the Hutchinson trace estimator, then enables efficient and robust sampling.} \label{fig:demo}   
\end{figure*}

Lastly, our contributions are summarized as follows:
\begin{itemize}
    \item  \textbf{First} work to extend \textit{flow matching / stochastic interpolants / rectified flow}~\citep{lipman_flow_2023, albergo_building_2023, liu_flow_2022}  to stochastic processes via operator learning. The formulation enables likelihood estimation for function values at any collection of points for the target stochastic process. We also contribute to the development of marginal (dynamic) optimal transport in infinite-dimensional flow matching through optimal coupling and dynamic Kantorovich formulations.
    
    \item \textbf{First} integration of flow matching with functional regression yields a unified framework for prior and posterior sampling that is applicable to both generation and regression tasks. 
    
    \item A practical generalization of GP regression that provides the \textbf{exact} prior and posterior density over an unknown stochastic process (whether \GP or non‑\GP). In contrast, previous methods such as deep \GP{}s and conditional models work with approximate posteriors. Our method achieves \textbf{SOTA} performance in all challenging functional regression tasks.

    \item This work provides a unified perspective bridging several important fields, which opens new research directions for problems in science and engineering. Additionally, we present extensive ablation and scaling studies that demonstrate the effectiveness of each component in our framework.

\end{itemize}

\section{Related Work}
\textbf{Neural operators.} Neural operators constitute a paradigm in machine learning for learning maps between function spaces, a generalization of conventional neural networks that map between finite dimensional spaces~\citep{li_fourier_2021,kovachki_neural_2023}. Among neural operator architectures, Fourier neural operators (FNO)~\citep{li_fourier_2021} enable convolution in the spectral domain and are effective for operator learning~\citep{yang_seismic_2021,pathak_fourcastnet_2022,wen_real-time_2023, yang_rapid_2023,sun_phase_2023,li_geometry-informed_2023}. In this work, we use this as our neural operator architecture. 
\begin{figure*}[ht]
\vspace*{-.2cm}
\hspace*{-.40cm}  
    \centering
    \begin{subfigure}{0.19\textwidth}
        \includegraphics[height=0.75\textwidth,trim={1cm, 0.2cm, 0.4cm, 1.15cm},clip]{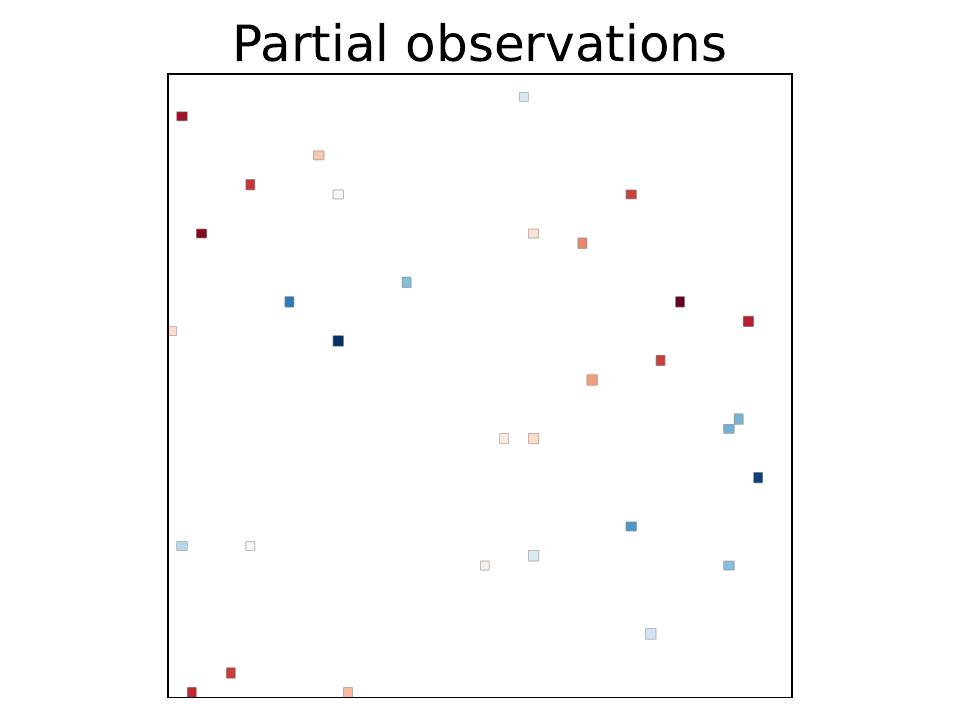}  
        \vspace*{-.5cm}
        \caption{Observations}
    \end{subfigure}
    \begin{subfigure}{0.19\textwidth}
        \centering
        \includegraphics[height=.75\textwidth,trim={1cm, 0.2cm, 0, 1.15cm},clip]{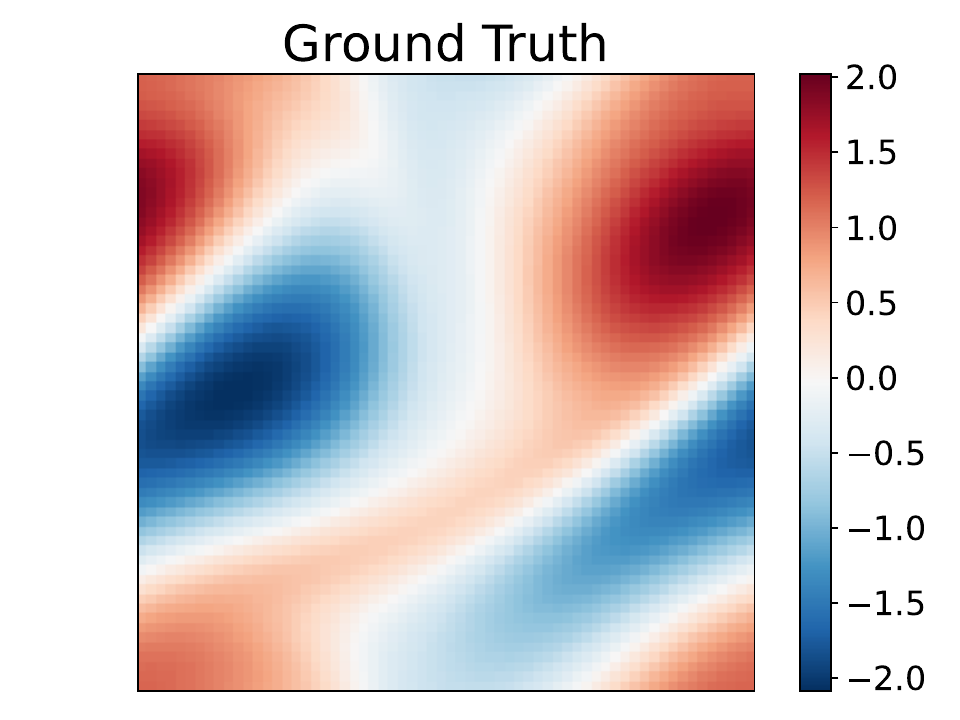}
        \vspace*{-.5cm}
        \caption{Ground truth}
    \end{subfigure}
    \begin{subfigure}{0.19\textwidth}
        \includegraphics[height=.75\textwidth,trim={1cm, 0.2cm, 0, 1.15cm},clip]{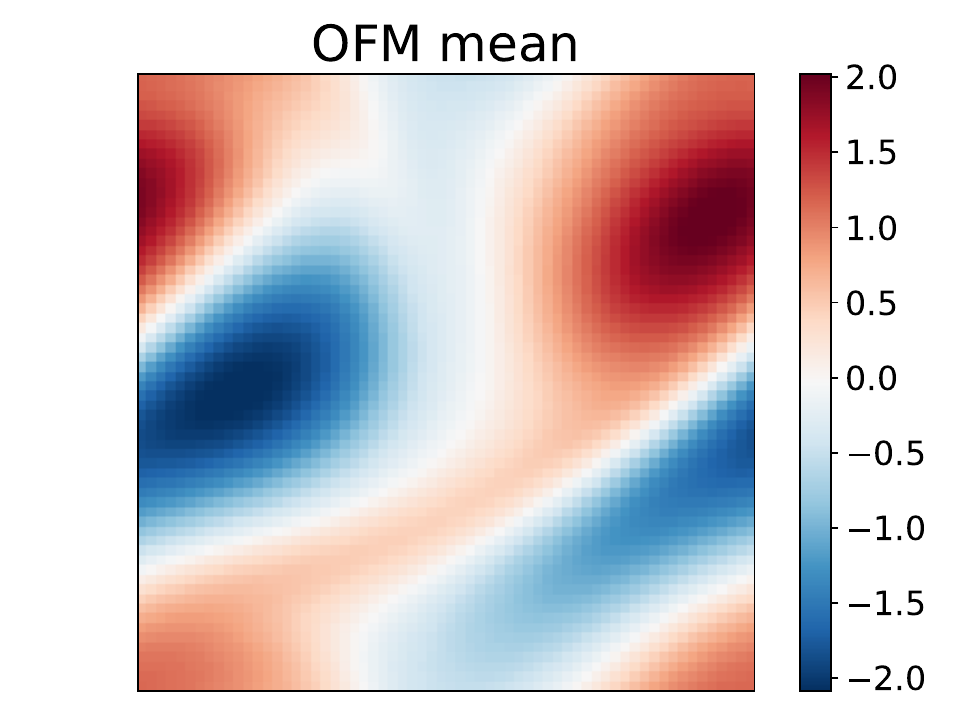}
        \vspace*{-.5cm}
        \caption{OFM mean}
    \end{subfigure}
    \begin{subfigure}{0.19\textwidth}
        \includegraphics[height=.75\textwidth,trim={1cm, 0.2cm, 0, 1.2cm},clip]{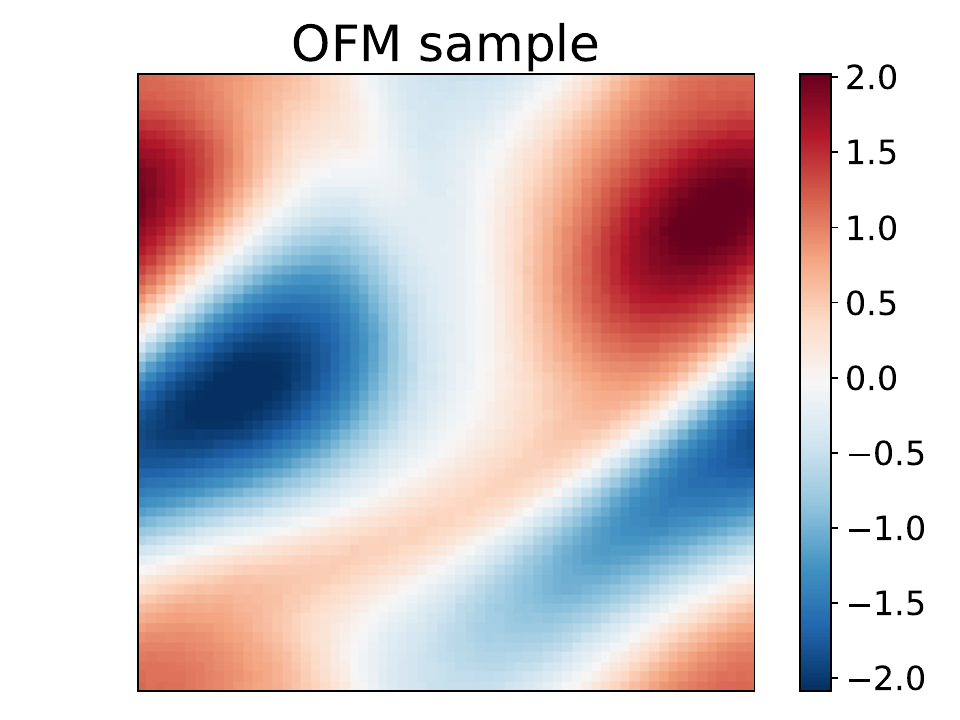}
        \vspace*{-.5cm}
        \caption{OFM sample}
    \end{subfigure}

    \begin{subfigure}{0.19\textwidth}
        \includegraphics[height=.75\textwidth,trim={1cm, 0.2cm, 0, 1.2cm},clip]{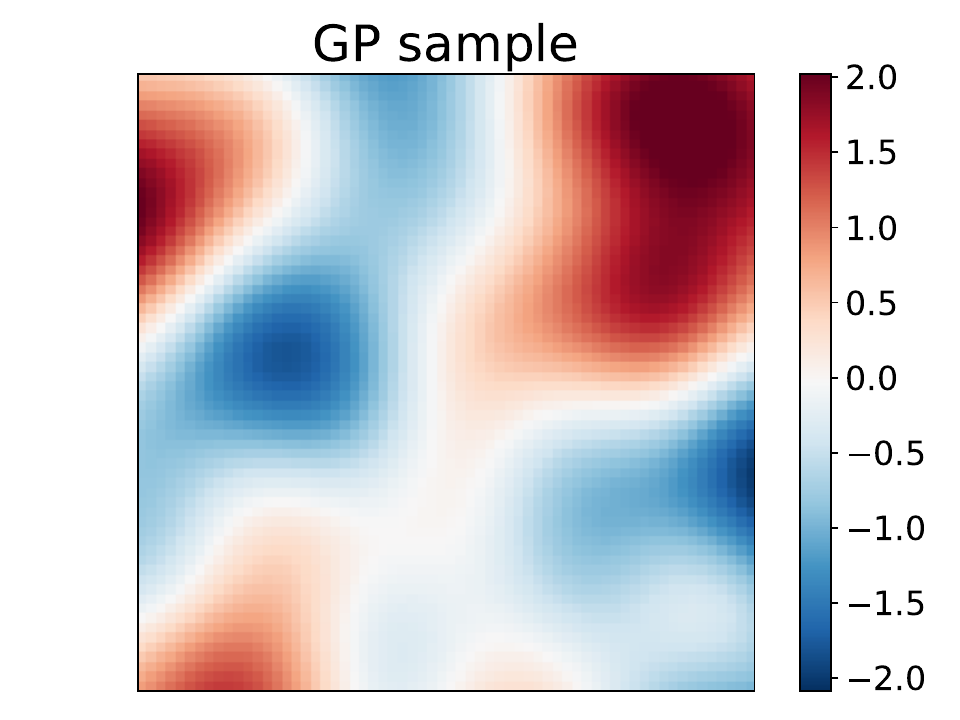}
        \vspace*{-.5cm}
        \caption{GP sample}
    \end{subfigure}
    \quad
    \begin{subfigure}{0.19\textwidth}
        \includegraphics[height=.75\textwidth,trim={1cm, 0.2cm, 0, 1.2cm},clip]{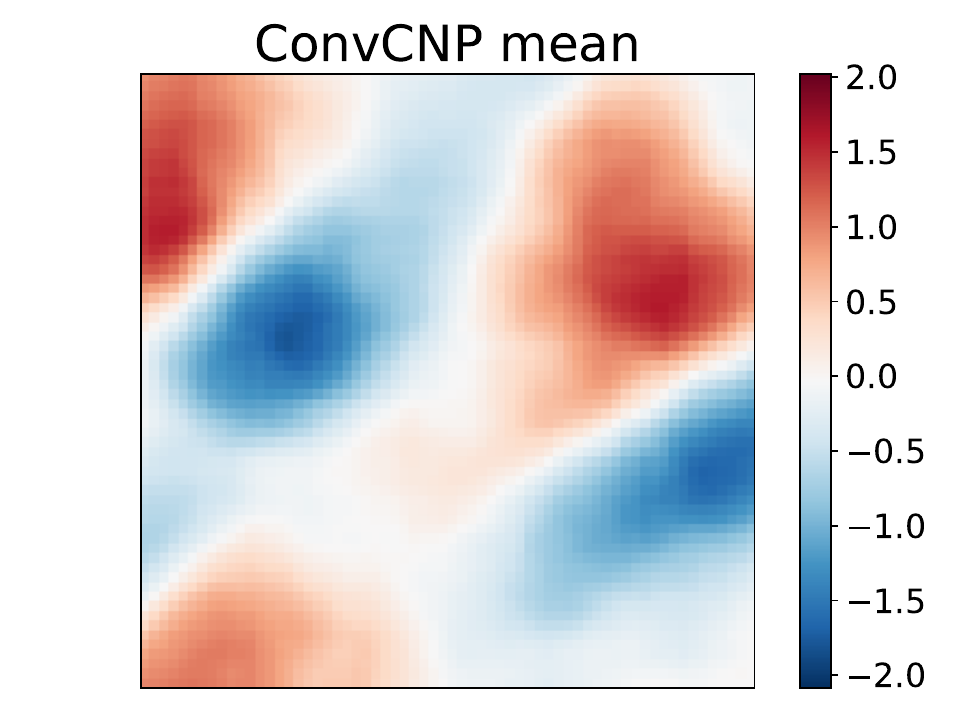}
        \vspace*{-.5cm}
        \caption{ConvCNP mean}
    \end{subfigure}
    \quad 
    \begin{subfigure}{0.19\textwidth}
        \includegraphics[height=.75\textwidth,trim={1cm, 0.2cm, 0, 1.2cm},clip]{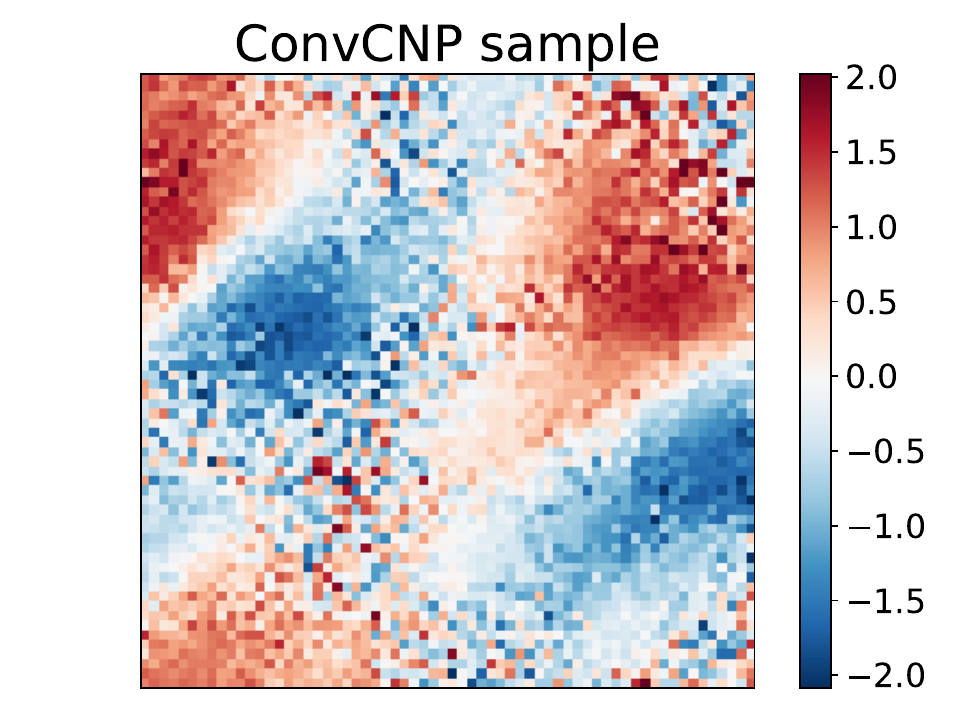}
        \vspace*{-.5cm}
        \caption{ConvCNP sample}
    \end{subfigure}    
    \caption{ Operator Flow Matching (\alg) regression on Navier-Stokes functional data with resolution $64\times64$. (a) 32 random observations (only $0.7\%$). (b) Ground truth sample. (c) Predicted mean from \alg. (d) One posterior sample from \alg. (e) One posterior sample from the best fitted \GP. (f) Predicted mean from ConvCNP. (g) One posterior sample from ConvCNP. }    
    \label{fig:NS_reg}
\end{figure*}

\textbf{Direct function samples.}
There is a body of work on generative models dedicated to learning distributions over functions, such that direct sampling on the function space is possible. For example, generative adversarial neural operators (\GANO) generalize generative adversarial nets on finite dimensional spaces to function spaces~\citep{rahman_generative_2022,shi_broadband_2024}, yielding a neural operator generative model that maps \GP to data functions~\citep{azizzadenesheli_neural_2024}. Other works in this area have followed the success of diffusion models~\citep{song_score-based_2021, ho_denoising_2020} in finite dimensional spaces, e.g., denoising diffusion operators generalize diffusion models to function spaces by using \GP as a means to add noise and use neural operators to learn the score operator on function-valued data~\citep{lim_score-based_2023, pidstrigach_infinite-dimensional_2023, kerrigan_diffusion_2023}. Moreover, the same principle has been deployed to generalize flow matching~\citep{lipman_flow_2023} to functional spaces~\citep{kerrigan_functional_2023}, an approach closely related to our work. 
However, these works on learning generative models on function spaces do not support \UFR the way GP-regression does because they (i) focus solely on generating function samples, (ii) do not clarify how to model a stochastic process on point value sequential generation, and (iii) do not provide point evaluation of probability density. In contrast, \alg enables exact density calculation and conditional posterior sampling.

\textbf{Stochastic processes.}
Earlier works on \SPL have focused on hand-tuned methods in the style of \GP-regression. In these cases, an expert tunes the \GP parameters given a set of experimental samples. More advanced methods rely on deep \GP{}s, in which a network of \GP{}s is stacked on top of each other. The parameters of deep \GP{}s are commonly optimized by minimizing the variational free energy, which serves as a bound on the negative log marginal likelihood.~\citep{damianou_deep_2013,liu_when_2020}. Deep \GP{}s have limitations in terms of learnability, expressivity, and computational complexity. Warped \GP{}s ~\citep{kou_sparse_2013} and transforming \GP~\citep{maronas_transforming_2021} methods use historical data to learn a pointwise transformation of \GP values and achieve on par performance compared to deep \GP type methods. The pointwise nature of such approaches limits their generality.

Another line of work proposes learning the conditional distribution of point values, inspired by variational inference and designed for sampling from function spaces; we refer to this line of models as conditional models\footnote{Neural process (NP) is a prominent example of conditional models}~\citep{garnelo_neural_2018}. This method trains a model to map any collection of points and their values to a vector, used as an input to a decoder that maps any collection of points to their values. The architectures used in these models (including the decoder) are not mathematically consistent as the number of points grows, limiting the approach to finite dimensions. In particular, Convolutional Conditional Neural Processes (ConvCNPs)~\citep{gordon_convolutional_2020} use convolutional architectures to capture local, translation-invariant structure, but they assume stationarity and struggle with long-range dependencies or highly non-uniform data. The diffusion based variants~\citep{dutordoir_neural_2023} also use uncorrelated Gaussian noise, and the results do not exist in function spaces~\citep{rahman_generative_2022, lim_score-based_2023}. Furthermore, methods based on conditional models are unable to provide density estimation for collections of points, as needed for \UFR and \SPL in general. 

Separately, \OpFlow introduced invertible neural operators that are trained to map any collection of points sampled from a \GP to a new collection of points in the data space \citep{shi_universal_2024}, using the maximum likelihood principle. This method is mathematically consistent as the resolution grows, captures the likelihood of any collection of points, and allows for \UFR using SGLD. However, similar to normalizing flow~\citep{papamakarios_normalizing_2021} methods in finite dimensional domains, the use of invertible deep learning models makes their training a challenge, particularly with regard to expressiveness, as also described in the original \OpFlow work. Finally, we strongly encourage readers to consult Appendix~\ref{App:QandA} and~\ref{Apx:analysis} for an in‑depth comparison of stochastic process learning and other generative frameworks.

\section{Operator Flow Matching}

Here, we introduce the problem setting and notations used for \alg in function space. We recommend that readers consult Appendix~\ref{app:background}–\ref{sec:UFR} for a foundational overview of \SPL, \UFR and related background. Subsequently, we present the framework of \alg, which extends marginal optimal transport flow matching~\citep{tong_improving_2024} to infinite-dimensional spaces. We further demonstrate the generalization of flow matching to stochastic processes as it is induced from \alg on function spaces. Finally, we illustrate how to evaluate exact and tractable likelihoods for any point evaluation of functions using \alg, making it applicable in the \UFR setting.

\label{sec:prelim}
For a real separable Hilbert space $(\mathcal{H}, \langle\cdot,\cdot\rangle, \lVert \cdot\rVert)$, equipped with the Borel $\sigma-$ algebra of measurable sets denoted by $\mathcal{B(H)}$, we introduce two measures on $\mathcal{B(H)}$, $\nu_0$ as the reference measure and $ \nu_1$ as the data measure. Consider a function $h_0$ sampled from $\nu_0$, such that $h_0 \sim \nu_0$. For a smooth time-varying infinite dimensional vector field $\G_t : \mathcal{H} \rightarrow \mathcal{H}$,  we define an ordinary differential equation (ODE)
\begin{equation}
    \frac{\partial \Phi_t(h_0)}{\partial t} = \G_t(\Phi_t(h_0))
    \label{eq:OFM_ODE}
\end{equation}
with initial condition $\Phi_0(h_0) = h_0$, where $h = h_t = \Phi_t(h_0)$ represents a function $h_0$ transported along a vector field from time $0$ to time $t$. The diffeomorphism $\Phi_t$ induces a pushforward measure $\mu_t := [\Phi_t]_{\sharp}(\mu_0)$, with $\mu_0 = \nu_0$, and we refer to $\mu_t$ as the path of probability measure. The goal is to construct a path of probability measure such that at $t=1$, $\mu_1 \approx \nu_1$. The dynamic relationship between the time varying measure $\mu_t$ and vector field $\G_t$ can be characterized by the continuity equation:
\begin{equation}
    \frac{\partial \mu_t}{\partial t} = -\nabla \cdot (\mu_t \G_t)
    \label{eq:OFM_continuity}
\end{equation}
In practice, we use Eq. \ref{eq:OFM_continuity} in its weak form~\citep{ambrosio_gradient_2008, kerrigan_functional_2023} to check whether a given vector field $\G_t$ generates the target $\mu_t$:
\begin{equation}
    \int_0^1 \int_{\mathcal{H}} \frac{\partial \varphi(h,t)}{\partial t} + \langle \G_t(h), \nabla_h \varphi(h,t) \rangle d\mu_t(h) dt = 0
    \label{eq:OFM_weak_PDE}
\end{equation}
Where $\varphi \in \text{Cyl}(\mathcal{H} \times [0,1])$, and $ \text{Cyl}(\mathcal{H} \times [0,1])$ is the space of smooth cylindrical test functions. Suppose that the time-varying vector field $\G_t$ and the induced $\mu_t$ satisfying Eq. \ref{eq:OFM_weak_PDE} are known. We parameterize $\G_t$ with a neural operator $\G_{\theta} :  [0,1] \times \mathcal{H}  \rightarrow \mathcal{H}$ and regress $\G_{\theta}$ to target $\G_t$ through flow matching objective.
\begin{equation}
    \mathcal{L}_{\text{FM}}^{\dagger} = \mathbb{E}_{t\sim\mathcal{U}[0,1], h\sim\mu_t} \lVert \G_{\theta}(t,h) - \G_t(h)\rVert^2 
    \label{eq:OFM_FM_loss}
\end{equation}
However, similar to its finite-dimensional counterpart, $\G_t$ is typically unknown. Moreover, there are infinitely many paths of probability measures that satisfy Eq. \ref{eq:OFM_weak_PDE} and ensure $\mu_1 \approx \nu_1$. Therefore, it is necessary to specify a path of probability measures to effectively guide the learning of $\G_{\theta}$.

By constructing the appropriate Gaussian and conditional probability measures and leveraging optimal coupling together with the dynamic Kantorovich formulation~\citep{chizat_unbalanced_2018}, we propose marginal (dynamic) optimal‑transport flow matching in function space; detailed theory development and proofs are provided in Appendix~\ref{apx:conditional} and~\ref{Apx:derivation_2}. In the next subsection, we show how to generalize flow matching to stochastic processes, which is induced by the function space derivation.



\subsection{Generalizing Flow Matching to Stochastic Processes}\label{Sec:GFM}
Stochastic processes are inherently infinite-dimensional and define distributions over any collection of points (\citet{bremaud_probability_2020}, Chapter 5.1). We generalize the above marginal optimal transport flow matching on function spaces to stochastic processes by defining the transport map on any collection of points. We then show that, as the collection of points covers the space in the limit, this generalization recovers infinite-dimensional flow matching implemented with neural operators.

For any $n$ and points $\lbrace x_1,x_2,\ldots,x_n\rbrace$, consider an ODE system in which a vector of random variables $u_0 \in \mathbb{R}^n$ is gradually transformed into $u_1\in \mathbb{R}^n$, for which, the $i$th entry is equal to $u(x_i)$, via a smooth, time-varying vector field, denoted by $\G_t$ with abuse of notation.
\begin{equation}
    u_t := \Phi_t(u_0) = u_0 + \int_0^t \G_s(u_s) ds
    \label{eq:first}
\end{equation}
Here, the neural operator is applied to a collection of point evaluations. Given the set of points and the density of $p_0:=p_0\left(\lbrace u_0(x_1),u_0(x_2),\ldots,u_0(x_n)\rbrace \right)=\N\left(\textbf{0},K\left(\lbrace x_1,x_2,\ldots,x_n\rbrace \right)\right)$, where $ K$ is a $n \times n$ covariance matrix with entries described by kernel function
$k(x_i
, x_j)$ and $u_0 \sim p_0$, the time-varying density $p_t$ induced by the diffeomorphism $\Phi_t$ or $\G_t$ can be computed, extending the transport equation Eq.~\ref{eq:OFM_continuity}
to collections of points,
\begin{equation}
    \frac{\partial p_t(u_t)}{\partial t} = -(\nabla \cdot (\G_t p_t))(u_t)
    \label{eq:FM_transport}
\end{equation}
Eq. \ref{eq:FM_transport} shows that constructing $p_t$ is equivalent to constructing $\G_t$ for finite entries for which the analysis carries to finite collections of random variables. In the following, we refer to $p_t$ as the marginal probability path induced by $\G_t$ for the given collection of points. From Eq. \ref{eq:FM_transport}, the log density can be computed through integration, 
\begin{equation}
    \log p_t(u_t) = \log p_0(u_0) - \int_0^t (\nabla \cdot \G_s)(u_s) ds
    \label{eq:FM_logp}
\end{equation}
In this formulation, we are seeking a specific vector field that transports density $q_0$ to target density $q_1$ for any $n$ and any collection of points $\lbrace x_1,x_2,\ldots,x_n\rbrace$ with boundary conditions $p_0 = q_0, p_1 = q_1$. Extending optimal transport flow matching to stochastic processes, we parameterize the vector field $\G_t$ with a neural operator $\G_{\theta}$, which is optimized through the flow matching objective for \SPL,
\begin{equation}
    \mathcal{L}_{\text{FM}}:= \sup_n\sup_{\lbrace x_1,\ldots,x_n\rbrace}\mathbb{E}_{t\sim\mathcal{U}(0,1), u_t\sim p_t}\lVert \G_{\theta}(t,u_t) - \G_t(u_t)\rVert^2
    \label{eq:FM_objective}
\end{equation}
Note that $p_t$ and $u_t$ depend on the point collocations. In the above equation, the suprema are intractable and we replace them with an expectation as a soft approximation (see Appendix~\ref{App:QandA}. Q5 for a detailed discussion of the approximation). Moreover, the true $\G_t$ is usually unknown and to address it, we derive a probability path conditioned on latent variable $z$ of the same alphabet size as the collection. Consequently, the marginal probability path $p_t(u_t)$ is a mixture of conditional probability paths $p_t(u_t|z)$, 
\begin{equation}
    p_t(u_t) = \int p_t(u_t | z) q(z) dz
    \label{eq:int_CFM}
\end{equation}
\begin{equation}
    \G_t(u_t) = \mathbb{E}_{q(z)}[\frac{\G_t(u_t|z)p_t(u_t|z)}{p_t(u_t)}].
    \label{eq:FM_CFM}
\end{equation}
Given Eq. \ref{eq:FM_CFM}, the conditional flow matching (CFM) objective is defined as,
\begin{equation}
    \mathcal{L}_{\text{CFM}}:= 
    \mathbb{E}_n\mathbb{E}_{x_1,\ldots,x_n}\mathbb{E}_{t,q(z),p_t(u_t|z)}\rVert \G_{\theta}(t,u_t) - \G_t(u_t|z)\rVert^2.
    \label{eq:CFM_objective}
\end{equation}
Eqs.~\ref{eq:CFM_objective} and \ref{eq:FM_CFM} have an identical gradient for $\theta$, i.e. $\nabla_{\theta}\mathcal{L}_{\text{FM}}(\theta) = \nabla_{\theta}\mathcal{L}_{\text{CFM}}(\theta)
$. Inspired by the finite dimensional developments~\citep{tong_improving_2024}, the variable $z$ is chosen as a couple $(u_0, u_1)$ from the coupling $\pi(u_0, u_1) = q(z)$, which is achieved by minimizing the dynamic 2-Wasserstein distance, 
\begin{equation}
    W_{\text{dyn}}(q_0, q_1)_2^2 = \inf_{p_t, \G_t}\int_{\mathbb{R}^n}\int_0^1 p_t(u_t) \lVert \G_t(u_t) \rVert^2 du_tdt 
    \label{eq:points_dynamic_OT}
\end{equation}
Under mild conditions on $\mathbb{R}^n$, this is equivalent to the static 2-Wasserstein distance,
\begin{equation}
    W_{\text{sta}}(q_0, q_1)^2_2 = \inf_{\pi \in \Pi} \int_{\mathbb{R}^n \times \mathbb{R}^n} \lVert u_1 - u_0 \rVert^2 d\pi(u_0, u_1).
    \label{eq:points_static_OT}
\end{equation}
Considering the class of Gaussian conditional probability paths $p_t(u_t|z) = \mathcal{N}(u_t|m_t(z), \sigma_t(z)^2 K\left(\lbrace x_1,x_2,\ldots,x_n\rbrace \right))$, with conditional flow $\Phi_t(u_0|z) = \sigma_t u_0 + m_t$. Specially, we choose $m_t = t u_1 + (1-t) u_0$ and $\sigma_t = \sigma $, where $\sigma > 0$ is a small constant. Then we have the following closed-form expression for the corresponding vector field (full derivation provided in Appendix \ref{Apx:derivation})
\begin{equation}
    \G_t(u_t|z) = u_1 - u_0
    \label{eq:generalize_cond_u}
\end{equation}


With the aforementioned developments, for any collection of points, we transport a Gaussian distribution to a target distribution. The Gaussian distribution is drawn from a \GP, with its covariance matrix $K({x_1,\cdots,x_n})$ determined by the kernel function $k(x_i, x_j)$ of the \GP. According to Kolmogorov extension theorem (KET)~\citep{kolmogorov_foundations_2018}, there exists a valid stochastic process whose finite-dimensional marginal is the pushforward distribution under $\G_{\theta}$. This demonstrates that the generalization of flow matching to infinite-dimensional spaces with neural operators naturally induces the generalization of flow matching to stochastic processes. In the scenario where the limit of points covers the space, these two become equivalent. For a detailed explanation and proof, please refer to Appendix~\ref{sec:spl} and \ref{Apx:KET}.


Our framework extends seamlessly to alternative probability paths, such as those in \textit{stochastic interpolants and rectified flow}~\citep{albergo_building_2023, liu_flow_2022}, because the generalization to stochastic processes is decoupled from any specific path. However, we focus on the OT path in this work for two reasons: (1) ablation and scaling studies show that it accelerates inference and enables more accurate prior learning compared to independent coupling (Table~\ref{tab:minibatch_scaling}, and Appendix~\ref{app:ablation}); (2) theoretically, the OT path (straight line) simplifies the Jacobian evaluation required for likelihood estimation (discussed in the next subsection), yielding greater stability and speed than arbitrary paths.

\subsection{Likelihood Estimation and Bayesian Universal Functional Regression}

We parameterize $\G_{\theta}$ with FNO~\citep{li_fourier_2021} to ensure our model is resolution agnostic, and assume $\G_{\theta}$ learns a map from $\nu_0$ to $\nu_1$, which serves as the prior. In practice, we deal with discretized evaluations of functions that may have different sampling rate and resolution. For instance, consider a function $u$ sampled from $\nu_1$, observed on a collection of points $u_1 := \{ u(x_1), u(x_2), ... , u(x_m) \}$; thus we have a density function $\mathbb{P}(u_1)$ defined on collection of points $\{ x_1, x_2, ... , x_m\}$, where $\mathbb{P}(u_1)$ is derived from measure $\nu_1$. This is similar to how a multivariate Gaussian distribution can be derived from a Gaussian measure characterized by a Gaussian process. Therefore, we can rewrite Eq. \ref{eq:FM_logp} as:
\begin{equation}
    \log \mathbb{P}(u_1) = \log \mathbb{P}(u_0) - \int_0^1 (\nabla \cdot \G_{\theta})(u_t) dt
    \label{eq:OFM_logp}
\end{equation}
where $u_0$ is drawn from the reference Gaussian measure $\nu_0$, which is also defined on the collection of points $\{ x_1, x_2, ... , x_m\}$. Thus, $\mathbb{P}(u_0)$ is a multivariate Gaussian with a tractable density function. Furthermore, with the probability density function $\mathbb{P}(u_1)$, we can evaluate the precise likelihood of any $u_1$ from $\mathbb{P}(u_1)$ via Eq. \ref{eq:OFM_logp}. However, following a similar argument to~\citet{grathwohl_ffjord_2018}, the computation of $\nabla\cdot \G_{\theta}(u)$ incurs a cost of $\mathcal{O}(m^2)$ where $m$ is the cardinality of $\{ x_1, x_2, ... , x_m\}$. This quadratic time complexity renders the likelihood calculation prohibitively expensive. To address this issue, we adopt the strategy proposed in~\citet{grathwohl_ffjord_2018}, using the unbiased Skilling-Hutchinson trace estimator~\citep{hutchinson_stochastic_1989, skilling_eigenvalues_1989} to approximate the divergence term. This technique reduces the computation cost to $\mathcal{O}(m)$, which is the same as the cost of inference, thereby streamlining the evaluation process. The estimator  is implemented as follows:
\begin{equation}
    \nabla\cdot \G_{\theta}(u_t) = \mathbb{E}_{p(\varepsilon)}[\varepsilon^T \frac{\partial \G_{\theta}(u,t)}{\partial u} \varepsilon ]
    \label{eq:trace_est}
\end{equation}

In the unbiased trace estimator, the random variable $\varepsilon$ is characterized by $\mathbb{E}(\varepsilon) = 0$ and $\text{Cov}(\varepsilon) = I $. This estimator benefits from the optimal transport nature of the map which gives rise to a direct line. 
The gradient computation in Eq.~\ref{eq:trace_est} can be efficiently handled with reverse-mode automatic differentiation, allowing for precise estimation with arbitrary error by averaging over a sufficient number of runs, which can benefit from parallel computing of GPUs.

With the efficient tool established for estimating the likelihood of any discretized function samples, we now turn our attention to \UFR, i.e., Bayesian functional regression with a learned prior. Consider a collection of pointwise observations of the underlying unknown function drawn from $\nu_1$, corrupted with Gaussian noise, denoted as $ \lbrace \wh u(x_1), \wh u(x_2), \ldots, \wh u(x_n) \rbrace$ or $\lbrace \wh u(x_i) \rbrace_{i=1}^n$. We specifically focus on Gaussian white noise characterized by $\epsilon \sim \mathcal{N}(0, \sigma^2)$, such that $\wh u(x_i) = u(x_i) + \epsilon_i$ for $i \in \{1,\cdots,n \}$ (depending on the nature of the problem, the noise may also be considered as a correlated \GP noise). In \UFR setting, we are interested in the posterior distribution over new $m \geq n$ points that include the $n$ observation points.
\begin{proposition}\label{prop:posterior} Given noisy observations $\lbrace \wh u(x_i) \rbrace_{i=1}^n $, the posterior distribution is 
\begin{equation}
    \log \Prob \left( \lbrace u(x_i) \rbrace_{i=1}^m \Big| \lbrace \wh u(x_i) \rbrace_{i=1}^n \right) =  -\frac{\sum_{i=1}^n \lVert \wh u(x_i) - u(x_i) \rVert^2}{2\sigma^2} + \log \Prob \left( \lbrace u(x_i) \rbrace_{i=1}^m \right) + C  
    \label{eq:closed_posterior}
\end{equation}

Where the constant $C = - \frac{n}{2}\log(2\pi\sigma^2) -\log \Prob \left( \lbrace \wh u(x_i) \rbrace_{i=1}^n \right)$.
\end{proposition}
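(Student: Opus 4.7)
The plan is to apply Bayes' theorem directly, relying only on the Gaussian white-noise observation model and the learned prior density over $\{u(x_i)\}_{i=1}^m$. No property of $\G_\theta$ itself is needed beyond the fact (established earlier) that the prior $\Prob(\{u(x_i)\}_{i=1}^m)$ is computable via Eq.~\ref{eq:OFM_logp}. So the proposition is essentially a bookkeeping statement about the Bayesian posterior once a tractable prior is in hand.

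First, I would write Bayes' rule for the posterior:
\begin{equation*}
\Prob\!\left(\{u(x_i)\}_{i=1}^m \,\middle|\, \{\wh u(x_i)\}_{i=1}^n\right) = \frac{\Prob\!\left(\{\wh u(x_i)\}_{i=1}^n \,\middle|\, \{u(x_i)\}_{i=1}^m\right)\,\Prob\!\left(\{u(x_i)\}_{i=1}^m\right)}{\Prob\!\left(\{\wh u(x_i)\}_{i=1}^n\right)}.
\end{equation*}
Next I would invoke the assumed noise model $\wh u(x_i) = u(x_i) + \epsilon_i$ with $\epsilon_i \overset{\text{iid}}{\sim} \mathcal{N}(0,\sigma^2)$. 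Because $\{x_1,\ldots,x_n\} \subseteq \{x_1,\ldots,x_m\}$ and the noise at each observation site depends only on the corresponding $u(x_i)$, the conditional likelihood collapses to a product of univariate Gaussian densities, giving
\begin{equation*}
\log \Prob\!\left(\{\wh u(x_i)\}_{i=1}^n \,\middle|\, \{u(x_i)\}_{i=1}^m\right) = -\tfrac{n}{2}\log(2\pi\sigma^2) - \frac{1}{2\sigma^2}\sum_{i=1}^n \lVert \wh u(x_i) - u(x_i)\rVert^2.
\end{equation*}

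Finally I would take the logarithm of the Bayes expression, substitute the Gaussian log-likelihood, and lump everything that does not depend on $\{u(x_i)\}_{i=1}^m$ into the constant $C$. The terms $-\tfrac{n}{2}\log(2\pi\sigma^2)$ from the likelihood normalizer and $-\log \Prob(\{\wh u(x_i)\}_{i=1}^n)$ from the evidence denominator combine exactly into the stated $C$, while the prior term $\log \Prob(\{u(x_i)\}_{i=1}^m)$ and the data-fit quadratic survive, reproducing Eq.~\ref{eq:closed_posterior}.

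There is no real obstacle here; the only subtlety worth flagging explicitly is the conditional-independence step, namely that $\wh u(x_i)$ is independent of $u(x_j)$ for $j\ne i$ given $u(x_i)$, which is what removes $u(x_{n+1}),\ldots,u(x_m)$ from the likelihood factor and allows the $m$-dimensional posterior to be written against the single prior term $\log\Prob(\{u(x_i)\}_{i=1}^m)$ evaluable through \alg.
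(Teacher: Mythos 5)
Your proposal is correct and follows essentially the same route as the paper's proof: Bayes' rule, the Gaussian noise model yielding the quadratic data-fit term plus the $-\tfrac{n}{2}\log(2\pi\sigma^2)$ normalizer (which the paper phrases as translation invariance of the multivariate Gaussian of the noise vector), and the observation that the likelihood depends only on $\lbrace u(x_i)\rbrace_{i=1}^n$ so the remaining $m-n$ values drop out of that factor. Your explicit flagging of the conditional-independence step matches the paper's remark that $\lbrace \wh u(x_i)\rbrace_{i=1}^n$ does not depend on $\lbrace u(x_i)\rbrace_{i=n+1}^m$.
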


\begin{proof}
 This is derived from Bayes' theorem, along with the translation invariance property of the Gaussian distribution, see the full proof in Appendix~\ref{app:proof}    
\end{proof}

Given this form of the posterior distribution, we adopt SGLD~\citep{welling_bayesian_2011} to efficiently sample from it, and then derive statistical features of interest, e.g. mean, maximum a posteriori, and posterior uncertainty, i.e., variance, from the posterior samples. More specifically, we follow the posterior sampling strategy developed by~\citet{shi_universal_2024}, which, given an invertible framework, suggests SGLD sampling within the input \GP space where the Gaussian measure $\nu_0$ is defined and Langevin dynamics is native to, and then mapping to the data function space (where data measure $\nu_1$ is defined). We also present a scaling study in Appendix~\ref{app:ablation} analyzing the variance introduced by the Hutchinson trace estimator, demonstrating its robustness and effectiveness when paired with SGLD sampling. Finally, Eq~\ref{eq:closed_posterior} offers a unified view of prior and posterior sampling with flow matching models by showing that when no observations are present, the posterior collapses to the prior (up to a constant), making the posterior sampling process identical to that of the prior.

Finally, we provide a plain-language summary of the framework to help the reader better understand the paper (with a detailed discussion available in Appendix~\ref{Apx:analysis}),
\begin{itemize}
    \item \alg\ is an \emph{expressive, flow-based generative model that learns a prior over functions (stochastic process)}: a neural operator parameterizes a continuous probability flow that transports samples from a simple reference Gaussian process to data-like functions, yielding an explicit prior with a tractable density.
    \item It excels at functional regression by treating functions as \emph{first-class objects} rather than mere pointwise values (unlike NPs), yielding predictions that are consistent across resolutions and arbitrary query sets.
    \item The learned flow is invertible, enabling change-of-variables for likelihoods and principled Bayesian regression, yielding calibrated uncertainty from few observations.
\end{itemize}

\section{Experiments}

In this section, we demonstrate the superior regression performance compared to several baselines across a variety of function datasets, including both Gaussian and highly non-Gaussian Processes. As baselines, we employ standard \GP Regression~\citep{williams_gaussian_2006}, Deep \GP{}s~\citep{salimbeni_doubly_2017, jankowiak_deep_2020}, Conditional models~\citep{ garnelo_neural_2018, kim_attentive_2019, gordon_convolutional_2020}, and \OpFlow~\citep{shi_universal_2024}.

For our function datasets, we analyze: (1) Gaussian and non-Gaussian with known posterior, including 1D \GP{}s, 2D \GP{}s, and 1D Truncated \GP{}s, (TGP). (2) Highly non-\GP{}s, datasets with unknown posterior, such as those derived from Navier-Stokes equations~\citep{li_fourier_2021}, black hole dataset from expensive Monte Carlo simulation, and 2D Signed Distance Functions extracted from MNIST digits (MNIST-SDF) \citep{sitzmann_metasdf_2020}. During regression, we assume that the prior \(\G_{\theta}\) is always successfully trained and remains frozen. Details about the learning process for priors and experimental setup for regression are provided in the Appendix~\ref{ap:prior_learn}, ~\ref{apx:exp_set}. 

\begin{figure*}[ht]
\vspace*{-0.3cm}
\hspace*{-0.6cm}
    \centering
    \begin{subfigure}{0.28\textwidth}
        \includegraphics[height=.63\textwidth,trim={1.0cm,  0.8cm, 0, 1.0cm},clip]{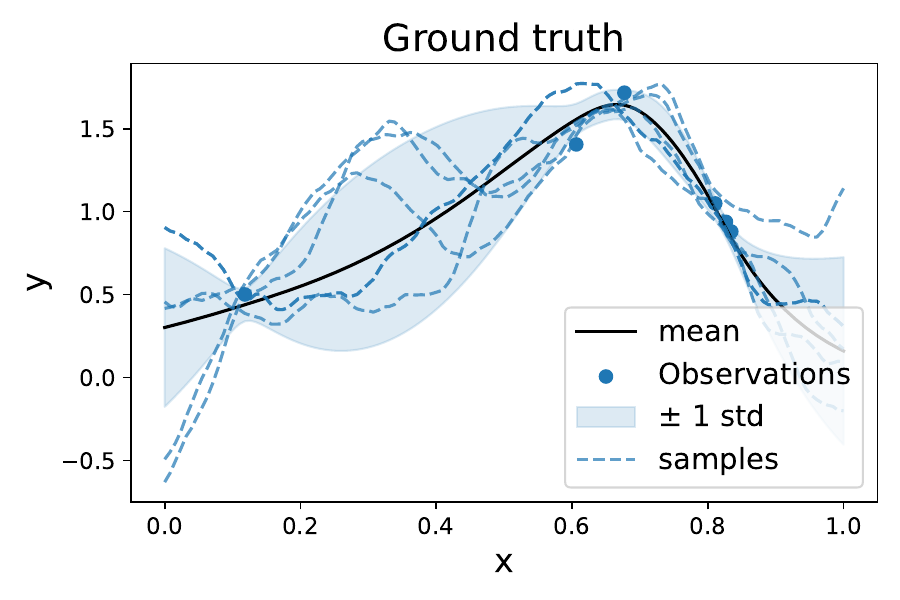}
        \vspace*{-.5cm}
        \caption{Ground truth}
    \end{subfigure}
    \quad
    \begin{subfigure}{0.28\textwidth}
        \includegraphics[height=.63\textwidth,trim={1.0cm, 0.8cm, 0, 1.0cm},clip]{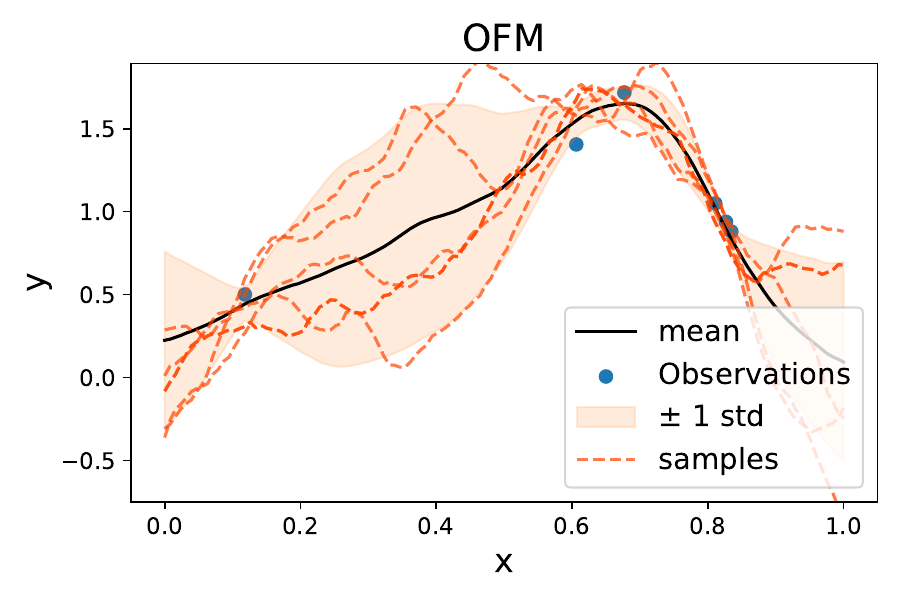}
        \vspace*{-.5cm}
        \caption{\alg}
    \end{subfigure}
    \quad
    \begin{subfigure}{0.28\textwidth}
        \includegraphics[height=.63\textwidth,trim={1.0cm, 0.8cm, 0, 1.0cm},clip]{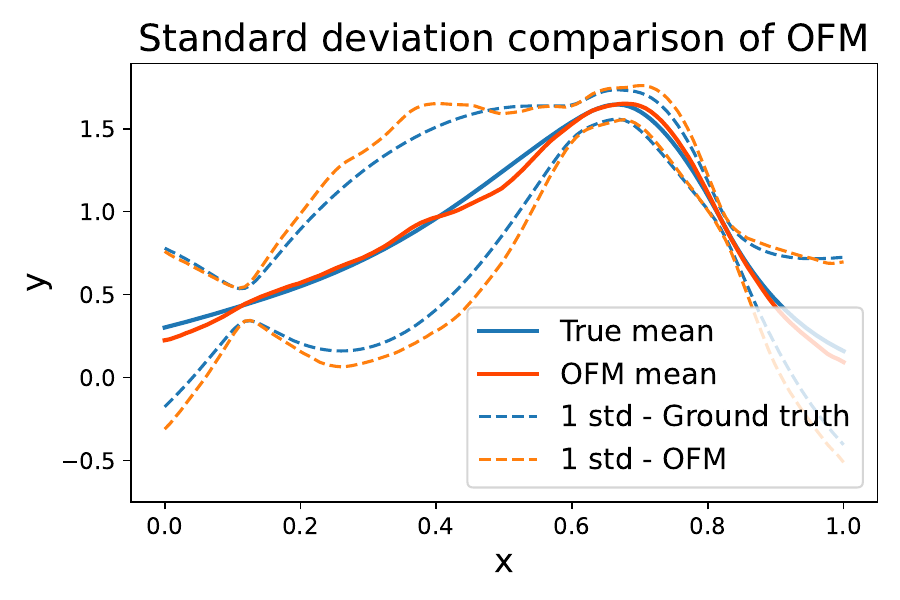}
        \vspace*{-.5cm}
        \caption{Uncertainty comparison}
    \end{subfigure}
    \caption{ \alg regression on \GP data. (a) Ground truth \GP regression with observed data and predicted samples. (b) OFM regression with observed data and predicted samples. (c) Standard deviation comparison between true \GP and OFM predictions.}
    \label{fig:OFM_GP_reg}
\end{figure*}
\begin{figure*}[h]
\vspace*{-.35cm}
    \centering
    \begin{subfigure}{0.22\textwidth}
        \includegraphics[height=.72\textwidth,trim={1.0cm, 0.5cm, 0, 1.1cm},clip]{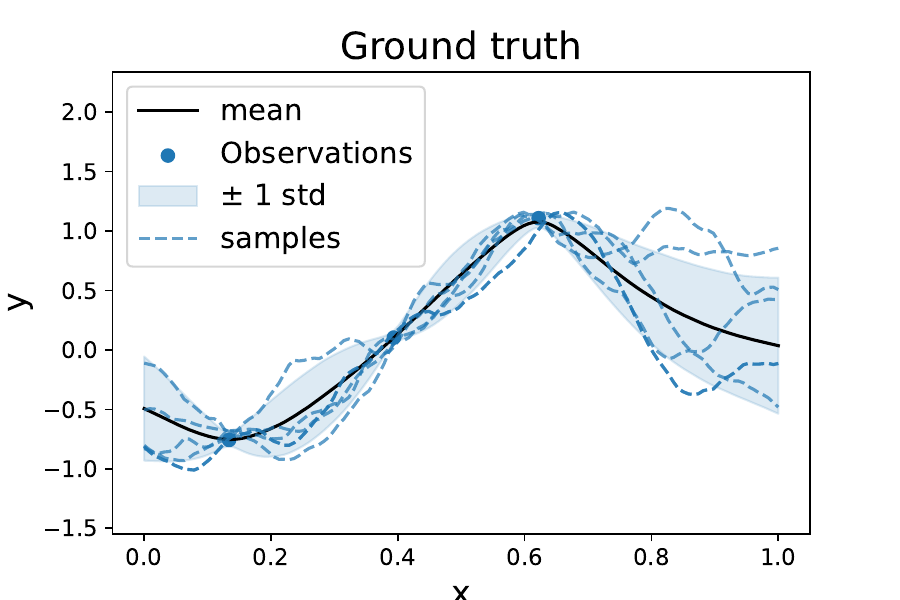}
        \vspace*{-.5cm}
        \caption{Ground truth}
    \end{subfigure}
    \quad
    \begin{subfigure}{0.22\textwidth}
        \includegraphics[height=.72\textwidth,trim={1.0cm, 0.5cm, 0, 1.1cm},clip]{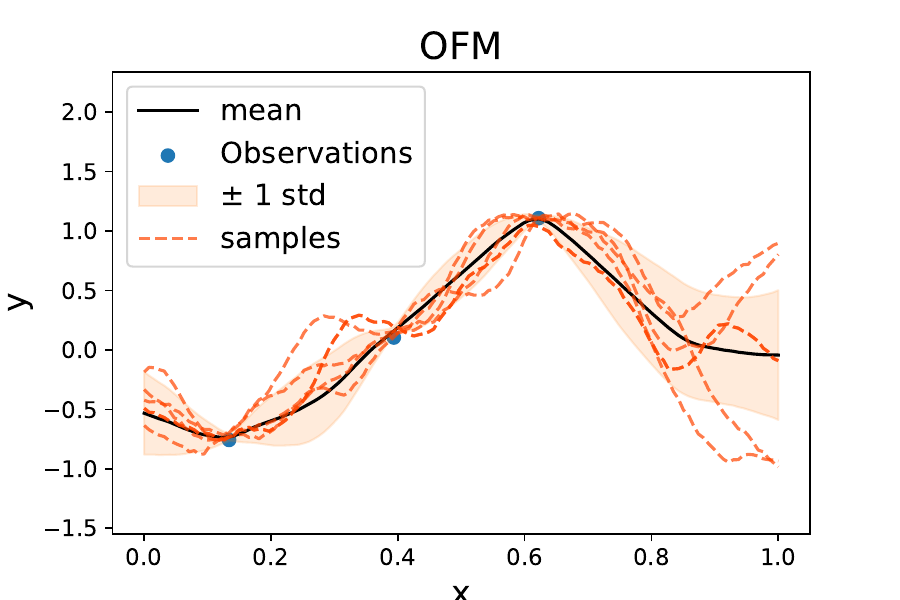}
        \vspace*{-.5cm}
        \caption{OFM}
    \end{subfigure}
    \quad
    \begin{subfigure}{0.22\textwidth}
        \includegraphics[height=.72\textwidth,trim={1.0cm, 0.5cm, 0, 1.1cm},clip]{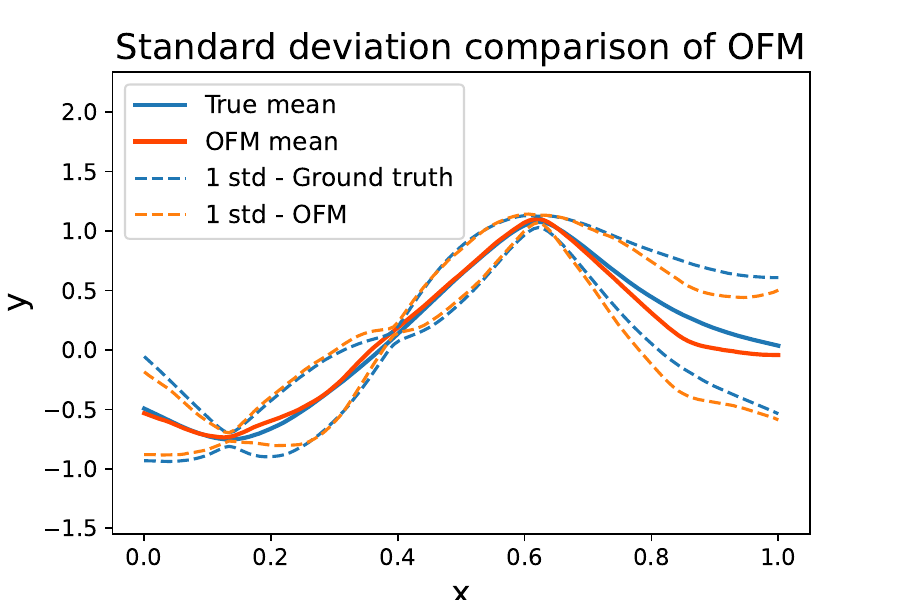}
        \vspace*{-.5cm}
        \caption{Uncertainty of OFM}
    \end{subfigure}
    
    \begin{subfigure}{0.22\textwidth}
        \includegraphics[height=.72\textwidth,trim={1.0cm, 0.5cm, 0, 1.1cm},clip]{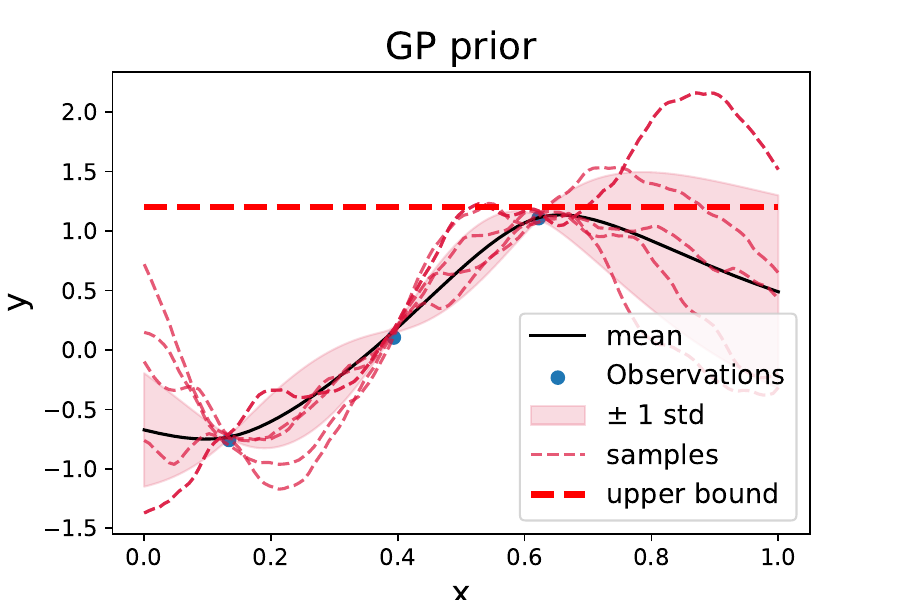}
        \vspace*{-.5cm}
        \caption{GP prior}
    \end{subfigure}
    \quad
    \begin{subfigure}{0.22\textwidth}
        \includegraphics[height=.72\textwidth,trim={1.0cm, 0.5cm, 0, 1.1cm},clip]{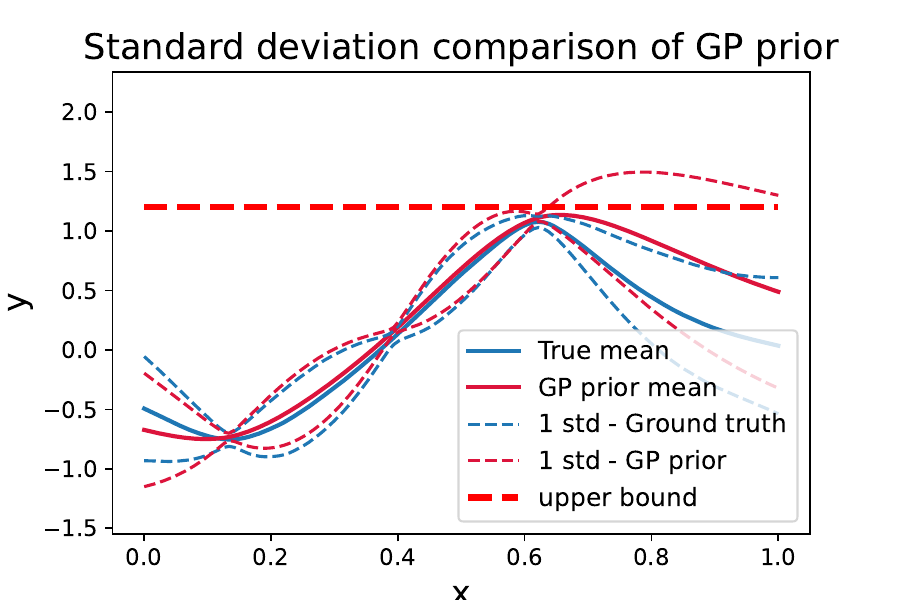}
        \vspace*{-.5cm}
        \caption{Uncertainty of GP }
    \end{subfigure}
    \vspace*{-.1cm}
    \caption{ \alg regression on TGP data.  }
    \label{fig:TGP_OFM_reg}
\end{figure*}

\textbf{1D \GP data.}
This experiment replicates the results of classical \GP regression, wherein the posterior distributions are precisely known in a closed form. The process involves generating a single new realization from the data measure $\nu_1$. We then select observations at $n=6$ randomly chosen positions, incorporating a predefined noise level. The posterior is inferred across $m=128$ positions, which includes estimating noise-free values at the observation points. We evaluate our results with two commonly used quantities in the \GP literature  (1) Standardized Mean Squared Error (SMSE) that normalizes the mean squared error by the variance of the ground truth; and (2) Mean Standardized Log Loss (MSLL), originally introduced by \citet{williams_gaussian_2006}, defined as:
\begin{equation}
    - \log p(\lbrace  u(x_i) \rbrace_{i=1}^m|\{\wh u(x_i)\}^n_{i=1}) = \frac{1}{2}\log(2\pi \sigma^2)
    + \frac{(\lbrace  u(x_i) \rbrace_{i=1}^m - \lbrace  \bar u(x_i) \rbrace_{i=1}^m)^2}{2 \sigma^2}
\end{equation}
where $\{\wh u(x_i)\}^n_{i=1}$ represents observations, $\lbrace x_i \rbrace_{i=1}^m, \lbrace  u(x_i) \rbrace_{i=1}^m,$ indicate the new positions queried, and the test data (true posterior samples). Meanwhile, $\lbrace  \bar u(x_i) \rbrace_{i=1}^m$ and  $\sigma^2$ are predicted mean and variances from the model. We average out SMSE and MSLL over a test dataset containing $1000$ true \GP posterior samples for all models. The performance of each model is detailed in Table~\ref{tab:OFM_table}. From Fig. \ref{fig:OFM_GP_reg}, the regression with \alg matches the analytical solution well and provides realistic posterior samples. We further include GP regression tasks using more complex kernels (Gibbs and Rational Quadratic kernel), as shown in Appendix~\ref{app:add_GP}, \alg consistently outperforms all comparative methods across all metrics.
\begin{figure*}[ht]

\hspace*{-.5cm}  
    \centering
    \begin{subfigure}{0.19\textwidth}
        \includegraphics[height=0.75\textwidth,trim={0.5cm, 0.2cm, 0.4cm, 1.15cm},clip]{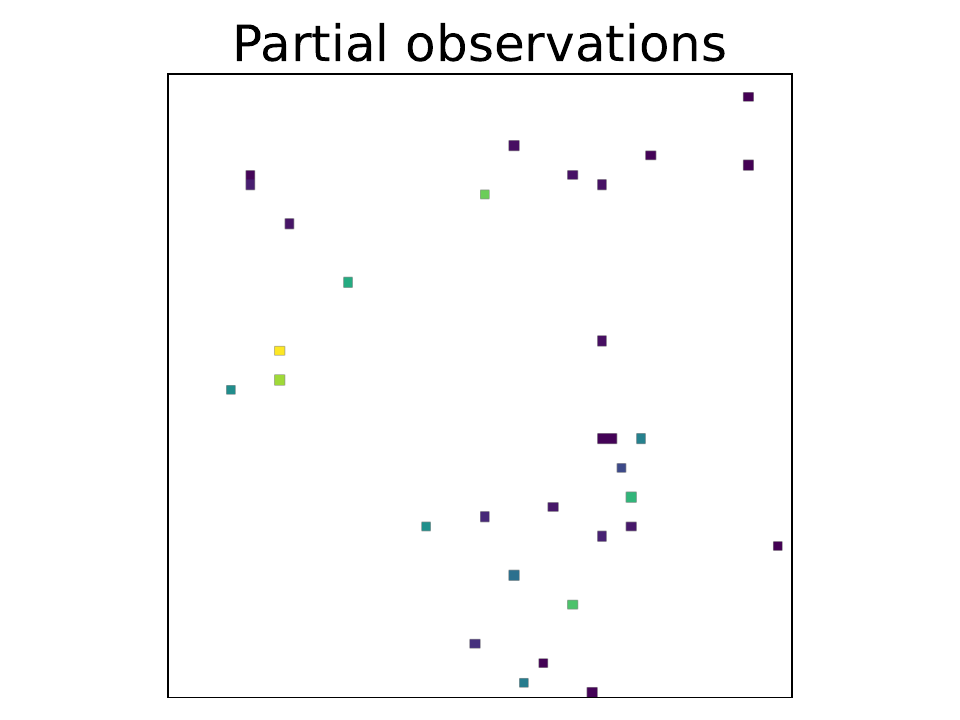}  
        \vspace*{-0.5cm}
        \caption{Observations}
    \end{subfigure}
    \begin{subfigure}{0.19\textwidth}
        \centering
        \includegraphics[height=.75\textwidth,trim={2cm, 0.2cm, 0.4cm, 1.15cm},clip]{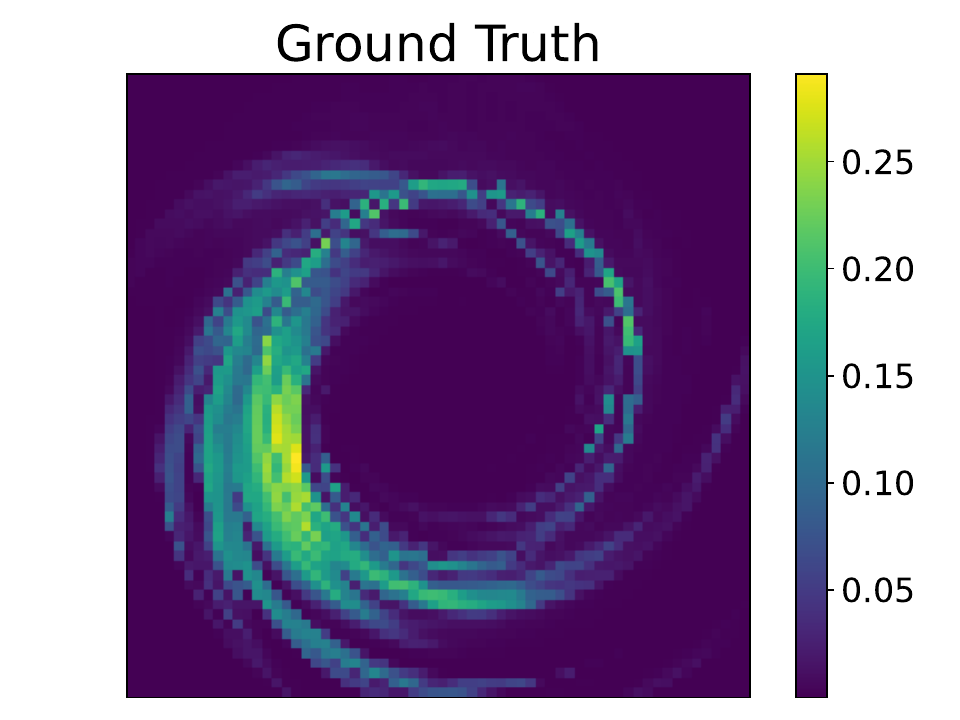}
        \vspace*{-.1cm}
        \caption{Ground truth}
    \end{subfigure}
    \begin{subfigure}{0.19\textwidth}
        \includegraphics[height=.75\textwidth,trim={2cm, 0.2cm, 0.4cm, 1.15cm},clip]{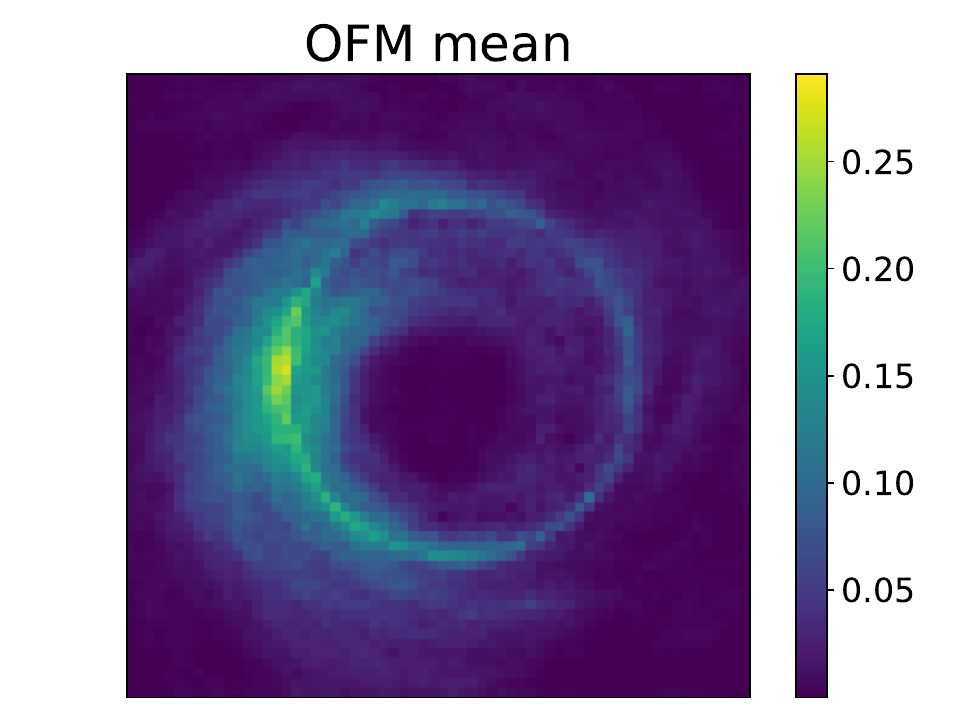}
        \vspace*{-.1cm}
        \caption{OFM mean}
    \end{subfigure}
    \begin{subfigure}{0.19\textwidth}
        \includegraphics[height=.75\textwidth,trim={2cm, 0.2cm, 0.4cm, 1.15cm},clip]{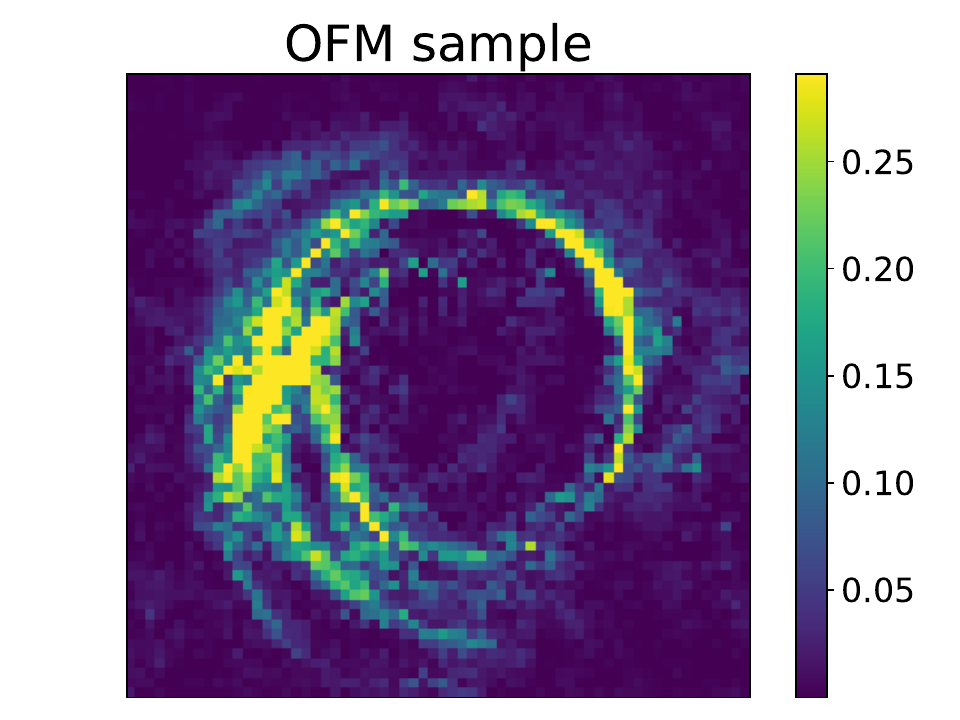}
        \vspace*{-.1cm}
        \caption{OFM sample}
    \end{subfigure}

    \begin{subfigure}
    {0.19\textwidth}
        \includegraphics[height=.75\textwidth,trim={2cm, 0.2cm, 0.4cm, 1.15cm},clip]{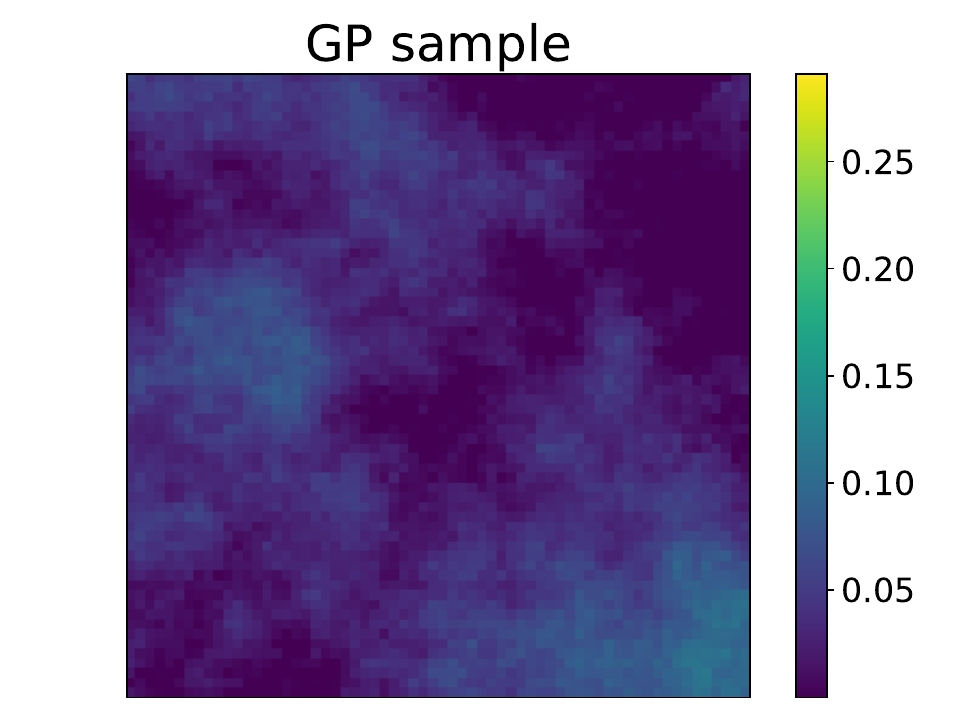}
        \vspace*{-.1cm}
        \caption{GP sample}
    \end{subfigure}
    \begin{subfigure}{0.19\textwidth}
        \includegraphics[height=.75\textwidth,trim={2cm, 0.2cm, 0.4cm, 1.15cm},clip]{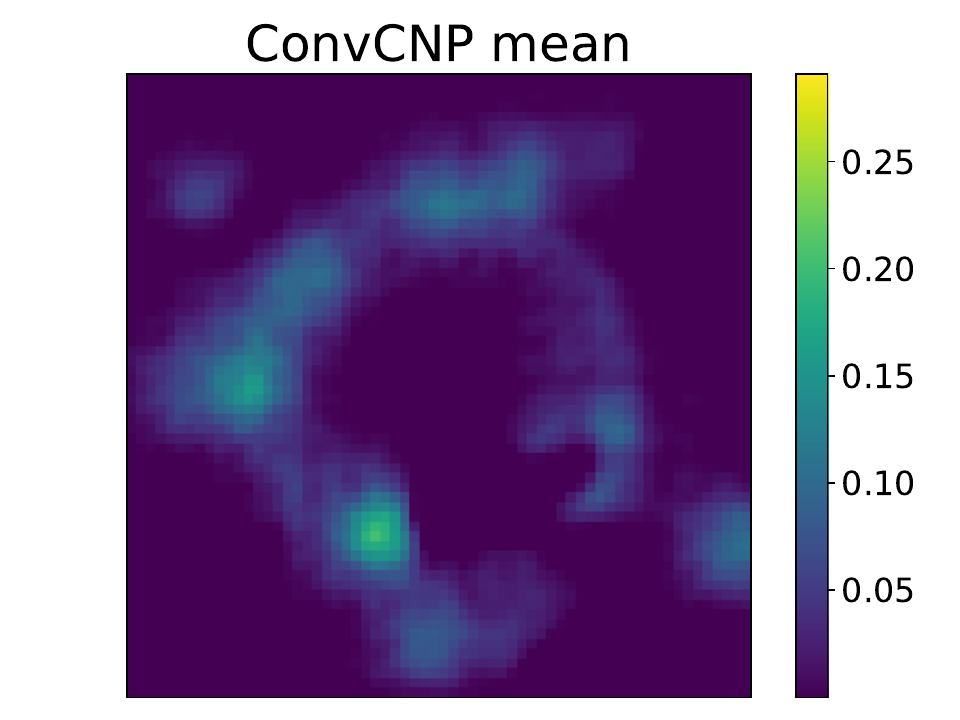}
        \vspace*{-.1cm}
        \caption{ConvCNP mean}
    \end{subfigure}
    \quad
    \begin{subfigure}{0.19\textwidth}
        \includegraphics[height=.75\textwidth,trim={2cm, 0.2cm, 0.4cm, 1.15cm},clip]{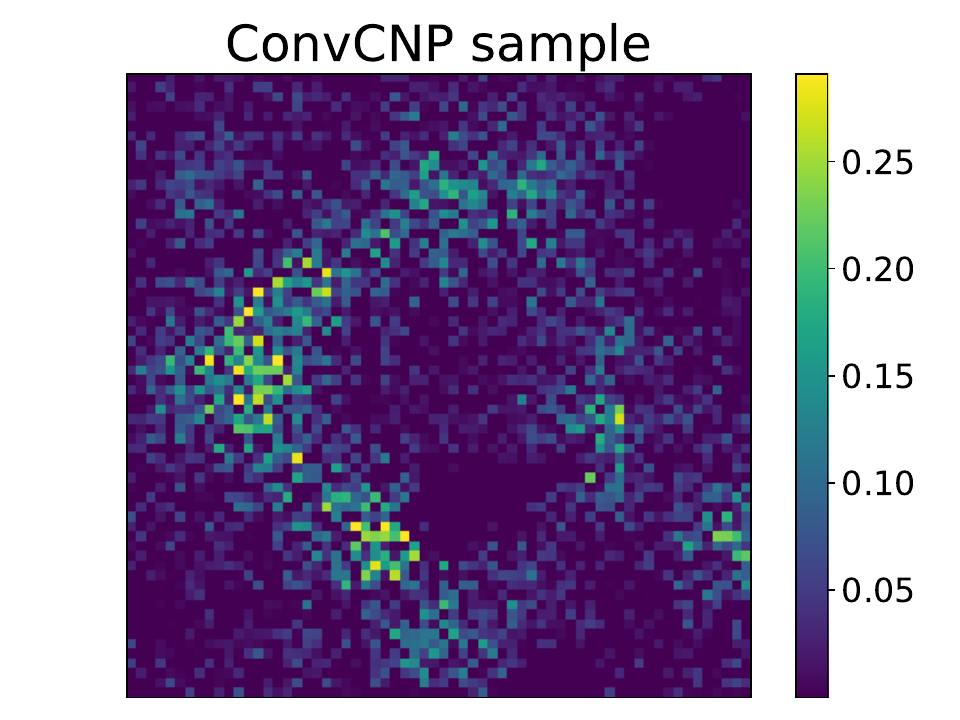}
        \vspace*{-.1cm}
        \caption{ConvCNP sample}
    \end{subfigure}
    
    \caption{ \alg regression on black hole data with resolution $64\times64$. (a) 32 random observations. (b) Ground truth sample.  (c) Predicted mean from \alg. (d) One posterior sample from \alg. (e) One posterior sample from best fitted \GP. (f) Predicted mean from ConvCNP. (g) One posterior sample from ConvCNP. }
    \label{fig:bh_reg}
\end{figure*}

\begin{table*}[h]
\caption{Comparison of \alg with baseline models: GP regression; OpFlow~\citep{shi_universal_2024}; Conditional models: NP (~\cite{garnelo_neural_2018}); Attentive NP (~\cite{kim_attentive_2019}, ANP); Convolutional Conditional NP (~\cite{gordon_convolutional_2020}, ConvCNP); Deep variational GP (~\cite{salimbeni_doubly_2017}, DGP); Deep Sigma Point Process (~\cite{jankowiak_deep_2020}, DSSP); Metrics SMSE and MSLL used for 1D and 2D \GP example. Mean squared error for the predicted mean $(\mu)$ and standard deviation ($\sigma$) are used for TGP example. Performance of \GP regression for 1D and 2D GP are removed (marked with '$-$'), which are taken as the ground truth. Best performance in bold. }
\centering
{\footnotesize 
\setlength{\tabcolsep}{4pt} %
\begin{tabular}{@{}cccccccc@{}} 
\toprule
\multicolumn{1}{c}{Dataset $\rightarrow$} & \multicolumn{2}{c}{1D GP} & \multicolumn{2}{c}{2D GP} & \multicolumn{2}{c}{1D TGP} \\ 
\cmidrule(lr){2-3} \cmidrule(lr){4-5} \cmidrule(lr){6-7}  
Algorithm $\downarrow$ Metric $\rightarrow$  & SMSE & MSLL & SMSE & MSLL & $\mu$ & $\sigma$ \\ 
\midrule

GP prior & - & - & - & - & $6.4 \cdot 10^{-2}$ & $1.6 \cdot 10^{-2}$ \\

NP & $6.1 \cdot 10^{-1}$ & $4.5 \cdot 10^{~0}$ & $1.7 \cdot 10^{-1}$& $2.1 \cdot 10^{~0}$ & $1.0 \cdot 10^{-1}$ & $1.9 \cdot 10^{-2}$ \\
ANP & $5.1 \cdot 10^{-1}$ & $9.8 \cdot 10^{-1}$ &$1.6 \cdot 10^{-1}$ & $1.1 \cdot 10^{~0}$ & $1.4 \cdot 10^{-1}$ & $1.7 \cdot 10^{-2}$ \\
ConvCNP & $5.6 \cdot 10^{-1}$ & $2.7 \cdot 10^{-1}$ &$1.7 \cdot 10^{-1}$ & $4.5 \cdot 10^{-1}$ & $1.6 \cdot 10^{-2}$ & $2.1 \cdot 10^{-3}$ \\
DGP & $4.1 \cdot 10^{-1}$ & $6.8 \cdot 10^{-2}$ &$1.8 \cdot 10^{~0}$ & $4.2 \cdot 10^{~0}$ & $4.9 \cdot 10^{-1} $ & $1.4 \cdot 10^{-2}$ \\
DSPP & $4.7 \cdot 10^{-1}$ & $6.5 \cdot 10^{~0}$ &$1.9 \cdot 10^{-1}$ & $6.6 \cdot 10^{~0}$ & $1.1 \cdot 10^{-2}$ & $1.3 \cdot 10^{-2}$ \\
OpFlow & $ 5.0 \cdot 10^{-1}$ & $2.0 \cdot 10^{-1}$ & $1.4 \cdot 10^{-1}$ & $ \mathbf{1.1 \cdot 10^{-1}}$ & $1.3 \cdot 10 ^{-2}$ & $3.9 \cdot 10^{-3}$ \\    \rowcolor{yellow!25} $\mathbf{OFM  (Ours)}$ &  $ \mathbf{4.1 \cdot 10^{-1}}$ & $ \mathbf{5.5 \cdot 10^{-2}}$ & $ \mathbf{1.3 \cdot 10^{-1}}$ & $1.6 \cdot 10^{-1}$& $\mathbf{5.2 \cdot 10^{-3}}$ & $\mathbf{9.5 \cdot 10^{-4}}$ \\
\bottomrule

\end{tabular}
}
\label{tab:OFM_table}
\end{table*}
\textbf{Truncated \GP data.}
In this experiment, we analyze the regression performance of \alg for tractable non-\GP. Specifically, we work on truncated \GP~\citep{swiler_survey_2020,shi_universal_2024}, which constrains the function amplitude within a specified range. This is achieved by applying a sampling-rejection strategy on samples from the \GP prior. We set the bounds of our TGP to $[-1.2, 1.2]$ and perform regression using observations only at three points, while estimating the posterior across $m=128$ points. Subsequently, we sample $1000$ true TGP posteriors from the \GP prior to calculate the mean and standard deviation. Traditional metrics like MSLL and SMSE, which assume a Gaussian posterior, are not suitable for TGP. We evaluate the performance using the mean squared error for both the predicted mean and standard deviation. The results are reported in Table.~\ref{tab:OFM_table}, and illustrated in Fig.~\ref{fig:TGP_OFM_reg}. \alg accurately learns the specified bounds and provides accurate estimations of mean and standard deviation, along with realistic posterior samples. In contrast, directly applying \GP regression exceeds the bounds and yields unrealistic posterior samples.

\textbf{2D \GP data.}
Similar to the 1D GP example, we extend our regression analysis to 2D \GP data. As shown in Fig.~\ref{fig:GRF_OFM_reg} and detailed in Table~\ref{tab:OFM_table}, \alg provide accurate posterior estimation. The relative error shown in Fig. ~\ref{fig:GRF_OFM_reg} is the absolute error normalized by the maximum absolute value of the mean prediction derived from the ground truth \GP regression.

\textbf{Navier-Stokes, Black hole and MNIST-SDF datasets.}
We collected a 2D Navier-Stokes dataset and applied \alg for the regression. Unlike the \GP experiments, where MSLL and SMSE score serve as standard benchmarks, evaluating the performance of models on general non-\GP{}s presents a significant challenge due to the difficulty of determining the true posterior and lack of benchmarks. Moreover, the evidence term in the posterior (Eq.~\ref{eq:closed_posterior}) is intractable, so the likelihood cannot serve as a meaningful evaluation metric in our setting. A detailed discussion is provided in Appendix~\ref{App:QandA}.

We present the predicted mean and a posterior sample in Fig ~\ref{fig:NS_reg} for visual comparison with the ground truth. The predicted mean and the posterior sample are closely aligned with the ground truth. In contrast, traditional \GP regression and NP models failed to accurately capture the dynamics of the Navier-Stokes data. In Fig.~\ref{fig:bh_reg}, we conduct a similar analysis using a simulated black hole dataset. Here, \alg provides a much more realistic mean and posterior sample that capture the density and swirling patterns of the black hole. Once again, \GP regression and NP fail to capture these key statistics. We observe similar outcomes when applying \alg to the MNIST-SDF example (Fig~\ref{fig:SDF_MNIST_reg}), where \alg correctly recognizes the number "$7$" while \GP regression does not.




\section{Conclusion}

In this paper, we proposed Operator Flow Matching (\alg) for stochastic process learning, which generalizes flow matching models to infinite-dimensional space and stochastic process with optimal transport path. \alg efficiently computes the probability density for any finite collection of points and supports mathematically tractable functional regression. We extensively tested \alg across a diverse range of datasets, including those with closed-form \GP and non-\GP data, as well as highly non-\GP such as Navier-Stokes and black hole data. In comparative evaluations, \alg consistently outperformed all baseline models, establishing new standards in stochastic process learning and regression.

Despite SOTA accuracy, our method is presently limited to low-dimensional domains and demands larger datasets and more compute than GP-based baselines. see Appendix~\ref{App:QandA},~\ref{apx:exp_set} and ~\ref{Apx:analysis} for details. Python code available at \url{https://github.com/yzshi5/SPL_OFM}

\subsection*{Acknowledgments}
This material is based upon work supported by the U.S. Department of Energy, Office of Science, Office of Advanced Scientific Computing Research, Science Foundations for Energy Earthshot under Award Number DE-SC0024705. ZER is supported by a fellowship from the David and Lucile Packard Foundation. We also thank Charles Gammie, Ben Prather, Abhishek Joshi, Vedant Dhruv, and Chi-kwan Chan for providing the black hole simulations. 

\bibliography{OFM}

\begin{thebibliography}{57}
\providecommand{\natexlab}[1]{#1}
\providecommand{\url}[1]{\texttt{#1}}
\expandafter\ifx\csname urlstyle\endcsname\relax
  \providecommand{\doi}[1]{doi: #1}\else
  \providecommand{\doi}{doi: \begingroup \urlstyle{rm}\Url}\fi

\bibitem[Shi et~al.(2024{\natexlab{a}})Shi, Gao, Ross, and Azizzadenesheli]{shi_universal_2024}
Yaozhong Shi, Angela~F. Gao, Zachary~E. Ross, and Kamyar Azizzadenesheli.
\newblock Universal {Functional} {Regression} with {Neural} {Operator} {Flows}, November 2024{\natexlab{a}}.
\newblock URL \url{http://arxiv.org/abs/2404.02986}.
\newblock arXiv:2404.02986 [cs] version: 3.

\bibitem[Tong et~al.(2024)Tong, Fatras, Malkin, Huguet, Zhang, Rector-Brooks, Wolf, and Bengio]{tong_improving_2024}
Alexander Tong, Kilian Fatras, Nikolay Malkin, Guillaume Huguet, Yanlei Zhang, Jarrid Rector-Brooks, Guy Wolf, and Yoshua Bengio.
\newblock Improving and generalizing flow-based generative models with minibatch optimal transport, March 2024.
\newblock URL \url{http://arxiv.org/abs/2302.00482}.
\newblock arXiv:2302.00482 [cs].

\bibitem[Azizzadenesheli et~al.(2024)Azizzadenesheli, Kovachki, Li, Liu-Schiaffini, Kossaifi, and Anandkumar]{azizzadenesheli_neural_2024}
Kamyar Azizzadenesheli, Nikola Kovachki, Zongyi Li, Miguel Liu-Schiaffini, Jean Kossaifi, and Anima Anandkumar.
\newblock Neural operators for accelerating scientific simulations and design.
\newblock \emph{Nature Reviews Physics}, 6\penalty0 (5):\penalty0 320--328, May 2024.
\newblock ISSN 2522-5820.
\newblock \doi{10.1038/s42254-024-00712-5}.
\newblock URL \url{https://www.nature.com/articles/s42254-024-00712-5}.
\newblock Publisher: Nature Publishing Group.

\bibitem[Welling and Teh(2011)]{welling_bayesian_2011}
Max Welling and Yee~Whye Teh.
\newblock Bayesian learning via stochastic gradient langevin dynamics.
\newblock In \emph{Proceedings of the 28th {International} {Conference} on {International} {Conference} on {Machine} {Learning}}, {ICML}'11, pages 681--688, Madison, WI, USA, June 2011. Omnipress.
\newblock ISBN 978-1-4503-0619-5.

\bibitem[Kim et~al.(2019)Kim, Mnih, Schwarz, Garnelo, Eslami, Rosenbaum, Vinyals, and Teh]{kim_attentive_2019}
Hyunjik Kim, Andriy Mnih, Jonathan Schwarz, Marta Garnelo, Ali Eslami, Dan Rosenbaum, Oriol Vinyals, and Yee~Whye Teh.
\newblock Attentive {Neural} {Processes}, July 2019.
\newblock URL \url{http://arxiv.org/abs/1901.05761}.
\newblock arXiv:1901.05761 [cs, stat].

\bibitem[Gordon et~al.(2020)Gordon, Bruinsma, Foong, Requeima, Dubois, and Turner]{gordon_convolutional_2020}
Jonathan Gordon, Wessel~P. Bruinsma, Andrew Y.~K. Foong, James Requeima, Yann Dubois, and Richard~E. Turner.
\newblock Convolutional {Conditional} {Neural} {Processes}, June 2020.
\newblock URL \url{http://arxiv.org/abs/1910.13556}.
\newblock arXiv:1910.13556 [stat].

\bibitem[Garnelo et~al.(2018)Garnelo, Schwarz, Rosenbaum, Viola, Rezende, Eslami, and Teh]{garnelo_neural_2018}
Marta Garnelo, Jonathan Schwarz, Dan Rosenbaum, Fabio Viola, Danilo~J. Rezende, S.~M.~Ali Eslami, and Yee~Whye Teh.
\newblock Neural {Processes}, July 2018.
\newblock URL \url{http://arxiv.org/abs/1807.01622}.
\newblock arXiv:1807.01622 [cs, stat].

\bibitem[Williams and Rasmussen(2006)]{williams_gaussian_2006}
Christopher~KI Williams and Carl~Edward Rasmussen.
\newblock \emph{Gaussian processes for machine learning}, volume~2.
\newblock MIT press Cambridge, MA, 2006.
\newblock Issue: 3.

\bibitem[Salimbeni and Deisenroth(2017)]{salimbeni_doubly_2017}
Hugh Salimbeni and Marc Deisenroth.
\newblock Doubly stochastic variational inference for deep {Gaussian} processes.
\newblock \emph{Advances in neural information processing systems}, 30, 2017.

\bibitem[Jankowiak et~al.(2020)Jankowiak, Pleiss, and Gardner]{jankowiak_deep_2020}
Martin Jankowiak, Geoff Pleiss, and Jacob~R. Gardner.
\newblock Deep {Sigma} {Point} {Processes}, December 2020.
\newblock URL \url{http://arxiv.org/abs/2002.09112}.
\newblock arXiv:2002.09112 [cs, stat].

\bibitem[Lipman et~al.(2023)Lipman, Chen, Ben-Hamu, Nickel, and Le]{lipman_flow_2023}
Yaron Lipman, Ricky T.~Q. Chen, Heli Ben-Hamu, Maximilian Nickel, and Matt Le.
\newblock Flow {Matching} for {Generative} {Modeling}, February 2023.
\newblock URL \url{http://arxiv.org/abs/2210.02747}.
\newblock arXiv:2210.02747 [cs, stat].

\bibitem[Albergo and Vanden-Eijnden(2023)]{albergo_building_2023}
Michael~S. Albergo and Eric Vanden-Eijnden.
\newblock Building {Normalizing} {Flows} with {Stochastic} {Interpolants}, March 2023.
\newblock URL \url{http://arxiv.org/abs/2209.15571}.
\newblock arXiv:2209.15571 [cs, stat].

\bibitem[Liu et~al.(2022)Liu, Gong, and Liu]{liu_flow_2022}
Xingchao Liu, Chengyue Gong, and Qiang Liu.
\newblock Flow {Straight} and {Fast}: {Learning} to {Generate} and {Transfer} {Data} with {Rectified} {Flow}, September 2022.
\newblock URL \url{http://arxiv.org/abs/2209.03003}.
\newblock arXiv:2209.03003 [cs].

\bibitem[Li et~al.(2021)Li, Kovachki, Azizzadenesheli, Liu, Bhattacharya, Stuart, and Anandkumar]{li_fourier_2021}
Zongyi Li, Nikola Kovachki, Kamyar Azizzadenesheli, Burigede Liu, Kaushik Bhattacharya, Andrew Stuart, and Anima Anandkumar.
\newblock Fourier {Neural} {Operator} for {Parametric} {Partial} {Differential} {Equations}, May 2021.
\newblock URL \url{http://arxiv.org/abs/2010.08895}.
\newblock arXiv:2010.08895 [cs, math].

\bibitem[Kovachki et~al.(2023)Kovachki, Li, Liu, Azizzadenesheli, Bhattacharya, Stuart, and Anandkumar]{kovachki_neural_2023}
Nikola Kovachki, Zongyi Li, Burigede Liu, Kamyar Azizzadenesheli, Kaushik Bhattacharya, Andrew Stuart, and Anima Anandkumar.
\newblock Neural {Operator}: {Learning} {Maps} {Between} {Function} {Spaces} {With} {Applications} to {PDEs}.
\newblock \emph{Journal of Machine Learning Research}, 24\penalty0 (89):\penalty0 1--97, 2023.
\newblock ISSN 1533-7928.
\newblock URL \url{http://jmlr.org/papers/v24/21-1524.html}.

\bibitem[Yang et~al.(2021)Yang, Gao, Castellanos, Ross, Azizzadenesheli, and Clayton]{yang_seismic_2021}
Yan Yang, Angela~F. Gao, Jorge~C. Castellanos, Zachary~E. Ross, Kamyar Azizzadenesheli, and Robert~W. Clayton.
\newblock Seismic wave propagation and inversion with {Neural} {Operators}, October 2021.
\newblock URL \url{http://arxiv.org/abs/2108.05421}.
\newblock arXiv:2108.05421.

\bibitem[Pathak et~al.(2022)Pathak, Subramanian, Harrington, Raja, Chattopadhyay, Mardani, Kurth, Hall, Li, Azizzadenesheli, Hassanzadeh, Kashinath, and Anandkumar]{pathak_fourcastnet_2022}
Jaideep Pathak, Shashank Subramanian, Peter Harrington, Sanjeev Raja, Ashesh Chattopadhyay, Morteza Mardani, Thorsten Kurth, David Hall, Zongyi Li, Kamyar Azizzadenesheli, Pedram Hassanzadeh, Karthik Kashinath, and Animashree Anandkumar.
\newblock {FourCastNet}: {A} {Global} {Data}-driven {High}-resolution {Weather} {Model} using {Adaptive} {Fourier} {Neural} {Operators}, February 2022.
\newblock URL \url{http://arxiv.org/abs/2202.11214}.
\newblock arXiv:2202.11214.

\bibitem[Wen et~al.(2023)Wen, Li, Long, Azizzadenesheli, Anandkumar, and Benson]{wen_real-time_2023}
Gege Wen, Zongyi Li, Qirui Long, Kamyar Azizzadenesheli, Anima Anandkumar, and Sally~M. Benson.
\newblock Real-time high-resolution {CO2} geological storage prediction using nested {Fourier} neural operators.
\newblock \emph{Energy \& Environmental Science}, 16\penalty0 (4):\penalty0 1732--1741, April 2023.
\newblock ISSN 1754-5706.
\newblock \doi{10.1039/D2EE04204E}.
\newblock URL \url{https://pubs.rsc.org/en/content/articlelanding/2023/ee/d2ee04204e}.
\newblock Publisher: The Royal Society of Chemistry.

\bibitem[Yang et~al.(2023)Yang, Gao, Azizzadenesheli, Clayton, and Ross]{yang_rapid_2023}
Yan Yang, Angela~F. Gao, Kamyar Azizzadenesheli, Robert~W. Clayton, and Zachary~E. Ross.
\newblock Rapid {Seismic} {Waveform} {Modeling} and {Inversion} with {Neural} {Operators}, April 2023.
\newblock URL \url{http://arxiv.org/abs/2209.11955}.
\newblock arXiv:2209.11955.

\bibitem[Sun et~al.(2023)Sun, Ross, Zhu, and Azizzadenesheli]{sun_phase_2023}
Hongyu Sun, Zachary~E. Ross, Weiqiang Zhu, and Kamyar Azizzadenesheli.
\newblock Phase {Neural} {Operator} for {Multi}-{Station} {Picking} of {Seismic} {Arrivals}.
\newblock \emph{Geophysical Research Letters}, 50\penalty0 (24):\penalty0 e2023GL106434, 2023.
\newblock ISSN 1944-8007.
\newblock \doi{10.1029/2023GL106434}.
\newblock URL \url{https://onlinelibrary.wiley.com/doi/abs/10.1029/2023GL106434}.
\newblock \_eprint: https://onlinelibrary.wiley.com/doi/pdf/10.1029/2023GL106434.

\bibitem[Li et~al.(2023)Li, Kovachki, Choy, Li, Kossaifi, Otta, Nabian, Stadler, Hundt, Azizzadenesheli, and Anandkumar]{li_geometry-informed_2023}
Zongyi Li, Nikola~Borislavov Kovachki, Chris Choy, Boyi Li, Jean Kossaifi, Shourya~Prakash Otta, Mohammad~Amin Nabian, Maximilian Stadler, Christian Hundt, Kamyar Azizzadenesheli, and Anima Anandkumar.
\newblock Geometry-{Informed} {Neural} {Operator} for {Large}-{Scale} {3D} {PDEs}, September 2023.
\newblock URL \url{http://arxiv.org/abs/2309.00583}.
\newblock arXiv:2309.00583.

\bibitem[Rahman et~al.(2022)Rahman, Florez, Anandkumar, Ross, and Azizzadenesheli]{rahman_generative_2022}
Md~Ashiqur Rahman, Manuel~A. Florez, Anima Anandkumar, Zachary~E. Ross, and Kamyar Azizzadenesheli.
\newblock Generative {Adversarial} {Neural} {Operators}, October 2022.
\newblock URL \url{http://arxiv.org/abs/2205.03017}.
\newblock arXiv:2205.03017 [cs, math].

\bibitem[Shi et~al.(2024{\natexlab{b}})Shi, Lavrentiadis, Asimaki, Ross, and Azizzadenesheli]{shi_broadband_2024}
Yaozhong Shi, Grigorios Lavrentiadis, Domniki Asimaki, Zachary~E. Ross, and Kamyar Azizzadenesheli.
\newblock Broadband {Ground}-{Motion} {Synthesis} via {Generative} {Adversarial} {Neural} {Operators}: {Development} and {Validation}.
\newblock \emph{Bulletin of the Seismological Society of America}, 114\penalty0 (4):\penalty0 2151--2171, August 2024{\natexlab{b}}.
\newblock ISSN 0037-1106, 1943-3573.
\newblock \doi{10.1785/0120230207}.
\newblock URL \url{https://pubs.geoscienceworld.org/bssa/article/114/4/2151/636448/Broadband-Ground-Motion-Synthesis-via-Generative}.

\bibitem[Song et~al.(2021)Song, Sohl-Dickstein, Kingma, Kumar, Ermon, and Poole]{song_score-based_2021}
Yang Song, Jascha Sohl-Dickstein, Diederik~P. Kingma, Abhishek Kumar, Stefano Ermon, and Ben Poole.
\newblock Score-{Based} {Generative} {Modeling} through {Stochastic} {Differential} {Equations}, February 2021.
\newblock URL \url{http://arxiv.org/abs/2011.13456}.
\newblock arXiv:2011.13456 [cs, stat].

\bibitem[Ho et~al.(2020)Ho, Jain, and Abbeel]{ho_denoising_2020}
Jonathan Ho, Ajay Jain, and Pieter Abbeel.
\newblock Denoising {Diffusion} {Probabilistic} {Models}, December 2020.
\newblock URL \url{http://arxiv.org/abs/2006.11239}.
\newblock arXiv:2006.11239 [cs, stat].

\bibitem[Lim et~al.(2023)Lim, Kovachki, Baptista, Beckham, Azizzadenesheli, Kossaifi, Voleti, Song, Kreis, Kautz, Pal, Vahdat, and Anandkumar]{lim_score-based_2023}
Jae~Hyun Lim, Nikola~B. Kovachki, Ricardo Baptista, Christopher Beckham, Kamyar Azizzadenesheli, Jean Kossaifi, Vikram Voleti, Jiaming Song, Karsten Kreis, Jan Kautz, Christopher Pal, Arash Vahdat, and Anima Anandkumar.
\newblock Score-based {Diffusion} {Models} in {Function} {Space}, November 2023.
\newblock URL \url{http://arxiv.org/abs/2302.07400}.
\newblock arXiv:2302.07400 [cs, math, stat].

\bibitem[Pidstrigach et~al.(2023)Pidstrigach, Marzouk, Reich, and Wang]{pidstrigach_infinite-dimensional_2023}
Jakiw Pidstrigach, Youssef Marzouk, Sebastian Reich, and Sven Wang.
\newblock Infinite-{Dimensional} {Diffusion} {Models}, October 2023.
\newblock URL \url{http://arxiv.org/abs/2302.10130}.
\newblock arXiv:2302.10130 [cs, math, stat].

\bibitem[Kerrigan et~al.(2023{\natexlab{a}})Kerrigan, Ley, and Smyth]{kerrigan_diffusion_2023}
Gavin Kerrigan, Justin Ley, and Padhraic Smyth.
\newblock Diffusion {Generative} {Models} in {Infinite} {Dimensions}, February 2023{\natexlab{a}}.
\newblock URL \url{http://arxiv.org/abs/2212.00886}.
\newblock arXiv:2212.00886 [cs, stat].

\bibitem[Kerrigan et~al.(2023{\natexlab{b}})Kerrigan, Migliorini, and Smyth]{kerrigan_functional_2023}
Gavin Kerrigan, Giosue Migliorini, and Padhraic Smyth.
\newblock Functional {Flow} {Matching}, December 2023{\natexlab{b}}.
\newblock URL \url{http://arxiv.org/abs/2305.17209}.
\newblock arXiv:2305.17209 [cs, stat].

\bibitem[Damianou and Lawrence(2013)]{damianou_deep_2013}
Andreas Damianou and Neil~D Lawrence.
\newblock Deep gaussian processes.
\newblock pages 207--215. PMLR, 2013.

\bibitem[Liu et~al.(2020)Liu, Ong, Shen, and Cai]{liu_when_2020}
Haitao Liu, Yew-Soon Ong, Xiaobo Shen, and Jianfei Cai.
\newblock When {Gaussian} process meets big data: {A} review of scalable {GPs}.
\newblock \emph{IEEE transactions on neural networks and learning systems}, 31\penalty0 (11):\penalty0 4405--4423, 2020.
\newblock ISSN 2162-237X.
\newblock Publisher: IEEE.

\bibitem[Kou et~al.(2013)Kou, Gao, and Guan]{kou_sparse_2013}
Peng Kou, Feng Gao, and Xiaohong Guan.
\newblock Sparse online warped {Gaussian} process for wind power probabilistic forecasting.
\newblock \emph{Applied energy}, 108:\penalty0 410--428, 2013.
\newblock ISSN 0306-2619.
\newblock Publisher: Elsevier.

\bibitem[Maroñas et~al.(2021)Maroñas, Hamelijnck, Knoblauch, and Damoulas]{maronas_transforming_2021}
Juan Maroñas, Oliver Hamelijnck, Jeremias Knoblauch, and Theodoros Damoulas.
\newblock Transforming {Gaussian} processes with normalizing flows.
\newblock pages 1081--1089. PMLR, 2021.
\newblock ISBN 2640-3498.

\bibitem[Dutordoir et~al.(2023)Dutordoir, Saul, Ghahramani, and Simpson]{dutordoir_neural_2023}
Vincent Dutordoir, Alan Saul, Zoubin Ghahramani, and Fergus Simpson.
\newblock Neural {Diffusion} {Processes}, June 2023.
\newblock URL \url{http://arxiv.org/abs/2206.03992}.
\newblock arXiv:2206.03992 [cs, stat].

\bibitem[Papamakarios et~al.(2021)Papamakarios, Nalisnick, Rezende, Mohamed, and Lakshminarayanan]{papamakarios_normalizing_2021}
George Papamakarios, Eric Nalisnick, Danilo~Jimenez Rezende, Shakir Mohamed, and Balaji Lakshminarayanan.
\newblock Normalizing flows for probabilistic modeling and inference.
\newblock \emph{Journal of Machine Learning Research}, 22\penalty0 (57):\penalty0 1--64, 2021.
\newblock ISSN 1533-7928.

\bibitem[Ambrosio et~al.(2008)Ambrosio, Gigli, and Savaré]{ambrosio_gradient_2008}
Luigi Ambrosio, Nicola Gigli, and Giuseppe Savaré.
\newblock \emph{Gradient flows: in metric spaces and in the space of probability measures}.
\newblock Springer Science \& Business Media, 2008.
\newblock ISBN 3-7643-8722-X.

\bibitem[Chizat et~al.(2018)Chizat, Peyré, Schmitzer, and Vialard]{chizat_unbalanced_2018}
Lénaïc Chizat, Gabriel Peyré, Bernhard Schmitzer, and François-Xavier Vialard.
\newblock Unbalanced optimal transport: {Dynamic} and {Kantorovich} formulations.
\newblock \emph{Journal of Functional Analysis}, 274\penalty0 (11):\penalty0 3090--3123, June 2018.
\newblock ISSN 0022-1236.
\newblock \doi{10.1016/j.jfa.2018.03.008}.
\newblock URL \url{https://www.sciencedirect.com/science/article/pii/S0022123618301058}.

\bibitem[Brémaud(2020)]{bremaud_probability_2020}
Pierre Brémaud.
\newblock \emph{Probability {Theory} and {Stochastic} {Processes}}.
\newblock Universitext. Springer International Publishing, Cham, 2020.
\newblock ISBN 978-3-030-40182-5 978-3-030-40183-2.
\newblock \doi{10.1007/978-3-030-40183-2}.
\newblock URL \url{http://link.springer.com/10.1007/978-3-030-40183-2}.

\bibitem[Kolmogorov and Bharucha-Reid(2018)]{kolmogorov_foundations_2018}
Andreĭ~Nikolaevich Kolmogorov and Albert~T Bharucha-Reid.
\newblock \emph{Foundations of the theory of probability: {Second} {English} {Edition}}.
\newblock Courier Dover Publications, 2018.
\newblock ISBN 0-486-82159-5.

\bibitem[Grathwohl et~al.(2018)Grathwohl, Chen, Bettencourt, Sutskever, and Duvenaud]{grathwohl_ffjord_2018}
Will Grathwohl, Ricky T.~Q. Chen, Jesse Bettencourt, Ilya Sutskever, and David Duvenaud.
\newblock {FFJORD}: {Free}-form {Continuous} {Dynamics} for {Scalable} {Reversible} {Generative} {Models}, October 2018.
\newblock URL \url{http://arxiv.org/abs/1810.01367}.
\newblock arXiv:1810.01367 [cs, stat].

\bibitem[Hutchinson(1989)]{hutchinson_stochastic_1989}
Michael~F Hutchinson.
\newblock A stochastic estimator of the trace of the influence matrix for {Laplacian} smoothing splines.
\newblock \emph{Communications in Statistics-Simulation and Computation}, 18\penalty0 (3):\penalty0 1059--1076, 1989.
\newblock ISSN 0361-0918.
\newblock Publisher: Taylor \& Francis.

\bibitem[Skilling(1989)]{skilling_eigenvalues_1989}
John Skilling.
\newblock The eigenvalues of mega-dimensional matrices.
\newblock \emph{Maximum Entropy and Bayesian Methods: Cambridge, England, 1988}, pages 455--466, 1989.
\newblock ISSN 9048140447.
\newblock Publisher: Springer.

\bibitem[Sitzmann et~al.(2020)Sitzmann, Chan, Tucker, Snavely, and Wetzstein]{sitzmann_metasdf_2020}
Vincent Sitzmann, Eric Chan, Richard Tucker, Noah Snavely, and Gordon Wetzstein.
\newblock Metasdf: {Meta}-learning signed distance functions.
\newblock \emph{Advances in Neural Information Processing Systems}, 33:\penalty0 10136--10147, 2020.

\bibitem[Swiler et~al.(2020)Swiler, Gulian, Frankel, Safta, and Jakeman]{swiler_survey_2020}
Laura~P Swiler, Mamikon Gulian, Ari~L Frankel, Cosmin Safta, and John~D Jakeman.
\newblock A survey of constrained {Gaussian} process regression: {Approaches} and implementation challenges.
\newblock \emph{Journal of Machine Learning for Modeling and Computing}, 1\penalty0 (2), 2020.
\newblock ISSN 2689-3967.
\newblock Publisher: Begel House Inc.

\bibitem[Kerrigan et~al.(2024)Kerrigan, Migliorini, and Smyth]{kerrigan_dynamic_2024}
Gavin Kerrigan, Giosue Migliorini, and Padhraic Smyth.
\newblock Dynamic {Conditional} {Optimal} {Transport} through {Simulation}-{Free} {Flows}, May 2024.
\newblock URL \url{http://arxiv.org/abs/2404.04240}.
\newblock arXiv:2404.04240.

\bibitem[Jin et~al.(2025)Jin, Sun, Li, Xu, Xu, Jiang, Zhuang, Huang, Song, Mu, and Lin]{jin_pyramidal_2025}
Yang Jin, Zhicheng Sun, Ningyuan Li, Kun Xu, Kun Xu, Hao Jiang, Nan Zhuang, Quzhe Huang, Yang Song, Yadong Mu, and Zhouchen Lin.
\newblock Pyramidal {Flow} {Matching} for {Efficient} {Video} {Generative} {Modeling}, March 2025.
\newblock URL \url{http://arxiv.org/abs/2410.05954}.
\newblock arXiv:2410.05954 [cs].

\bibitem[Liu et~al.(2025)Liu, Yin, Yuille, Brown, and Singh]{liu_flowing_2025}
Qihao Liu, Xi~Yin, Alan Yuille, Andrew Brown, and Mannat Singh.
\newblock Flowing from {Words} to {Pixels}: {A} {Noise}-{Free} {Framework} for {Cross}-{Modality} {Evolution}, March 2025.
\newblock URL \url{http://arxiv.org/abs/2412.15213}.
\newblock arXiv:2412.15213 [cs].

\bibitem[Villani(2009)]{villani_optimal_2009}
Cédric Villani.
\newblock \emph{Optimal {Transport}}, volume 338 of \emph{Grundlehren der mathematischen {Wissenschaften}}.
\newblock Springer, Berlin, Heidelberg, 2009.
\newblock ISBN 978-3-540-71049-3 978-3-540-71050-9.
\newblock \doi{10.1007/978-3-540-71050-9}.
\newblock URL \url{http://link.springer.com/10.1007/978-3-540-71050-9}.

\bibitem[Bogachev(1998)]{bogachev_gaussian_1998}
Vladimir~Igorevich Bogachev.
\newblock \emph{Gaussian measures}.
\newblock American Mathematical Soc., 1998.
\newblock ISBN 0-8218-1054-5.
\newblock Issue: 62.

\bibitem[Kantorovich and Rubinshtein(1958)]{kantorovich_space_1958}
Leonid~Vasilevich Kantorovich and SG~Rubinshtein.
\newblock On a space of totally additive functions.
\newblock \emph{Vestnik of the St. Petersburg University: Mathematics}, 13\penalty0 (7):\penalty0 52--59, 1958.
\newblock ISSN 1063-4541.
\newblock Publisher: Allerton Press, Inc.

\bibitem[Li et~al.(2015)Li, Chen, Carlson, and Carin]{li_preconditioned_2015}
Chunyuan Li, Changyou Chen, David Carlson, and Lawrence Carin.
\newblock Preconditioned {Stochastic} {Gradient} {Langevin} {Dynamics} for {Deep} {Neural} {Networks}, December 2015.
\newblock URL \url{http://arxiv.org/abs/1512.07666}.
\newblock arXiv:1512.07666.

\bibitem[Chen et~al.(2019)Chen, Rubanova, Bettencourt, and Duvenaud]{chen_neural_2019}
Ricky T.~Q. Chen, Yulia Rubanova, Jesse Bettencourt, and David Duvenaud.
\newblock Neural {Ordinary} {Differential} {Equations}, December 2019.
\newblock URL \url{http://arxiv.org/abs/1806.07366}.
\newblock arXiv:1806.07366 [cs, stat].

\bibitem[Dinh et~al.(2017)Dinh, Sohl-Dickstein, and Bengio]{dinh_density_2017}
Laurent Dinh, Jascha Sohl-Dickstein, and Samy Bengio.
\newblock Density estimation using {Real} {NVP}, February 2017.
\newblock URL \url{http://arxiv.org/abs/1605.08803}.
\newblock arXiv:1605.08803.

\bibitem[Benamou and Brenier(2000)]{benamou_computational_2000}
Jean-David Benamou and Yann Brenier.
\newblock A computational fluid mechanics solution to the {Monge}-{Kantorovich} mass transfer problem.
\newblock \emph{Numerische Mathematik}, 84\penalty0 (3):\penalty0 375--393, 2000.
\newblock URL \url{https://www.iap.fr/actualites/laune/2022/TransportOptimal/ark%20_67375_VQC-XB4DR0Z3-2.pdf}.
\newblock Publisher: Springer-Verlag Berlin/Heidelberg.

\bibitem[Dupont et~al.(2022)Dupont, Teh, and Doucet]{dupont_generative_2022}
Emilien Dupont, Yee~Whye Teh, and Arnaud Doucet.
\newblock Generative {Models} as {Distributions} of {Functions}, February 2022.
\newblock URL \url{http://arxiv.org/abs/2102.04776}.
\newblock arXiv:2102.04776.

\bibitem[Shi et~al.(2025)Shi, Ross, Asimaki, and Azizzadenesheli]{shi_mesh-informed_2025}
Yaozhong Shi, Zachary~E. Ross, Domniki Asimaki, and Kamyar Azizzadenesheli.
\newblock Mesh-{Informed} {Neural} {Operator} : {A} {Transformer} {Generative} {Approach}, June 2025.
\newblock URL \url{http://arxiv.org/abs/2506.16656}.
\newblock arXiv:2506.16656 [cs].

\bibitem[Lu et~al.(2022)Lu, Zhou, Bao, Chen, Li, and Zhu]{lu_dpm-solver_2022}
Cheng Lu, Yuhao Zhou, Fan Bao, Jianfei Chen, Chongxuan Li, and Jun Zhu.
\newblock {DPM}-{Solver}: {A} {Fast} {ODE} {Solver} for {Diffusion} {Probabilistic} {Model} {Sampling} in {Around} 10 {Steps}, October 2022.
\newblock URL \url{http://arxiv.org/abs/2206.00927}.
\newblock arXiv:2206.00927 [cs].

\end{thebibliography}

\newpage
\appendix
\onecolumn

\section{Potential questions and answers}\label{App:QandA}

Our method combines many vital fields, including 
\begin{itemize}
    \item \textbf{Operator learning}
    
    \item \textbf{Flow‑based generative models (flow matching)}; \textbf{Optimal transport in function space}
    
    \item \textbf{Bayesian uncertainty}; \textbf{Gradient Markov chain Monte Carlo, SGLD}; \textbf{Stochastic trace estimation}
    
    \item \textbf{Gaussian processes}; \textbf{Stochastic processes (Kolmogorov extension theorem)} 
\end{itemize}
Because readers will have diverse backgrounds and expertise, we provide some potential questions and answers to enhance the clarity and highlight our contributions.

\begin{description}[leftmargin=2pt]

  \item[Q1:] \textbf{Why is it hard to obtain the true prior and posterior of an unknown stochastic process, and what would we gain if we could?}\\
  \\
\textit{A1:} Generalizing \GP{} regression to arbitrary stochastic process regression has remained a long‑standing challenge that has kept many researchers busy for decades. The core difficulties are twofold: first, learning a general stochastic‑process prior from historical data with an expressive enough model, and second, deriving both the exact posterior distribution and an efficient sampling scheme.

Prior work falls into three main categories—GPs, deep GPs, and conditional models (NP family). Only the classical GP can fully characterize a stochastic process, and then only when the exact GP prior (mean function, covariance kernel, all hyper‑parameters, and observation‑noise variance) is already known. Even fitting one GP to data generated by another GP with unknown parameters rarely yields the true posterior. Deep GPs and NPs possess stronger representational power compared to GP, but \textbf{representational power $\neq$ exactness}: their optimization rely on approximate posterior and \textbf{DO NOT} produce a closed‑form “true posterior.”

If the true prior and posterior were available, we would gain near‑complete control over general stochastic process, enabling perfect uncertainty quantification and Bayes‑optimal decisions, which has numerous applications in finance market, science and engineering problems\\

  \item[Q2:] \textbf{How does OFM differ from standard finite-dimensional flow matching? What does it mean to generalize flow matching to stochastic processes, and how does extending a generative model to a function space differ from extending it to a full stochastic process?} \\

        \textit{A2:} A standard flow matching model learns a transport map between two distributions defined on a fixed grid (e.g., a pixel lattice). Consequently, it can only generate samples at that specific resolution. In contrast, \alg learns a map between two stochastic processes in function space. This endows it with several properties that standard flow matching lacks:

        \begin{enumerate}[label=(\roman*)]
            \item \textbf{ Resolution agnosticism}. OFM learns a transport map between two distributions defined on any given collection of points, without regard for the number of points, or their locations in the domain. This enables capabilities like zero-shot generation without retraining.
            
            \item \textbf{Stochastic process consistency}. OFM respects the metric of the underlying space, ensuring that points close to each other in the input domain have appropriately correlated values in the output distribution. This is part of learning a valid stochastic process, which also satisfies crucial theoretical properties like the Kolmogorov extension theorem (i.e, consistency under marginalization)
            
            \item \textbf{Convergence to a continuous function}. As the collection of query points becomes denser within the domain, the output of \alg converges to the underlying continuous function.
            
            \item \textbf{Backwards compatibility}. When evaluated on a fixed set of points, \alg recovers the behavior of a conventional (finite-dimensional) flow matching up to a linear transformation.
        \end{enumerate}

        Extending a generative model (e.g., flow matching) to an \textit{infinite‑dimensional function space} and extending it to a \textit{full stochastic process} are related but distinct goals. The first concerns generating function values at any finite collection of points; Lebesgue measure and Kolmogorov consistency come into play, but the focus remains on finite dimensional marginals. The second addresses the distribution of the entire function itself, requiring a probability measure over an infinite‑dimensional space.

        Because of this distinction, lifting a model merely to a \textit{function space} is strictly weaker. For example, GANO~\citep{rahman_generative_2022} extends GANs to function space but not to a stochastic process, as it cannot describe the joint law of function values at arbitrary query sets. In \alg, we bridge these two viewpoints for flow matching with neural operators: by exploiting the discretization‑convergence property of neural operators and the invertibility of flow matching, we extend the method to both function spaces and stochastic processes. See Appendix~\ref{sec:spl} and~\ref{Apx:KET} for details. \\
                
   \item[Q3:] \textbf{These methods may appear similar to image inpainting or restoration with flow‑ or diffusion‑based models; could you elaborate on the differences?}\\
   
        \textit{A3:} Current inpainting with flow/diffusion models treats the task as an inverse problem in a finite‑dimensional setting: one learns to reconstruct missing pixels on a fixed resolution grid, effectively solving a regression problem at a predetermined set of points. By contrast, \alg{} operates directly on \textit{functions} and is \textit{resolution‑agnostic}. Given the same partial observations, our model can predict the entire function at any collection of query points, coarse or fine.

        Moreover, \alg{} delivers principled uncertainty: its posterior quantifies the full distribution of possible completions, whereas finite‑dimensional inpainting methods typically rely on an ensemble of visually plausible samples whose variability is not a calibrated measure of uncertainty. Finally, \alg{} remains effective even when observations are extremely sparse. (e.g $0.7\%$ of total observations) —a regime in which grid‑based inpainting approaches struggle.\\
        
  \item[Q4]: \textbf{Could you elaborate further on the connections to related work? The approach appears to intersect with several studies, including \OpFlow~\citep{shi_universal_2024}, COT-FM~\citep{kerrigan_dynamic_2024}, OT‑CFM~\citep{tong_improving_2024}, and others.}\\

        \textit{A4:} Because our work combines multiple fields, it naturally has connections with many other studies. Readers are referred to the appendix~\ref{Apx:analysis} for a detailed discussion. \\

  \item[Q5:] \textbf{In Eq~\ref{eq:CFM_objective}. What is the argument of replacing the superma in Eq~\ref{eq:FM_objective} with expecations? When is this valid?}\\
  
        \textit{A5:} The supremum in Eq.~\ref{eq:FM_objective} represents a worst-case error over the entire function space, which is computationally intractable to optimize directly. We therefore relax this hard constraint by replacing the supremum with an expectation, as formulated in Eq~\ref{eq:CFM_objective}. This is a common empirical consideration.

        Instead of minimizing the worst-case error, our objective becomes minimizing the tractable average-case error across the distribution. Minimizing the error on average provides a strong practical incentive for the model to perform well across the entire function space, thereby effectively reducing the worst-case error. Such replacement is always valid under the weaker goal. The validity of this approach as a tractable proxy is further confirmed by our empirical results, which show it successfully guides the model to learn the intended functional mapping (Appendix~\ref{ap:prior_learn}).\\
  
  \item[Q6:] \textbf{Why not use log‑likelihood as the evaluation metric for non‑GP regression tasks when comparing to NP models? And why not include more recent conditional model baselines, such as NDP~\citep{dutordoir_neural_2023}}\\
  
        \textit{A6:} There are several reasons that likelihood as an evaluation metric is not relevant in our case. First, as shown in Eq~\ref{eq:closed_posterior}, there is an evidence term, which is a constant, in the posterior distribution in OFM framework. The constant evidence term is intractable but does not contribute to MAP estimation, mean estimation, and posterior sample in general.

        Second, even if we can compute the evidence term, we still cannot make the comparison. The graphical model in the NP is such that the conditional model is trained using the MLE principle and the learned likelihood model dependent. The posterior in OFM provides the posterior using the Bayes rule, utilizes a different model, and has a different graphical model. These quantities are not directly relevant to be compared.

        Third, computing the true posterior for a general stochastic process is a known challenging problem. Due to the complexity nature of the problem, there doesn’t exist a well-recognized metric. The evaluation of quality of posterior performance requires domain knowledge from experts and varies case by case, which is an exciting research direction.
        
        For baseline models,We already include a broad set of SOTA baselines. On the operator‑learning side, the latest \OpFlow~\citep{shi_universal_2024} (2024) is covered. For deep GPs, we adopt Doubly Stochastic Variational Deep GPs~\citep{salimbeni_doubly_2017} and the Deep Sigma‑Point Process~\citep{jankowiak_deep_2020}, both widely recognized and backed by publicly reproducible code.

    Within the NP family, ConvCNP~\citep{gordon_convolutional_2020} remains a standard benchmark, and we include two additional NP variants. Our baselines are limited to models with publicly reproducible implementations. Although we attempted to add the newer NDP~\citep{dutordoir_neural_2023}, we encountered significant reproducibility issues with the authors’ code—an obstacle reported by others as well. Instead, we offer a detailed theoretical comparison between \alg{} and NDP in Appendix~\ref{Apx:analysis}.
        
   \item[Q7:] \textbf{What do you mean by an “exact posterior” and a practical solution? How can I apply the model to my own tasks, and what do I need to prepare?}\\
   
        \textit{A7:} For functional regression, posterior error has two sources for any method: \textbf{(1) formulation error}—the gap between the model’s theoretical posterior and the true one—and \textbf{(2) approximation error} from finite data, limited capacity, and optimization. Deep GPs rely on variational inference, optimizing an ELBO and thereby introducing a formulation gap: the posterior is only an \textit{approximation}. NPs make even stronger simplifying assumptions, and doesn't provide the true posterior. All models, including \alg, incur approximation error, but the discussion above concerns only \textbf{formulation error}.
        
        A \textit{practical} solution means an expressive backbone that can learn a complex stochastic‑process prior and a posterior‑sampling routine whose runtime and memory footprint remain acceptable for typical users; see Appendix~\ref{apx:exp_set} for quantitative details.
        
        Using the model is straightforward. OFM offers GP‑style regression for non‑GP tasks: given noisy observations, it returns the posterior at arbitrary query points. The key difference is that a GP prior is fixed by a hand‑tuned kernel, whereas OFM learns the stochastic‑process prior directly from data. For details on prior learning and SGLD‑based posterior sampling, see Appendices~\ref{ap:prior_learn} and~\ref{Apx:posterior_SGLD}.

\end{description} 

\section{Background: Flow Matching, Gaussian Measures on Function Spaces, and the Cameron–Martin Theorem}
\label{app:background}

In this section, we provide essential background and high-level intuitive explanations of the topics involved in this paper for readers.

\textbf{Flow Matching.} Flow matching is a state-of-the-art generative paradigm that learns a time-dependent velocity field whose ODE transports samples from a simple reference (e.g., Gaussian) to the target data distribution, yielding an unbiased, simulation-free training objective that directly regresses ground-truth velocities along probability paths. 

The approach is tightly linked to physics via the continuity equation, and—when cast as a continuous normalizing flow—provides a (deterministically) invertible transformation with tractable likelihoods, while also admitting stochastic variants when desired. In practice it matches diffusion-level quality with faster training and sampling (often few- or even single-step generation) and excellent scalability, which is why it has been adopted in challenging domains such as large-scale video generation, in-context image generation, and protein ensemble generation~\citep{jin_pyramidal_2025, liu_flowing_2025}. We therefore use flow matching as the foundation for prior learning, leveraging its simple training objective, physics-grounded structure, and strong empirical performance.

\textbf{Gaussian measures.}
A probability measure $\mu$ on a separable Hilbert space $\mathcal{H}$ is \textit{Gaussian} if for any finite collection of vectors $\{h_1, \dots, h_n\} \subset \mathcal{H}$, the random vector $(\langle X, h_1\rangle, \dots, \langle X, h_n\rangle)$ has a multivariate Gaussian distribution, where $X \sim \mu$. The family $\{\langle X, h \rangle : h \in \mathcal{H}\}$ is therefore a \textit{Gaussian process} indexed by $\mathcal{H}$, with mean $h \mapsto \langle m, h\rangle$ for some $m\in\mathcal{H}$. Conversely, if a Gaussian process $\{X(t) : t \in T\}$ has sample paths that almost surely belong to a function space $\mathcal{H}$ (e.g., $C(T)$ or $L^2(T)$), then the law of the random path $t \mapsto X(t)$ is a \textit{Gaussian measure} on $\mathcal{H}$. In short: a Gaussian measure is the path-space law of a GP, and a GP is the collection of linear probes of a Gaussian measure.

\textbf{Cameron–Martin space and theorem.}
Let $\mu$ be a Gaussian measure on a separable Hilbert space $\mathcal H$ with mean $m\in\mathcal H$ and covariance operator $C:\mathcal H\to\mathcal H$ (self-adjoint, positive, trace-class). The \emph{Cameron–Martin space} (the RKHS associated with $\mu$) is
\[
\mathcal H_\mu \;=\; \overline{\mathrm{Range}(C^{1/2})}\subseteq\mathcal H,
\]
equipped with the inner product $\langle u,v\rangle_{\mathcal H_\mu}:=\langle C^{-1/2}u,\,C^{-1/2}v\rangle_{\mathcal H}$ on $\mathrm{Range}(C^{1/2})$, extended by completion. With this choice, the inclusion $\mathcal H_\mu\hookrightarrow\mathcal H$ is continuous. For $h\in\mathcal H$, write $\mu_h(A):=\mu(A-h)$ for the translate. The \emph{Cameron–Martin theorem} states:

(i) If $h\in\mathcal H_\mu$, then $\mu_h$ is absolutely continuous with respect to $\mu$ (in fact, $\mu_h$ and $\mu$ are equivalent), with
\[
\frac{d\mu_h}{d\mu}(x)
= \exp\!\left(\,\big\langle C^{-1/2}h,\,C^{-1/2}(x-m)\big\rangle_{\mathcal H}
\;-\;\tfrac12\,\|h\|_{\mathcal H_\mu}^2\right)
\quad\text{for }\mu\text{-a.e.\ }x,
\]
where the pairing is understood in the Paley–Wiener sense (which in finite dimensions reduces to $\langle h,\,C^{-1}(x-m)\rangle$).

(ii) If $h\notin\mathcal H_\mu$, then $\mu_h\perp\mu$ (mutually singular).

In finite dimensions this reduces to the classical mean-shift formula for $\mathcal N(m,\Sigma)$ with $C=\Sigma$, and $\|h\|_{\mathcal H_\mu}^2=\langle h,\Sigma^{-1}h\rangle$. Thus, $\mathcal H_\mu$ pinpoints exactly the directions along which a Gaussian measure can be translated while remaining (mutually) absolutely continuous.

\section{Stochastic process learning}\label{sec:spl}
Let $(\Omega,\F, P)$ denote a probability space and let $(\Real^d, \B(\Real^d))$ denote a measurable space where $\B(\Real)$ is the Borel space. Following the standard definition of stochastic processes (\citet{bremaud_probability_2020}, Chapter 5.1), a stochastic process $\P$ on a domain $D$ is a collection of  $\Real^d$-valued random variables indexed by members of $D$, i.e.,
\begin{align*}
    \lbrace a(x):x\in D\rbrace 
\end{align*}
jointly following the probability law $P$. In the special case of Gaussian processes, e.g., Wiener process, following the Gaussian law for $P$, for any collection points $\lbrace x_1,x_2,\ldots,x_n\rbrace$, the random variables $\lbrace a(x_1),a(x_2),\ldots,a(x_n)\rbrace$ are jointly Gaussian, resulting in a function $a$ to be drawn from a \GP.  We need to emphasize, $\lbrace a(x_1), a(x_2), \ldots, a(x_n) \rbrace$ is a collection of random variables (random vector) equipped with Lebesgue measure, and represents a discretized observation of one continuous function $a$. In practice, the joint probability distribution of the collection of the random variables is unknown a priori, and needs to be learned. 

In \SPL, one way we suggest is to learn an invertible operator $\T$ that maps a base stochastic process $\P$ to another stochastic process $\Q$ that represents the data via discretization convergence theorem (see Appendix~\ref{Apx:KET}). That is, for any collection of points $\lbrace x_1,x_2,\ldots,x_n\rbrace$, and for any $n$, the operator $\T$ maps the law on $\lbrace a(x_1),a(x_2),\ldots,a(x_n)\rbrace$ to $\lbrace u(x_1),u(x_2),\ldots,u(x_n)\rbrace$ and vice versa for the inverse $\mathcal{T}^{-1}$, where $u(x)$ is a pointwise evaluation of function data sample, i.e., 
\begin{align*}
    \lbrace u(x_1),u(x_2),\ldots,u(x_n)\rbrace = \T\left(\lbrace a(x_1),a(x_2),\ldots,a(x_n)\rbrace\right)

\end{align*}


Then, the probability of $\lbrace u(x_1),u(x_2),\ldots,u(x_n)\rbrace$, at evaluation points $\lbrace x_1,x_2,\ldots,x_n\rbrace$,  for any $n$ and collection of points on $D$ is given by,
\begin{align*}
    \Prob\left(\lbrace u(x_1),u(x_2),\ldots,u(x_n)\rbrace \right)&= \Jac\T\Big|_{\lbrace a(x_1),a(x_2),\ldots,a(x_n)\rbrace} \Prob\left(\lbrace a(x_1),a(x_2),\ldots,a(x_n)\rbrace \right)
\end{align*}
where with abuse of notation $\Prob(u(x))$ denotes the density of $u(x)$ at point $x$, same for $\Prob(a(x))$, and  similarly, following the notation in Theorem $11.1$ of \citet{villani_optimal_2009}, $\Jac\T\Big|_{\lbrace a(x_1),a(x_2),\ldots,a(x_n)\rbrace}$ is the absolute value of the Jacobian determinant of the map from the random vector $\lbrace a(x_1),a(x_2),\ldots,a(x_n)\rbrace$ at points $\lbrace x_1,x_2,\ldots,x_n\rbrace$ to the random vector $\lbrace u(x_1),u(x_2),\ldots,u(x_n)\rbrace$~\ via inverse operator $\T^{-1}$. We further show that the pushedforward~$\mathcal{Q}$ is indeed a valid stochastic process via Kolmogorov Extension Theorem (KET)~\citep{kolmogorov_foundations_2018} with a proof provided in Appendix. \ref{Apx:KET} . In \SPL, we aim to learn a neural operator $\T_\theta$ such that the resulting $\Q$ matches the data process under the true $\T$. 

\section{Model stochastic process with infinite-dimensional flow matching via Kolmogorov Extension Theorem}\label{Apx:KET}

 In operator learning, neural operators~\citep{li_fourier_2021,kovachki_neural_2023, azizzadenesheli_neural_2024} are typically designed to map an input function to an output function. When the input function is provided at a specific discretization (e.g., a set of points with their corresponding values), the model processes this discretized input as a collection of points and their values. Traditionally, in operator learning, this process is seen as an approximation of the operator's application to the underlying continuous function, where the discretization introduces approximation errors. Thus, the input is conceptually still treated as a function.

Moreover, the application of the operator to a collection of points is well-defined, and, by the discretization convergence theorem, as the number of points increases, this operation converges to a well-defined mapping. In this paper, leveraging these properties, we adopt a different perspective as described in the introduction. We extend neural operators to define explicit maps between collections of points. In this framework, the input is not the abstract function itself but rather a collection of points and their associated values. Importantly, this mapping remains well-defined regardless of the number of points in the collection and, by the discretization convergence theorem, converges to a unique mapping as the point collection approaches the underlying continuous function.

Next, we show that given an invertible operator $\mathcal{T}$ and a valid stochastic process $\mathcal{P}$ whose finite dimensional marginal is $\mathbb{P}(\{a(x_1), a(x_2), ..., a(x_n) \}$, there exist a valid stochastic process $\mathcal{Q}$ with finite-dimensional marginal
$\lbrace u(x_1),u(x_2),\ldots,u(x_n)\rbrace.$

Once again, as defined in Section~\ref{sec:spl}
\begin{align*}
    \lbrace u(x_1),u(x_2),\ldots,u(x_n)\rbrace = \T\left(\lbrace a(x_1),a(x_2),\ldots,a(x_n)\rbrace\right) 
\end{align*}

Then, the probability of $\lbrace u(x_1),u(x_2),\ldots,u(x_n)\rbrace$, at evaluation points $\lbrace x_1,x_2,\ldots,x_n\rbrace$,  for any $n$ and collection of points on $D$ is given by,
\begin{equation}
    \Prob\left(\lbrace u(x_1),u(x_2),\ldots,u(x_n)\rbrace \right)= \Jac\T\Big|_{\lbrace a(x_1),a(x_2),\ldots,a(x_n)\rbrace} \Prob\left(\lbrace a(x_1),a(x_2),\ldots,a(x_n)\rbrace \right)
    \label{eq:KET_prob}
\end{equation}
where with abuse of notation $\Prob(u(x))$ denotes the density of $u(x)$ at point $x$, same for $\Prob(a(x))$, and  similarly, following the notation in Theorem $11.1$ of \citet{villani_optimal_2009}, $\Jac\T\Big|_{\lbrace a(x_1),a(x_2),\ldots,a(x_n)\rbrace}$ is the absolute value of the Jacobian determinant of the map from the random vector Jacobian of the map from the random vector $\lbrace a(x_1),a(x_2),\ldots,a(x_n)\rbrace$ at points $\lbrace x_1,x_2,\ldots,x_n\rbrace$ to the random vector $\lbrace u(x_1),u(x_2),\ldots,u(x_n)\rbrace$ via inverse operator $\T^{-1}$. We should notice Eq.~\ref{eq:KET_prob} represents the changes of variables between two random vectors, with Lebesgue measure involved. The connection between the finite-dimensional marginal (equipped with Lebesgue measure) and the probability measure of a stochastic process in infinite-dimensional space is described by Kolmogorov Extension theorem (KET)~\citep{kolmogorov_foundations_2018}, which assures that if all finite-dimensional distributions (i.e., distributions of function at finite collection of points) are consistent, then a stochastic process exists that matches finite-dimensional distributions.

Formally, according to KET, to establish that a valid stochastic process $\Q$, which has $  \Prob\left(\lbrace u(x_1),u(x_2),\ldots,u(x_n)\rbrace \right)$ as its finite dimensional distributions, it is essential to demonstrate that such a joint distribution satisfies the following two consistency properties:

\textbf{Permutation invariance.}
For any permutation $\pi$ of $\{1, \cdots, n\}$, the joint distribution should remain invariant when elements of $\{x_1, \cdots, x_n\}$ are permuted, such that 
\begin{equation}
    \Prob\left(\lbrace u(x_1),u(x_2),\ldots,u(x_n)\rbrace \right) =     \Prob\left(\lbrace u(x_{\pi(1)}),u(x_{\pi(2)}),\ldots,u(x_{\pi(n)})\rbrace \right) 
\end{equation}

\textbf{Marginal Consistency.} This principle specifies that that if a portion of the set is marginalized, the marginal distribution will still align with the distribution defined on the original set, such that for $m \geq n$
\begin{equation}
    \Prob\left(\lbrace u(x_1),u(x_2),\ldots,u(x_n)\rbrace \right) = \int     \Prob\left(\lbrace u(x_1),u(x_2),\ldots,u(x_m)\rbrace \right) d u(x_{n+1}) \cdots d u(x_{m})
\end{equation}

The permutation invariance property is naturally upheld when utilizing operator, as there is no inherent order among the elements in the set $\lbrace x_1,x_2,\ldots,x_n\rbrace$. Furthermore, the marginal consistency property is also maintained due to the definition of operator $\T$ (see Eq. \ref{eq:KET_prob}), which ensures that $\Prob\left(\lbrace u(x_1),u(x_2),\ldots,u(x_n)\rbrace \right)$ is closed under marginalization. This is because $\Prob\left(\lbrace a(x_1),a(x_2),\ldots,a(x_n)\rbrace \right)$ is closed under marginalization, which fully determines $\Prob\left(\lbrace u(x_1),u(x_2),\ldots,u(x_n)\rbrace \right)$ through the Jacobian. While verifying that $\Q$ constitutes a valid induced stochastic process is straightforward given the $\T$, approximating the $\T$ with a neural operator is non-trivial and depends highly on the model used (related to expressiveness).  For instance, in Transforming GP~\citep{maronas_transforming_2021}, the authors employ a marginal normalizing flow, which acts as a point-wise operator to transform values from a GP to another. Consequently, the induced Jacobian is a diagonal matrix. More recently, OpFlow~\citep{shi_universal_2024} introduces an invertible neural operator by generalizing RealNVP to function space, which induces a triangular Jacobian matrix. In our work, we extend this framework to a more comprehensive case: a diffeomorphism. Here, the induced Jacobian is a full-rank matrix and is not necessarily triangular or diagonal, the determinant of the Jacobian for any collection of points is calculated through Eq~\ref{eq:OFM_logp}.

Last, we want to clarify the the connection between the notions of operator $\mathcal{T}$ and operator $\mathcal{G}$ throughout this paper. The operator $\mathcal{T}$ is the $\Phi_t$ (a diffeomporhism) defined in Eq.~\ref{eq:first}, which is the integral of $\mathcal{G}$ over time interval $[0, 1]$. Due to the nature of an ODE system, $\T$ is invertible. However, $\G$ is not necessary invertible, which enables us to parameterize it with a classical neural operator, like FNO~\citep{li_fourier_2021}.

\section{Universal Functional Regression}
UFR is concerned with Bayesian regression on function spaces~\citep{shi_universal_2024}\label{sec:UFR}, where it can be used to infer the posterior of an unknown function on a domain $D$ from a collection of pointwise observations. The observations are often corrupted with noise of variance $\sigma^2$, denoted as $\lbrace \wh u(x_1),\wh u(x_2),\ldots,\wh u(x_n)\rbrace$ or $\lbrace \wh u(x_i) \rbrace_{i=1}^n$. More specifically, for $m\geq n$ points at which the function is to be inferred, 
\begin{align*}
    \Prob\left(\lbrace u(x_1),u(x_2),\ldots,u(x_m)\rbrace \Big|\lbrace \wh u(x_1),\wh u(x_2),\ldots,\wh u(x_n)\rbrace \right)
\end{align*}
Note that when the prior over the function space is Gaussian, UFR reduces to the celebrated GP regression. Following Bayes rule, and maps between stochastic processes, we obtain the log posterior as follows, 
\begin{align*}
    \log \Prob\left(\lbrace u(x_i)\rbrace_{i=1}^m \Big|\lbrace \wh u(x_i)\rbrace_{i=1}^n \right)=&-\frac{1}{2}\sum_i^n\frac{\left(\wh u(x_i)-u(x_i)\right)^2}{\sigma^2}-n \log(\sigma) - \frac{n}{2}\log(2\pi)\\
    &+\log \Prob\left(\lbrace u(x_i)\rbrace_{i=1}^m\right) -\log \Prob\left(\lbrace \wh u(x_i)\rbrace_{i=1}^n \right)
\end{align*}


This equality holds for any collection of points. It is worth noting that the posterior is exact up to constants, i.e., the second, third, and last terms are constant. Therefore, they do not contribute in MAP estimation, mean estimation, and functional regression in general, and there is no need to compute them.

\section{Marginal (dynamic) optimal-transport flow matching in function space via optimal coupling and dynamic Kantorovich formulation}\label{apx:conditional}
Consider a joint probability measure $\pi(\nu_0, \nu_1)$ on $\mathcal{H \times H}$, where the reference measure $\nu_0$, is chosen as a Gaussian measure, whose absolute continuity is well-studied~\citep{bogachev_gaussian_1998}. We characterize $\nu_0$ by a \GP with trace-class covariance operator. e.g. $\nu_0 = \mathcal{N}(m_0, C_0)$, where $m_0$ is the mean, $C_0$ is the covariance operator. With the joint measure $\pi(\nu_0, \nu_1)$, we sample a function pair $z := (h_0, h_1)$. 

Assuming $\nu_1$ has full support on the Cameron-Martin space associated with $\nu_0$ (following the convention of literatures~\citep{kerrigan_functional_2023, lim_score-based_2023}), we construct a conditional probability measure $\mu_t(\cdot|z)$ as a Gaussian measure with trace-class covariance operator and small operator norm to approximate Dirac measures in the sense of weak convergence. Such that, at $t=0$ and $t=1$, $\mu_t(\cdot|z)$ is a centered around $h_0, h_1$, approximating $\delta_{h_0},\delta_{h_1}$ respectively; Subsequently, we can construct a new marginal probability measure by mixing these approximated Dirac measures:
\begin{equation}
    \mu_t(A) = \int \mu_t({A|z}) d\pi(z), ~\forall A \in \mathcal{B(H)}
    \label{eq:OFM_dirac}
\end{equation}

Due to $d\pi(z)$ being always positive, the conditional probability measure (Dirac measure approximated by Gaussian measure) is absolutely continuous with respect to $\mu_t$. Eq. \ref{eq:OFM_dirac} indicates that $ \mu_0 = \int \delta_{h_0} d\pi(z) \approx \nu_0$, and $ \mu_1 = \int \delta_{h_1} d\pi(z) \approx \nu_1
$. This formulation suggests that $\mu_0, \mu_1$ represent convolutions of $\nu_0, \nu_1$ with Gaussian measures. For a more detailed discussion on convolution with Gaussian measures, we refer the readers to Appendix B.1 of ~\citet{lim_score-based_2023}.

Please note, dirac measure is not a necessary condition for Eq.~\ref{eq:OFM_dirac}; the only constraint is the boundary conditions. One viable choice for the conditional measure is a Gaussian measure with a small operator norm, which (in fact) approximates the Dirac measure in the weak convergence sense. Alternative probability path, as described in~\citet{lipman_flow_2023, albergo_building_2023, liu_flow_2022} are equally valid.

Suppose $\int_0^1 \int_{\mathcal{H}} \int_{\mathcal{H \times H}} \lVert \G_t(h|z) \rVert d\mu_t(h|z) d\pi(z) dt$ is finite to guarantee the vector field is sufficiently regular (Lipschitz continuity), where $\G_t(\cdot|z)$ is the conditional vector field. Under this condition, the vector field that generates $\mu_t$ as specified in Eq. \ref{eq:OFM_dirac} and Eq. \ref{eq:OFM_weak_PDE} can be expanded as follows :
\begin{equation}
    \G_t(h) = \int_{\mathcal{H \times H}} \G_t(h|z) \frac{d \mu_t(\cdot|z)}{d \mu_t}(h) d\pi(z)
    \label{eq:expand}
\end{equation}
Eq. \ref{eq:expand} is an extension of the Theorem 1 as detailed in~\citet{kerrigan_functional_2023}, and we provide the derivation in Appendix~\ref{Apx:derivation_2}. We note that $\mu_t(\cdot|z)$ is a Gaussian measure and can be expressed as $\mu_t(\cdot|z) = \mathcal{N}(m_t, C_t)$, with mean $m_t$ and trace-class covariance operator $C_t$. Inspired by~\citet{tong_improving_2024}, 
we choose $m_t$ and $C_t$ to have the following forms: 
\begin{equation}
    m_t = t\cdot h_1 + (1-t) \cdot h_0
\end{equation}
\begin{equation}
    C_t = \sigma_{\text{min}}^2 C_0
\end{equation}
where $C_0$ is the same Gaussian covariance operator defined for $\nu_0$ and $\sigma_{\text{min}}$ is a small constant. Further, similar to finite-dimensional flow matching, we only consider the simplest vector field that applies a canonical transformation for Gaussian measures, such that the flow has the form: $\Phi_t(h_0|z) = m_t + \sigma_{\text{min}} h_0 \approx  t\cdot h_1 + (1-t) \cdot h_0$. From Eq. \ref{eq:OFM_ODE}, we can get $\G_t(h|z) = h_1 - h_0$, indicating $\G_t(h|z)$ is independent of the time $t$ and the path from $h_0$ to $h_1$ is a direct, straight line. Equipped with well-constructed conditional vector field and probability measures, we can train a neural operator $\G_{\theta}$ with the conditional flow matching loss
\begin{equation}
    \mathcal{L}_{\text{CFM}}^{\dagger} = \mathbb{E}_{t\sim\mathcal{U}[0,1], h\sim\mu_t, z \sim \pi(\nu_0,\nu_1)} \lVert \G_{\theta}(t,h) - \G_t(h|z)\rVert^2
    \label{eq:OFM_CFM_loss}
\end{equation}
Next, we explore how to approximate the true optimal transport plan from optimal coupling  of the joint measure $\pi(\nu_0, \nu_1)$. A common way for measuring the distance between two probability measure is 2-Wasserstein distance, which a special case of static Kantorovich formulation~\citep{kantorovich_space_1958}. The static 2-Wasserstein distance is defined as follows
\begin{equation}
    W_{\text{sta}}(\nu_0, \nu_1)^2_2 = \inf_{\pi \in \Pi} \int_{\mathcal{H\times H}} \lVert h_0 - h_1 \rVert^2 d\pi(h_0, h_1)
    \label{eq:OFM_static_OT}
\end{equation}
In the ODE framework, we also care about the  dynamic form of the 2-Wasserstein distance to estimate the cost along the transport trajectory, which also is a special case of dynamic Kantorovich formulation~\citep{chizat_unbalanced_2018}.
\begin{equation}
    W_{\text{dyn}}(\nu_0, \nu_1)_2^2 = \inf_{\mu_t, \G_t}\int_{\mathcal{H}}\int_0^1 \lVert \G_t(h) \rVert^2 d\mu_t(h) dt
    \label{eq:OFM_dynamic_OT}
\end{equation}
Within the OFM framework, the marginal probability measure is a sum of Dirac measures as described in Eq. \ref{eq:OFM_dirac}, and we selected $\nu_0$ as a Gaussian measure and assumed $\nu_1$ has full support on the Cameron-Martin space associated with $\nu_0$. Furthermore, the cost function of 2-Wasserstein distance is squared $L^2$ norm, which is continuous by nature. According to Theorem 4.3 and Lemma 4.4 of~\citet{chizat_unbalanced_2018}, $W_{\text{sta}} = W_{\text{dyn}}$ for our specifically constructed $\mu_t$ and $\G_t$ in the sense of weak convergence. Therefore, to get the dynamic optimal transport plan, we only need to find a joint measure $\pi(\nu_0, \nu_1)$ that achieves the infimum in Eq. \ref{eq:OFM_static_OT}. In practice, we use a minibatch approximation of optimal coupling between $\nu_0$ and $\nu_1$. The above approach extends the dynamic (marginal) optimal transport framework of~\citep{tong_improving_2024} to infinite-dimensional function space. The related work of~\citet{kerrigan_dynamic_2024} addresses a similar problem, but from a different perspective. For a detailed comparison, please refer to Appendix~\ref{Apx:analysis}.

\section{Derivation of Eq.~\ref{eq:expand}}\label{Apx:derivation_2}

In this part, we show the derivation of Eq.~\ref{eq:expand}, which extends Theorem 1 of~\citet{kerrigan_functional_2023}. The problem setting is given continuity equation and its weak form:

\begin{equation}
    \int_0^1 \int_{\mathcal{H}} \frac{\partial \varphi(h,t)}{\partial t} + \langle \G_t(h), \nabla_h \varphi(h,t) \rangle d\mu_t(h) dt = 0, ~~~\forall \varphi \in \text{Cyl}(\mathcal{H} \times [0,1])
    \label{app:weak_form}
\end{equation}
we want to derive the following form of the conditional vector field under absolute continuity assumption and other mild conditions, where $z:=(h_0, h_1) \in \mathcal{H}\times\mathcal{H}$.
\begin{equation}
    \G_t(h) = \int_{\mathcal{H \times H}} \G_t(h|z) \frac{d \mu_t(\cdot|z)}{d \mu_t}(h) d\pi(z)
\end{equation}

First, $\int_0^1 \int_{\mathcal{H}} \frac{\partial \varphi(h,t)}{\partial t} d\mu_t(h) dt =  \int_0^1 \int_{\mathcal{H}} \int_{z} \frac{\partial \varphi(h,t)}{\partial t} d\mu_t(h|z) d \pi(z) dt$. With continuity equation in strong form and the fact that $\G_t(h|z)$ induces $\mu_t(h|z)$ we have:
\begin{align*}
    \int_0^1 \int_{\mathcal{H}} \int_{z} \frac{\partial \varphi(h,t)}{\partial t} d\mu_t(h|z) d \pi(z) dt =      \int_0^1 \int_{\mathcal{H}} \int_{z} - \nabla \cdot (\varphi(h,t) \G_t(h|z)) d\mu_t(h|z) d \pi(z) dt 
\end{align*}

By the divergence-form identity:
\begin{align*}
    \nabla \cdot (\varphi(h,t) \G_t(h|z)) =  \langle \G_t(h), \nabla_h \varphi(h,t)  \rangle + \varphi(h,t) \nabla \cdot \G_t(h|z)
\end{align*}

Since we choose the smooth test  function $\varphi(h,t)$ from $\text{Cyl}(\mathcal{H} \times [0,1])$ and use the continuity equation in weak form, we assume term $\varphi(h,t) \nabla \cdot \G_t(h|z)$ disappears under integration. Thus we have 

\begin{align*}
    \int_0^1 \int_{\mathcal{H}} \frac{\partial \varphi(h,t)}{\partial t} d\mu_t(h) dt &= -\int_0^1 \int_{\mathcal{H}} \int_{z} \langle \G_t(h|z), \nabla_h \varphi(h,t)\rangle d\mu_t(h|z) d \pi(z) dt \\
    & = - \int_0^1 \int_{\mathcal{H}} \int_{z} \langle \G_t(h|z), \nabla_h \varphi(h,t)\rangle \frac{d\mu_t(h|z)}{d{\mu_t(h)}} d\mu_t(h) d \pi(z) dt \\
    & = - \int_0^1 \int_{\mathcal{H}} \int_{z} \langle \G_t(h|z)\frac{d\mu_t(h|z)}{d{\mu_t(h)}}, \nabla_h \varphi(h,t)\rangle  d\mu_t(h) d \pi(z) dt \\
    & = - \int_0^1 \int_{\mathcal{H}} \int_{z} \langle \G_t(h|z)\frac{d\mu_t(\cdot|z)}{d{\mu_t}}(h)d \pi(z) , \nabla_h \varphi(h,t)\rangle  d\mu_t(h) dt   \\
    & = - \int_0^1 \int_{\mathcal{H}} \langle \int_{z} \G_t(h|z)\frac{d\mu_t(\cdot|z)}{d{\mu_t}}(h)d \pi(z) , \nabla_h \varphi(h,t)\rangle  d\mu_t(h) dt 
\end{align*}

On the other side, from Eq~\ref{app:weak_form}, we have 
\begin{align*}
    \int_0^1 \int_{\mathcal{H}} \frac{\partial \varphi(h,t)}{\partial t} d\mu_t(h) dt = -  \int_0^1 \int_{\mathcal{H}}\langle \G_t(h), \nabla_h \varphi(h,t) \rangle d\mu_t(h) dt = 0, ~~~\forall \varphi \in \text{Cyl}(\mathcal{H} \times [0,1])
    \label{app:weak_form}

\end{align*}

Thus $    \G_t(h) = \int_{z} \G_t(h|z) \frac{d \mu_t(\cdot|z)}{d \mu_t}(h) d\pi(z) = \int_{\mathcal{H \times H}} \G_t(h|z) \frac{d \mu_t(\cdot|z)}{d \mu_t}(h) d\pi(z)$

\section{Derivation of Eq.~\ref{eq:generalize_cond_u}}\label{Apx:derivation}

In this part, we show the detailed derivation of Eq.~\ref{eq:generalize_cond_u}. In Flow Matching, the variable $z$ is chosen as a single data point from the coupling $\pi(u_0, u_1)$ where $u_1 \sim q_1$, and $u_0  \sim q_0 = \N\left(\textbf{0},K\left(\lbrace x_1,x_2,\ldots,x_n\rbrace \right)\right)$. Considering the class of Gaussian conditional probability paths 
\begin{equation}
p_t(u_t|z) = \mathcal{N}(u_t|m_t(z), \sigma_t(z)^2K\left(\lbrace x_1,x_2,\ldots,x_n\rbrace \right)) 
\label{eq:append_cond_p}
\end{equation}
With conditional flow $\phi_t(u_t|z) = \sigma_t u_0 + m_t$. Specially, we choose $m_t = t u_1 + (1-t)u_0$ and $\sigma_t = \sigma $, where $\sigma > 0$ is a small constant. From Eq.~\ref{eq:OFM_ODE} (or Theorem 3 of~\citet{lipman_flow_2023}), a vector that defines the Gaussian conditional flow is  : 
\begin{equation}
    \G_t(u_t|z) = \frac{\sigma_t^{\prime}}{\sigma_t}(u_t - m_t) + m_t^{\prime}(u_1)
    \label{eq:append_cond_u}
\end{equation}

Then we can derive a closed-form expression for both the  conditional probability and corresponding vector field~\citep{tong_improving_2024} by plug in $\mu_t$ and $\sigma_t$ into Eq.~\ref{eq:append_cond_p} and Eq.~\ref{eq:append_cond_u}

\begin{equation} 
    p_t(u_t|z) = \mathcal{N}(u_t|tu_1 + (1-t)u_0, \sigma^2K\left(\lbrace x_1,x_2,\ldots,x_n\rbrace \right))
\end{equation}
\begin{equation}
    \G_t(u_t|u_1) = 0 +  (u_1 - u_0) = u_1 - u_0
\end{equation}

Now, let's check the boundary conditions. At $t=0$, 
\begin{equation}
    p_0(u_t|z) = \mathcal{N}(u_t|u_0, \sigma^2K\left(\lbrace x_1,x_2,\ldots,x_n\rbrace \right) \xrightarrow{\sigma \rightarrow 0} \delta_{u_0}
    \label{eq:appen_p_0}
\end{equation}
At $t=1$, 
\begin{equation}
    p_1(u_t|z) = \mathcal{N}(u_t|u_1, \sigma^2 K\left(\lbrace x_1,x_2,\ldots,x_n\rbrace \right) \xrightarrow{\sigma \rightarrow 0} \delta_{u_1}
    \label{eq:appen_p_1}
\end{equation}

From Eq. \ref{eq:int_CFM}, we have $p_0(u_0) = \int p_0(u_t|z) \pi(z) dz = \int \delta_{u_0} \pi(u_0, u_1)d u_0 d u_1 = q_0$ and $p_1(u_1) = \int p_1(u_t|z) \pi(z) dz = \int \delta_{u_1} \pi(u_0, u_1)d u_0 d u_1 = q_1$, which show boundary conditions are satisfied.

\section{Proof of Proposition~\ref{prop:posterior}}\label{app:proof}
\noindent\textbf{Proposition~\ref{prop:posterior}.} Given noisy observations $\lbrace \wh u(x_i) \rbrace_{i=1}^n $, the posterior distribution is 
\begin{equation}
    \log \Prob \left( \lbrace u(x_i) \rbrace_{i=1}^m \Big| \lbrace \wh u(x_i) \rbrace_{i=1}^n \right) =  -\frac{\sum_{i=1}^n \lVert \wh u(x_i) - u(x_i) \rVert^2}{2\sigma^2} + \log \Prob \left( \lbrace u(x_i) \rbrace_{i=1}^m \right) + C  
    \label{eq:closed_posterior}
\end{equation}

Where the constant $C = - \frac{n}{2}\log(2\pi\sigma^2) -\log \Prob \left( \lbrace \wh u(x_i) \rbrace_{i=1}^n \right)$.

\begin{proof} With Bayes rule, we have:
    
\begin{equation}
    \Prob \left( \lbrace u(x_i) \rbrace_{i=1}^m \Big| \lbrace \wh u(x_i) \rbrace_{i=1}^n \right) =  \frac{\Prob \left( \lbrace \wh u(x_i) \rbrace_{i=1}^n \Big| \lbrace u(x_i) \rbrace_{i=1}^m \right) \cdot \Prob \left( \lbrace u(x_i) \rbrace_{i=1}^m \right)}{\Prob \left( \lbrace \wh u(x_i) \rbrace_{i=1}^n \right)}
    \label{eq:UFR_posterior}
\end{equation}
Taking the logarithm of Eq. \ref{eq:UFR_posterior}, we have:
\begin{multline}
    \log \Prob \left( \lbrace u(x_i) \rbrace_{i=1}^m \Big| \lbrace \wh u(x_i) \rbrace_{i=1}^n \right) = \log \Prob \left( \lbrace \wh u(x_i) \rbrace_{i=1}^n \Big| \lbrace u(x_i) \rbrace_{i=1}^m \right) + \log \Prob \left( \lbrace u(x_i) \rbrace_{i=1}^m \right) - \\ \log \Prob \left( \lbrace \wh u(x_i) \rbrace_{i=1}^n \right)
    \label{eq:log_posterior}    
\end{multline}

Given $\epsilon_i \sim \mathcal{N}(0,\sigma^2)$ and $\lbrace \epsilon_i \rbrace_{i=1}^n$ is a multivariate Gaussian, then $ \lbrace \wh u(x_i) \rbrace_{i=1}^n \Big| \lbrace u(x_i) \rbrace_{i=1}^n$ is a shifted multivariate Gaussian with mean $\lbrace u(x_i) \rbrace_{i=1}^n$ translated from the original multivariate Gaussian $\lbrace \epsilon_i \rbrace_{i=1}^n$. Due to the translation invariance property of Gaussian distribution, We have :
\begin{equation}
    \log \Prob \left( \lbrace \wh u(x_i) \rbrace_{i=1}^n \Big| \lbrace u(x_i) \rbrace_{i=1}^n \right) = \log \Prob \left( \lbrace \epsilon_i \rbrace_{i=1}^n\right)
    = -\frac{\sum_{i=1}^n \lVert \wh u(x_i) - u(x_i) \rVert^2}{2\sigma^2} - \frac{n}{2}\log(2\pi\sigma^2)
\end{equation} 
We notice $m > n$ and $\lbrace \wh u(x_i) \rbrace_{i=1}^n$ only depends on $\lbrace u(x_i) \rbrace_{i=1}^n$, and doesn't depend on $\lbrace u(x_i) \rbrace_{i=n+1}^m$. Thus $  \log \Prob \left( \lbrace \wh u(x_i) \rbrace_{i=1}^n \Big| \lbrace u(x_i) \rbrace_{i=1}^m \right) =\log \Prob \left( \lbrace \wh u(x_i) \rbrace_{i=1}^n \Big| \lbrace u(x_i) \rbrace_{i=1}^n \right) $.

For evaluating $\log \Prob \left( \lbrace u(x_i) \rbrace_{i=1}^m \right)$, which is the second part on the right-hand side of Eq. \ref{eq:log_posterior}, we can efficiently calculate it with the trace estimator. The third part on the right hand side of Eq. \ref{eq:log_posterior} ($\log \Prob \left( \lbrace \wh u(x_i) \rbrace_{i=1}^n \right)$) represents the evidence and is constant. Thus the posterior distribution of Eq \ref{eq:log_posterior} can be simplified as: 
\begin{equation}
    \log \Prob \left( \lbrace u(x_i) \rbrace_{i=1}^m \Big| \lbrace \wh u(x_i) \rbrace_{i=1}^n \right) =  -\frac{\sum_{i=1}^n \lVert \wh u(x_i) - u(x_i) \rVert^2}{2\sigma^2} + \log \Prob \left( \lbrace u(x_i) \rbrace_{i=1}^m \right) + C  
    \label{eq:closed_posterior}
\end{equation}

Where the constant $C = - \frac{n}{2}\log(2\pi\sigma^2) -\log \Prob \left( \lbrace \wh u(x_i) \rbrace_{i=1}^n \right)$.
\end{proof}
\clearpage
\newpage

\section{Example of Posterior Samples}

In this section, we initially present the regression result of \alg in another additional N-S scenario, as illustrated in Fig~\ref{fig:NS_reg_second}. Subsequently, we display more posterior samples used in the 2D regression examples. As depicted in Fig~\ref{fig:OFM_reg_scen_NS},~\ref{fig:OFM_reg_scen_bh},~\ref{fig:OFM_reg_scen_sdf}, \alg  successfully generates realistic posterior samples that are consistent with the ground truth and demonstrate appropriate variability. In contrast, GP regression fails to produce explainable posterior samples. 

\begin{figure*}[ht]
\vspace*{-.2cm}
\hspace*{-.5cm}  
    \centering
    \begin{subfigure}{0.19\textwidth}
        \includegraphics[height=0.7\textwidth,trim={0.5cm, 0.2cm, 0.4cm, 1.15cm},clip]{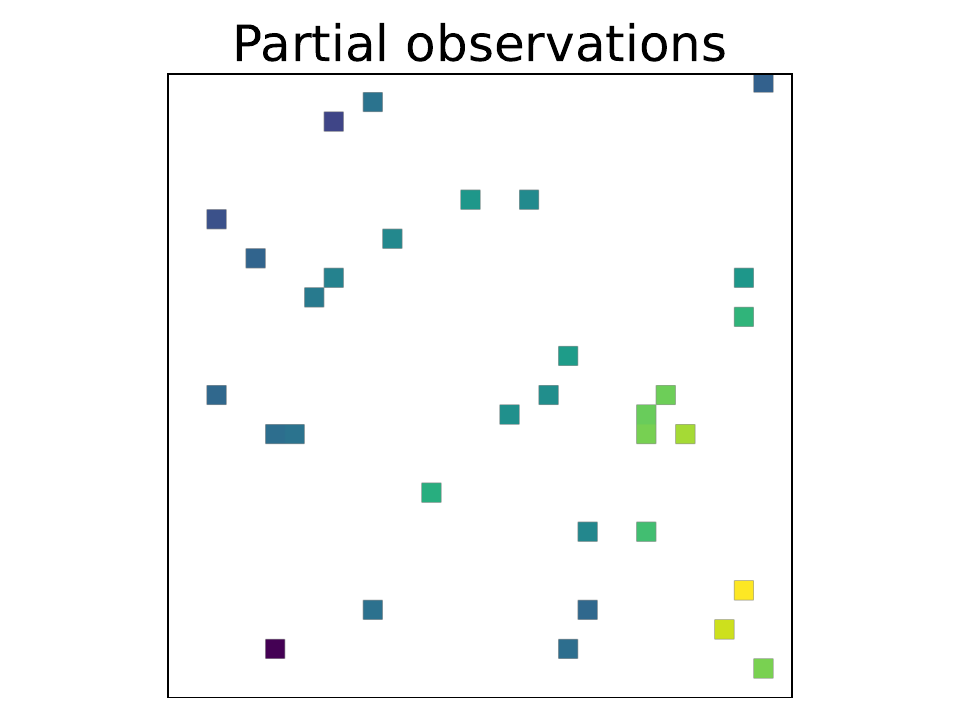}  
        \vspace*{-.6cm}
        \caption{Observations}
    \end{subfigure}
    \begin{subfigure}{0.19\textwidth}
        \centering
        \includegraphics[height=.7\textwidth,trim={0.5cm, 0.2cm, 0.4cm, 1.15cm},clip]{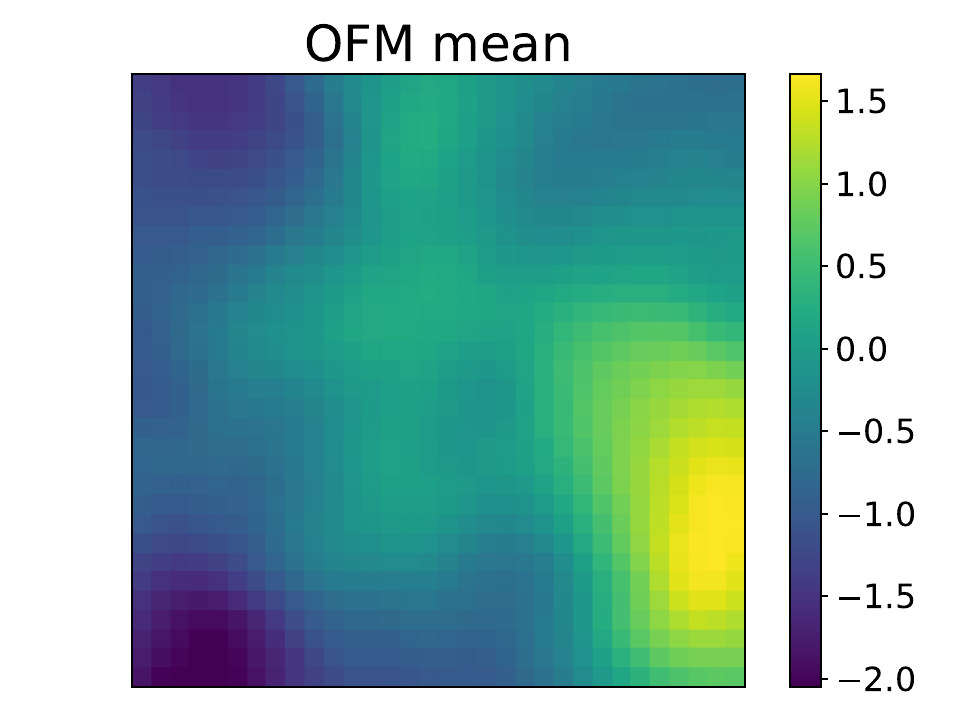}
        \vspace*{-.6cm}
        \caption{OFM mean}
    \end{subfigure}
    \begin{subfigure}{0.19\textwidth}
        \includegraphics[height=.7\textwidth,trim={0.5cm, 0.2cm, 0.4cm, 1.15cm},clip]{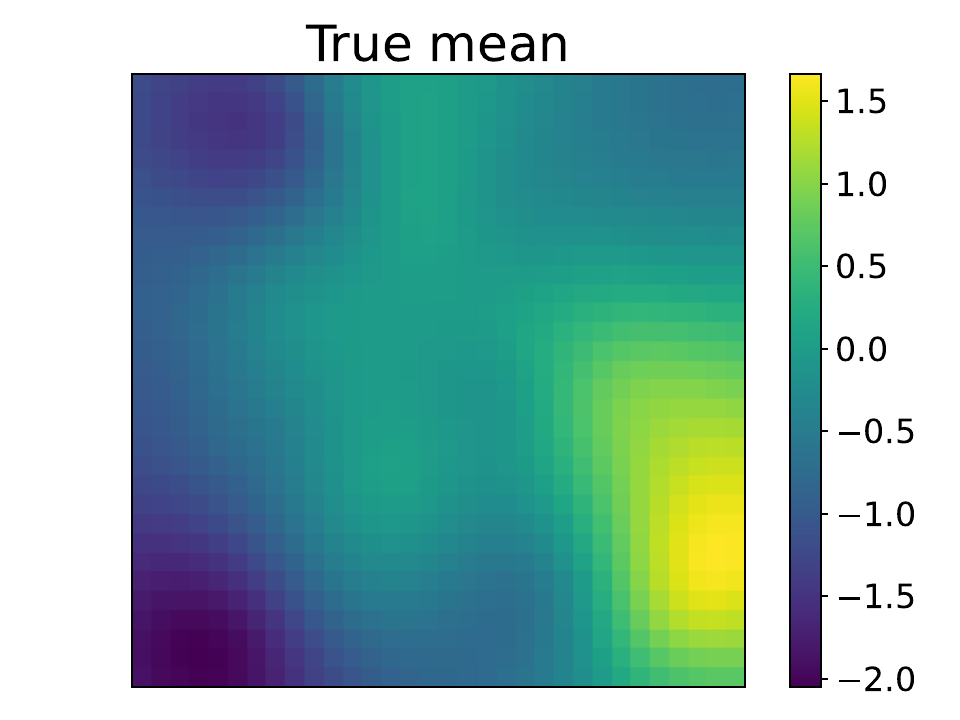}
        \vspace*{-.6cm}
        \caption{True mean}
    \end{subfigure}
    \begin{subfigure}{0.19\textwidth}
        \includegraphics[height=.7\textwidth,trim={0.5cm, 0.2cm, 0.4cm, 1.15cm},clip]{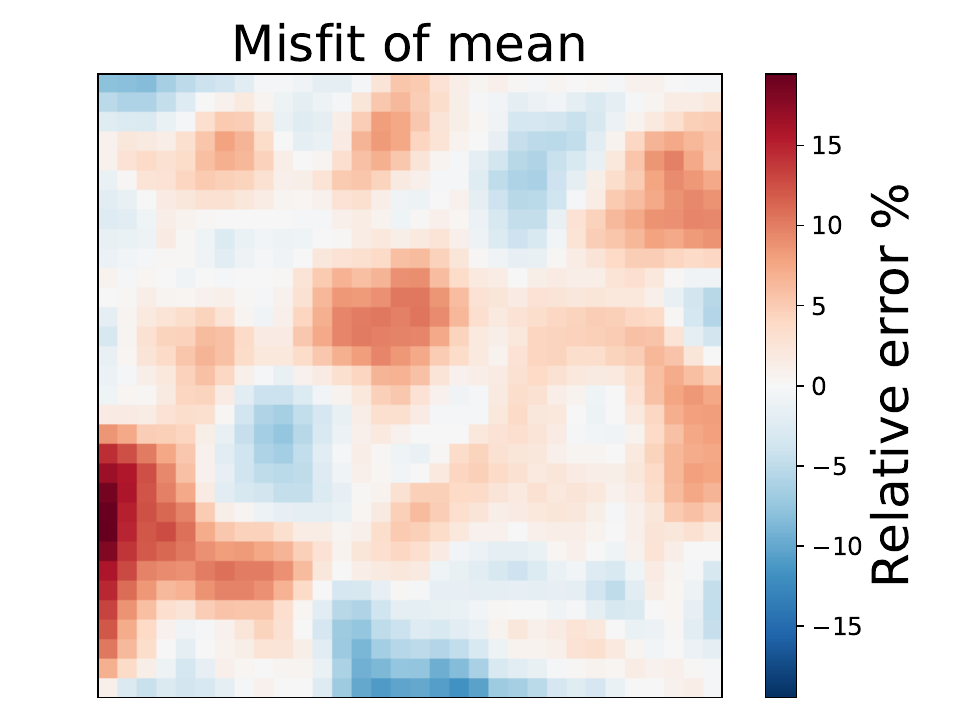}
        \vspace*{-.6cm}
        \caption{Misfit of mean}
    \end{subfigure}
    \begin{subfigure}{0.19\textwidth}
        \includegraphics[height=.7\textwidth,trim={0.5cm, 0.2cm, 0.4cm, 1.15cm},clip]{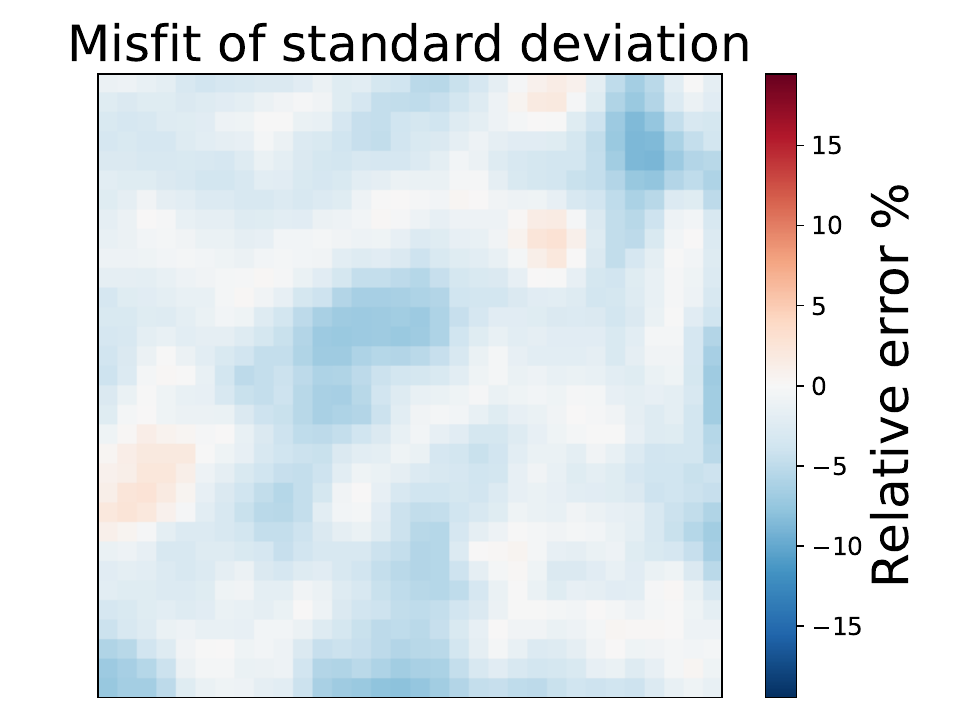}
        \vspace*{-.6cm}
        \caption{Misfit of std}
    \end{subfigure}
    
    \hspace{-2cm}   
    \begin{subfigure}{0.35\textwidth}
        \includegraphics[height=.45\textwidth]{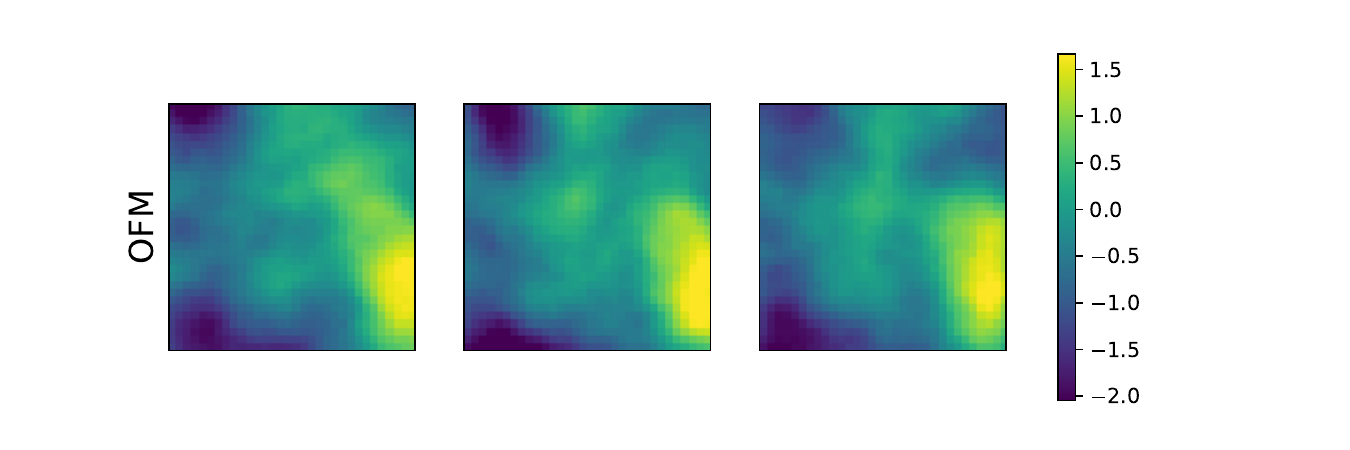}
        \vspace*{-1.cm}
        \caption{Samples from OFM}
    \end{subfigure}
    \hspace{0cm}   
    \begin{subfigure}{0.35\textwidth}
        \includegraphics[height=.45\textwidth]{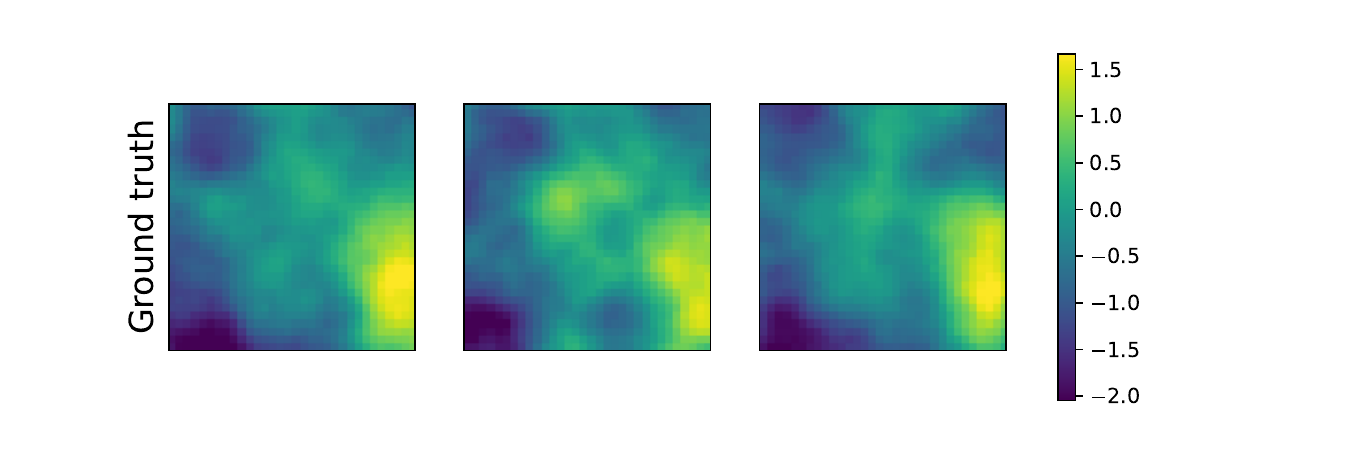}
        \vspace*{-1.cm}
        \caption{Sample from GP regression}
    \end{subfigure}

    \caption{ \alg regression on 2D GP data with resolution 32$\times$32. (a) 32 random observations. (b) Predicted mean from \alg. (c) Ground truth mean from GP regression. (d) Misfit of the predicted mean. (e) Misfit of predicted standard deviation. (f) Predicted samples from \alg. (g) Predicted samples from GP regression. }
    \label{fig:GRF_OFM_reg}
\end{figure*}
\begin{figure*}[ht]
\vspace*{-.2cm}
\hspace*{-.4cm}  
    \centering
    \begin{subfigure}{0.19\textwidth}
        \includegraphics[height=0.7\textwidth,trim={0.5cm, 0.2cm, 0.4cm, 1.15cm},clip]{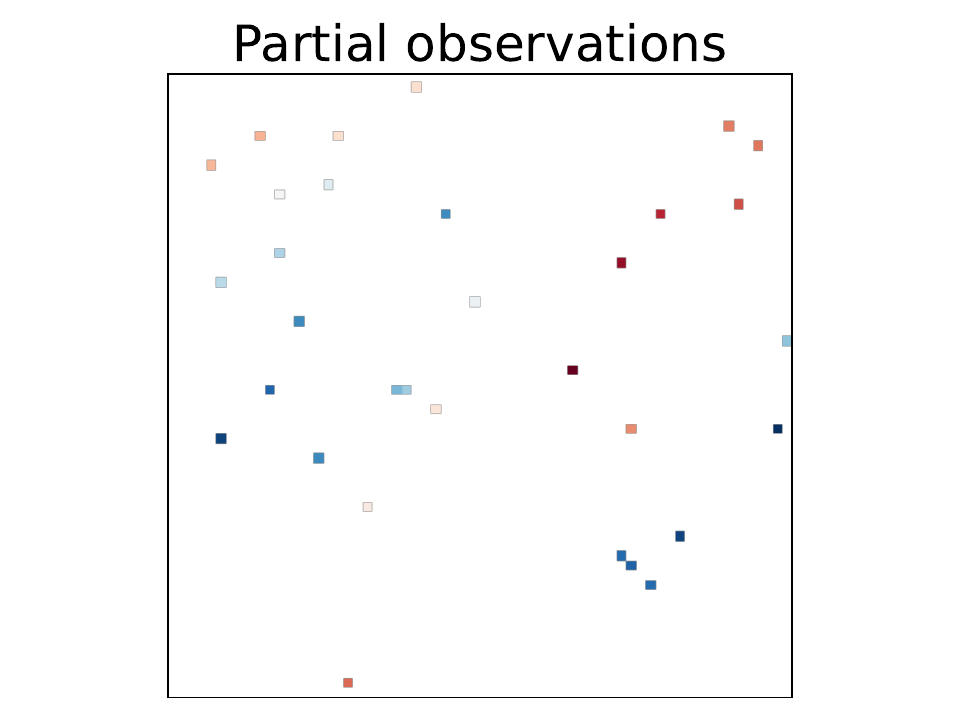}  
        \vspace*{-.6cm}
        \caption{Observations}
    \end{subfigure}
    \begin{subfigure}{0.19\textwidth}
        \centering
        \includegraphics[height=.7\textwidth,trim={0.5cm, 0.2cm, 0.4cm, 1.15cm},clip]{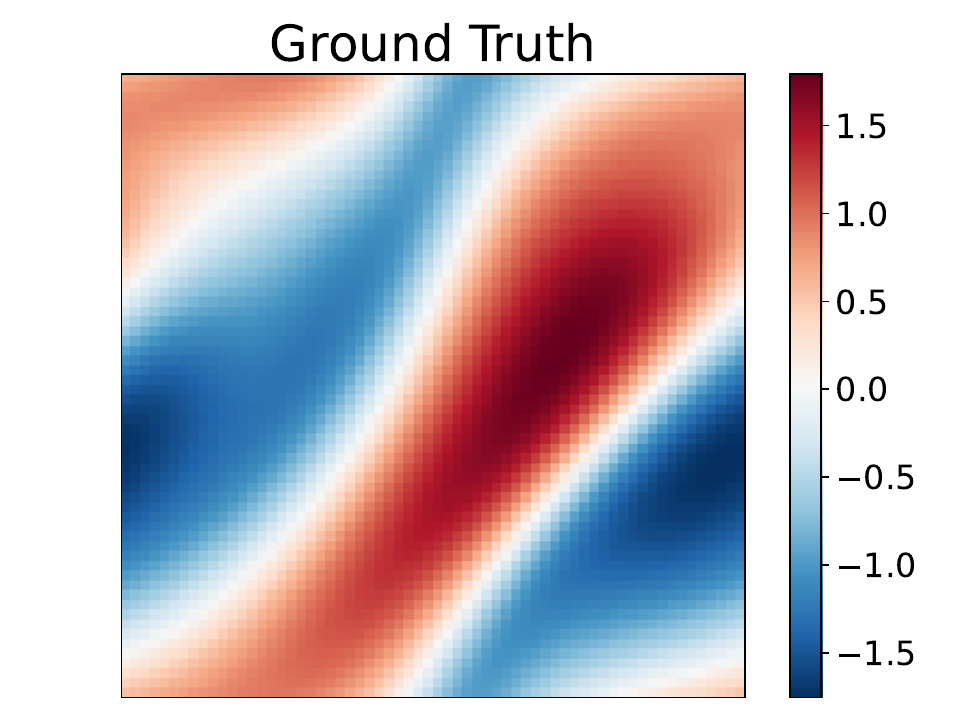}
        \vspace*{-.6cm}
        \caption{Ground truth}
    \end{subfigure}
    \begin{subfigure}{0.19\textwidth}
        \includegraphics[height=.7\textwidth,trim={0.5cm, 0.2cm, 0.4cm, 1.15cm},clip]{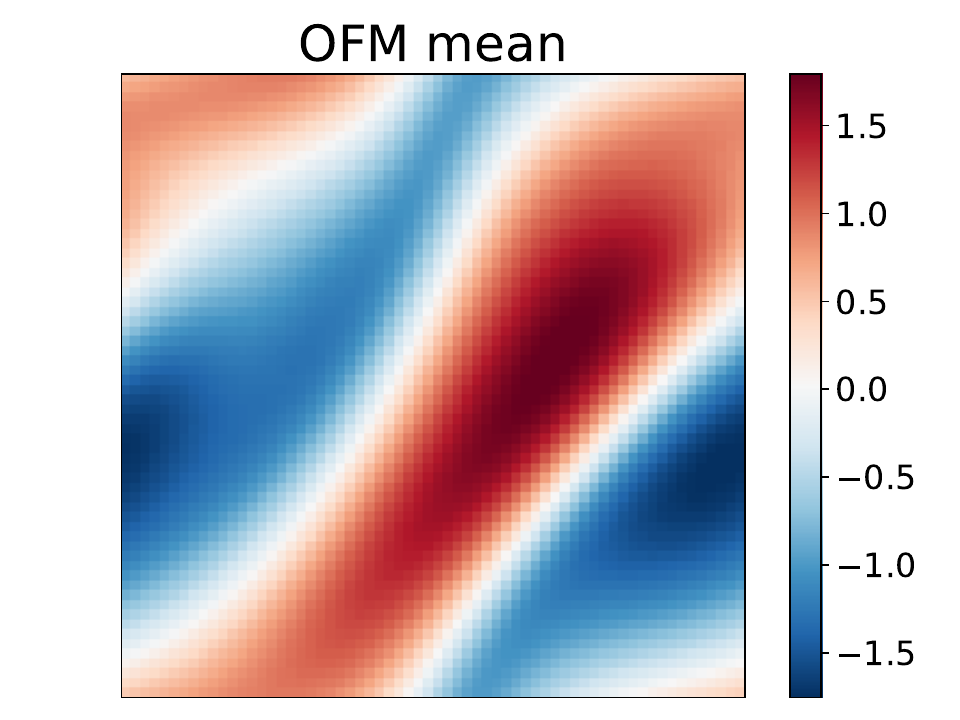}
        \vspace*{-.6cm}
        \caption{OFM mean}
    \end{subfigure}
    \begin{subfigure}{0.19\textwidth}
        \includegraphics[height=.7\textwidth,trim={0.5cm, 0.2cm, 0.4cm, 1.15cm},clip]{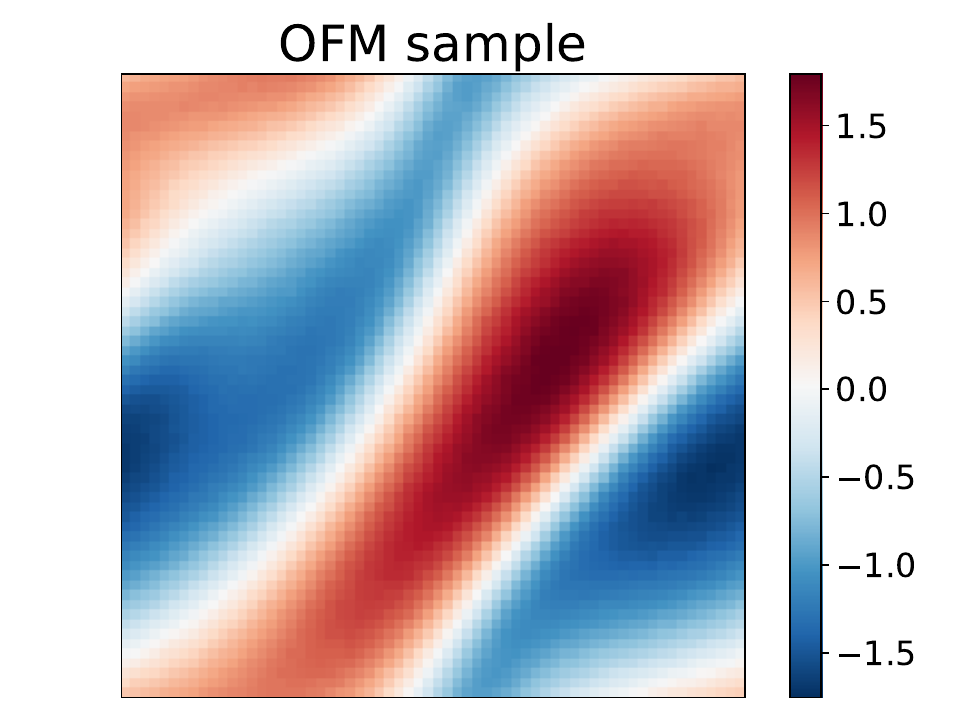}
        \vspace*{-.6cm}
        \caption{OFM sample}
    \end{subfigure}
    \begin{subfigure}{0.19\textwidth}
        \includegraphics[height=.7\textwidth,trim={0.5cm, 0.2cm, 0.4cm, 1.15cm},clip]{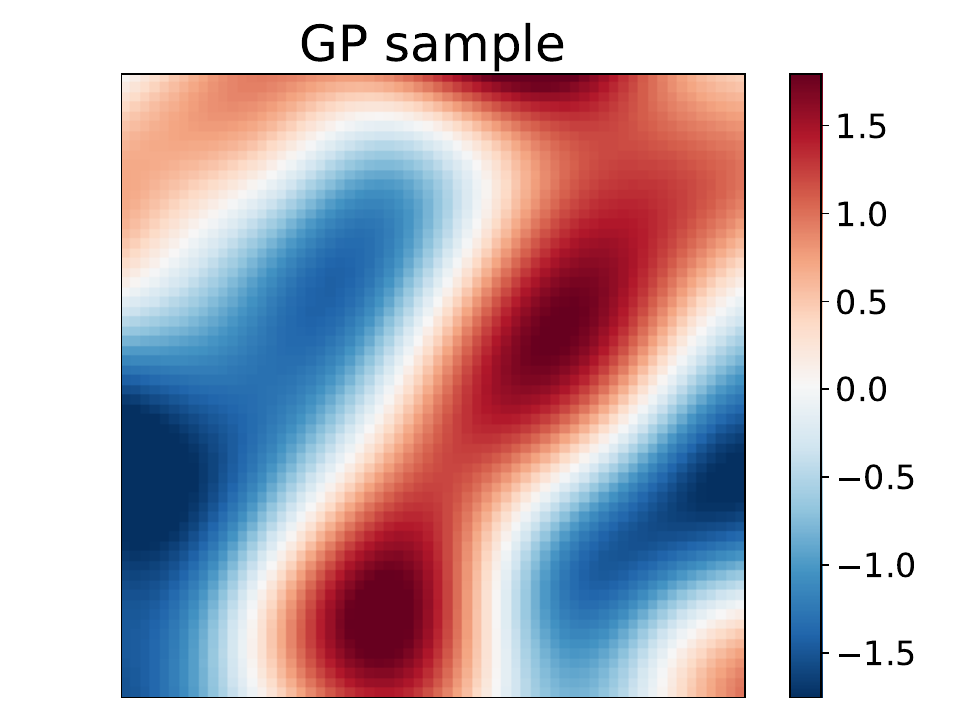}
        \vspace*{-.6cm}
        \caption{GP sample}
    \end{subfigure}

    \caption{ \alg regression on  Navier-Stokes functional data with resolution $64\times64$. (a) 32 random observations. (b) Ground truth sample (c) Predicted mean from \alg. (d) One posterior sample from \alg. (e) One posterior sample from best fitted GP. }
    \label{fig:NS_reg_second}
\end{figure*}
\begin{figure*}[ht]
\vspace*{-.2cm}
\hspace*{-.4cm}  
    \centering
    \begin{subfigure}{0.19\textwidth}
        \includegraphics[height=0.7\textwidth,trim={0.5cm, 0.2cm, 0.4cm, 1.15cm},clip]{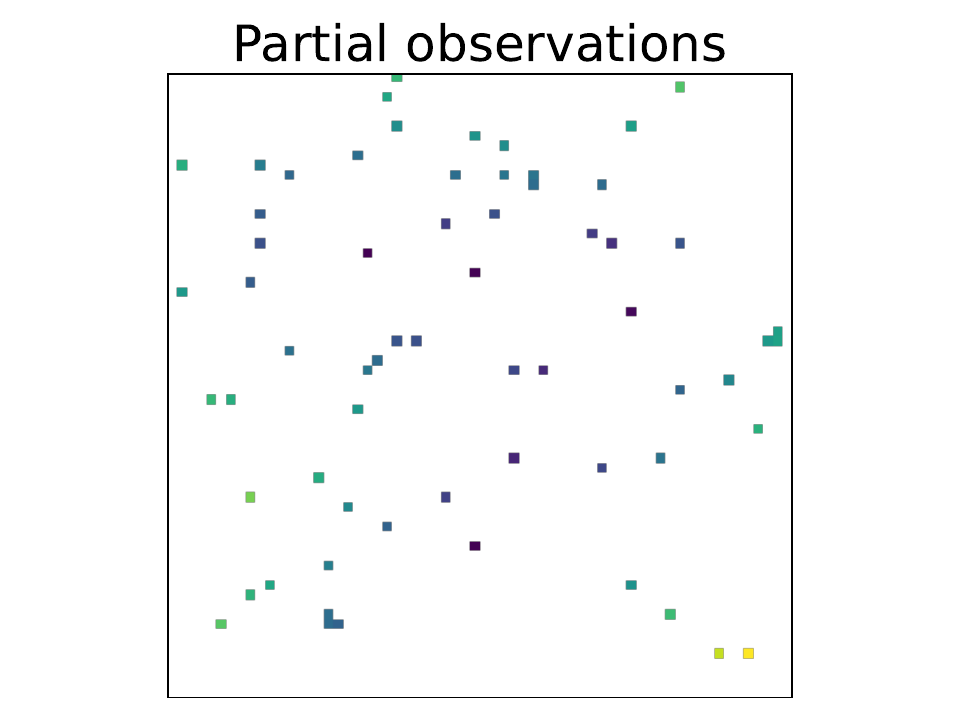}  
        \vspace*{-.6cm}
        \caption{Observations}
    \end{subfigure}
    \begin{subfigure}{0.19\textwidth}
        \centering
        \includegraphics[height=.7\textwidth,trim={0.5cm, 0.2cm, 0.4cm, 1.15cm},clip]{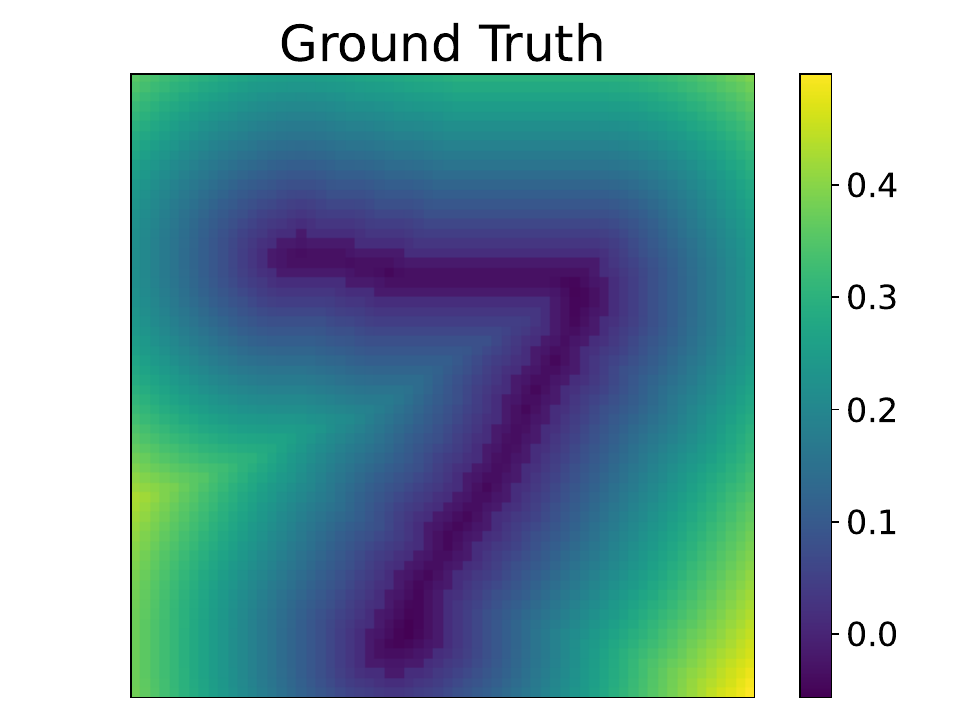}
        \vspace*{-.6cm}
        \caption{Ground truth}
    \end{subfigure}
    \begin{subfigure}{0.19\textwidth}
        \includegraphics[height=.7\textwidth,trim={0.5cm, 0.2cm, 0.4cm, 1.15cm},clip]{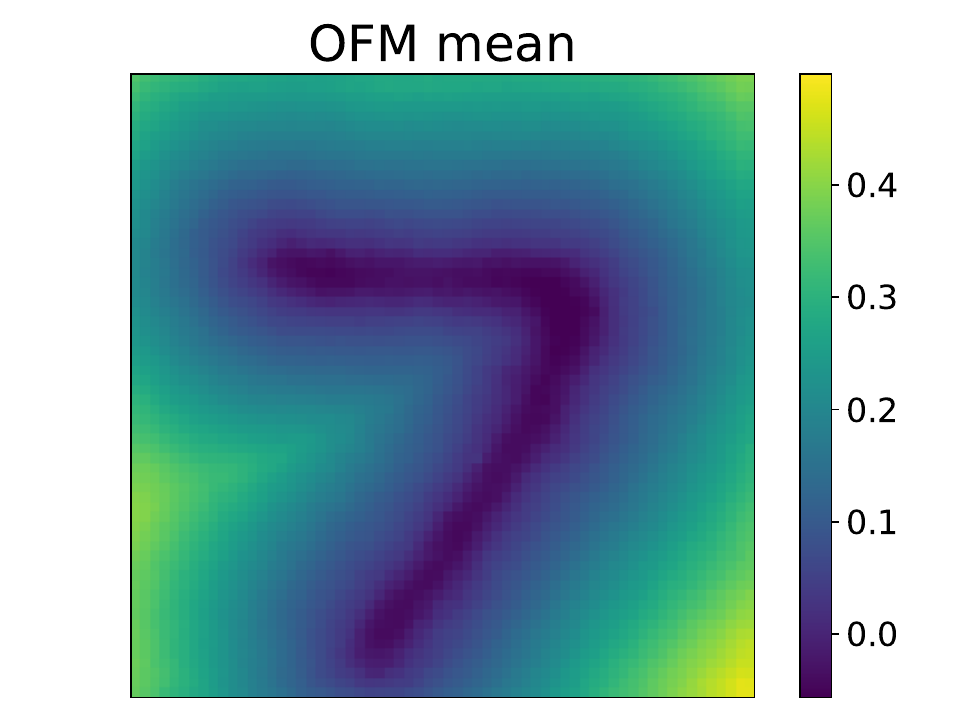}
        \vspace*{-.6cm}
        \caption{OFM mean}
    \end{subfigure}
    \begin{subfigure}{0.19\textwidth}
        \includegraphics[height=.7\textwidth,trim={0.5cm, 0.2cm, 0.4cm, 1.15cm},clip]{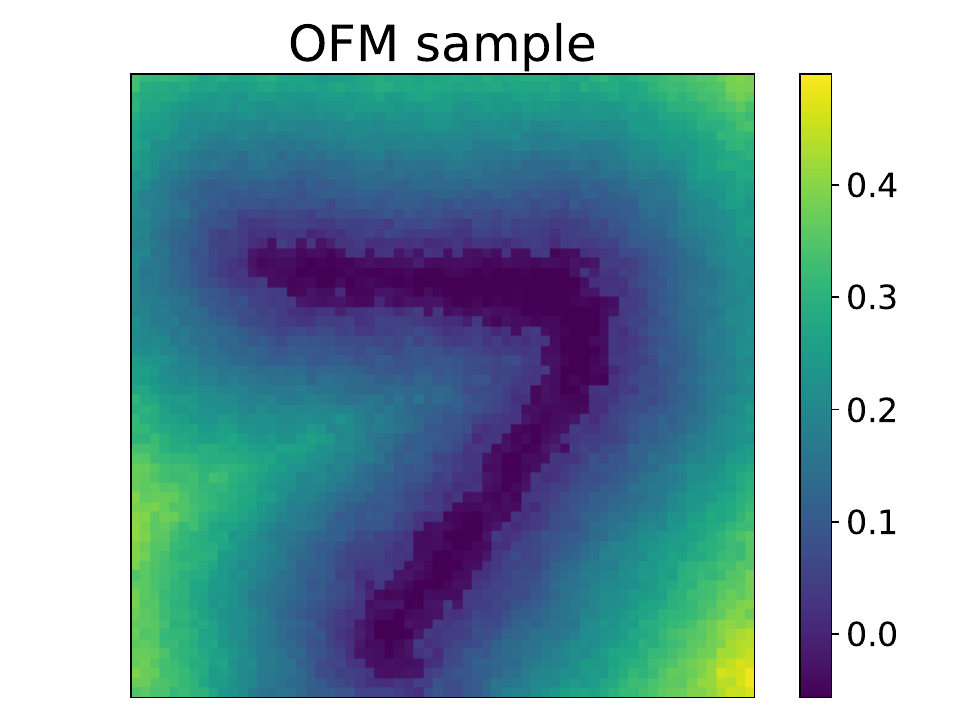}
        \vspace*{-.6cm}
        \caption{OFM sample}
    \end{subfigure}
    
    \begin{subfigure}{0.19\textwidth}
        \includegraphics[height=.7\textwidth,trim={0.5cm, 0.2cm, 0.4cm, 1.15cm},clip]{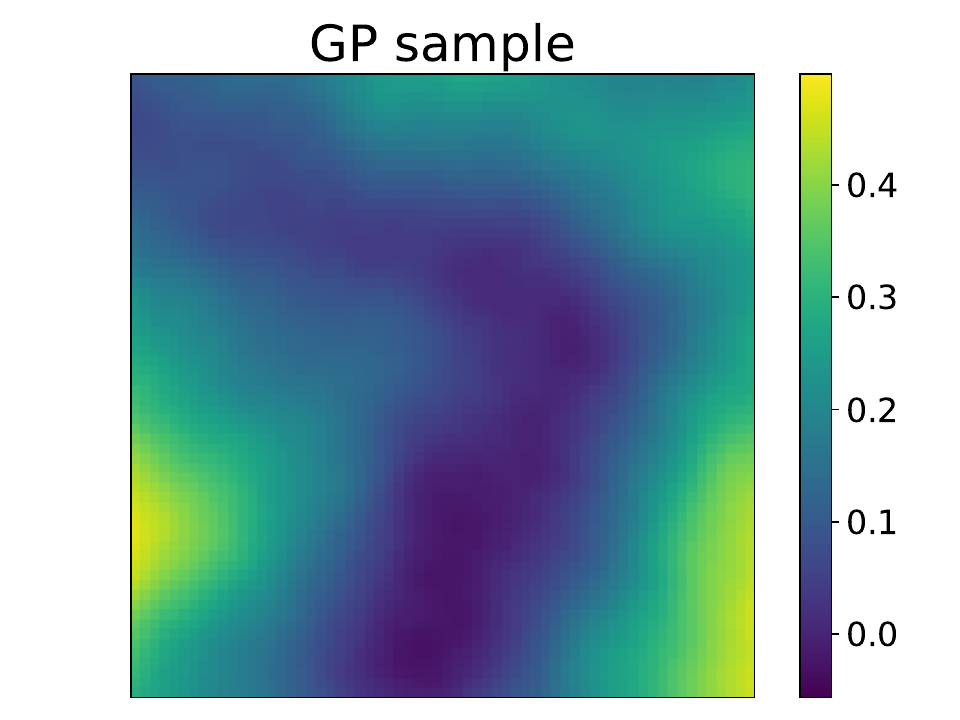}
        \vspace*{-.6cm}
        \caption{GP sample}
    \end{subfigure}
    \begin{subfigure}{0.19\textwidth}
        \includegraphics[height=.7\textwidth,trim={0.5cm, 0.2cm, 0.4cm, 1.15cm},clip]{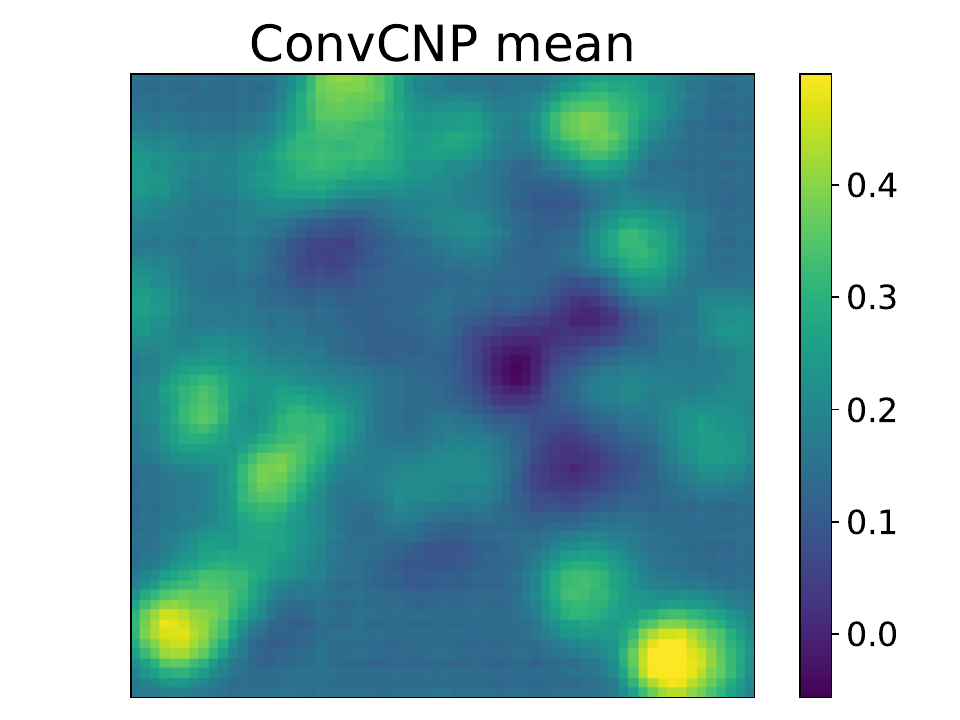}
        \vspace*{-.6cm}
        \caption{ConvCNP mean}
    \end{subfigure}
    \quad
    \begin{subfigure}{0.19\textwidth}
        \includegraphics[height=.7\textwidth,trim={0.5cm, 0.2cm, 0.4cm, 1.15cm},clip]{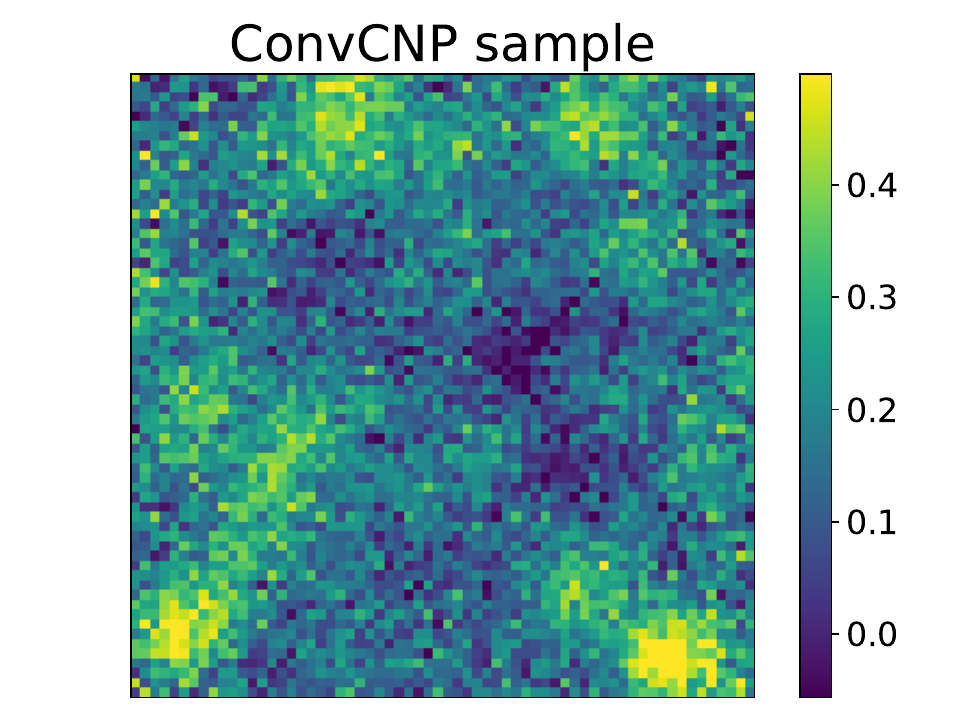}
        \vspace*{-.6cm}
        \caption{ConvCNP sample}
    \end{subfigure}
    \caption{ \alg regression on MNIST-SDF with resolution $64\times64$. (a) 64 random observations. (b) Ground truth sample.  (c) Predicted mean from \alg. (d) One posterior sample from \alg. (e) One posterior sample from best fitted GP. (f) Predicted mean from ConvCNP. (g) One posterior sample from ConvCNP.}
    \label{fig:SDF_MNIST_reg}
\end{figure*}

\clearpage
\newpage
\begin{figure*}[ht]
\vspace*{-.2cm}
\hspace*{-.5cm}  
    \centering
    \begin{subfigure}{0.45\textwidth}
        \includegraphics[height=.4\textwidth]{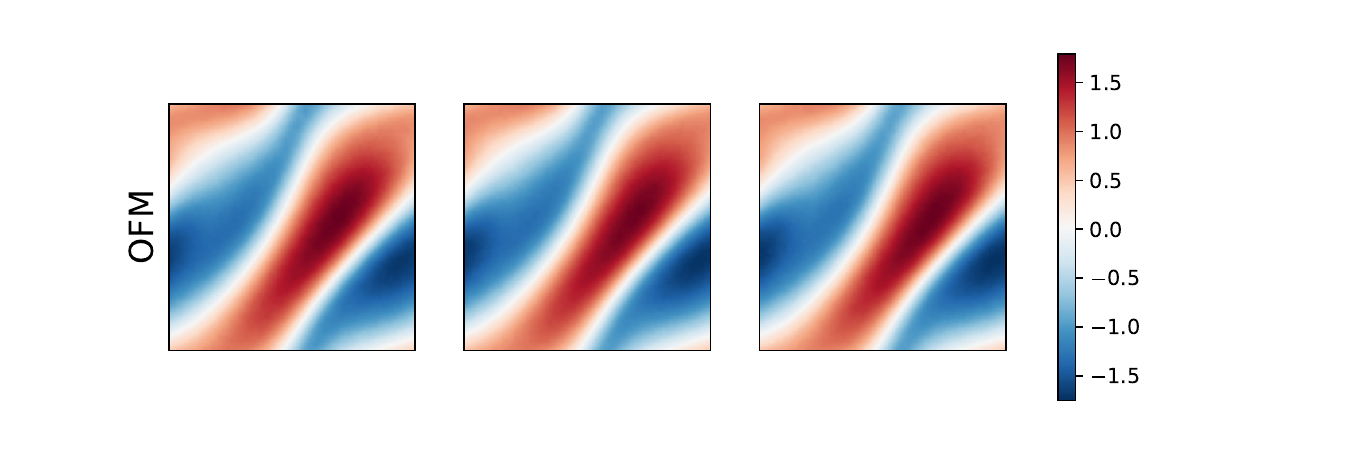}
        \vspace*{-.5cm}
    \end{subfigure}
    \begin{subfigure}{0.45\textwidth}
        \includegraphics[height=.4\textwidth]{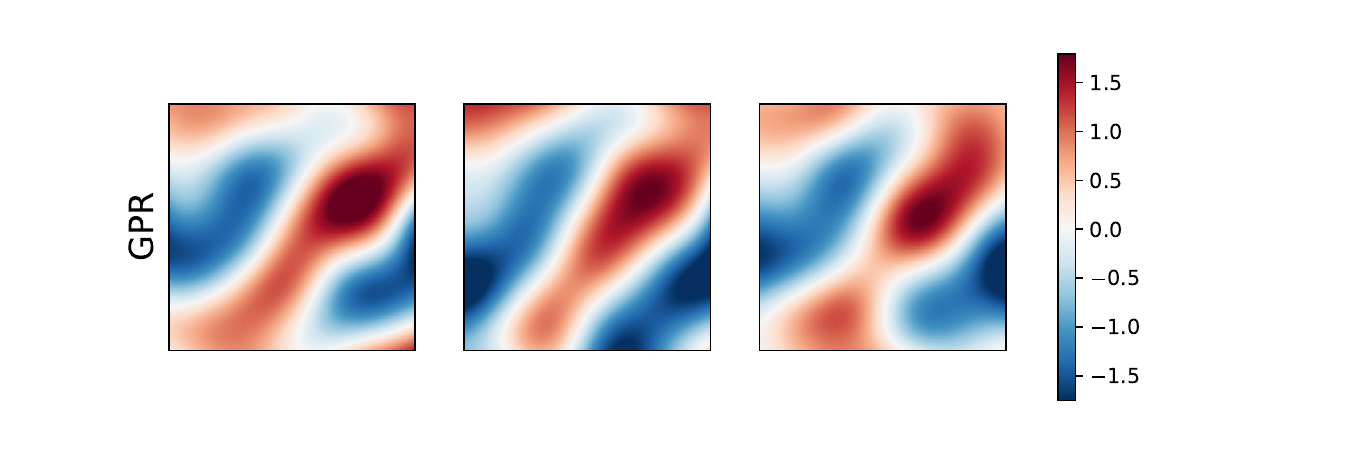}
        \vspace*{-.5cm}
    \end{subfigure}
    \caption{ \alg regression on NS data. (\textbf{left}) Posterior samples from \alg. (\textbf{right}) Posterior samples from GP regression. }
    \label{fig:OFM_reg_scen_NS}
\end{figure*}

\begin{figure*}[ht]
\vspace*{-.6cm}
\hspace*{-.5cm}  
    \centering
    \begin{subfigure}{0.45\textwidth}
        \includegraphics[height=.4\textwidth]{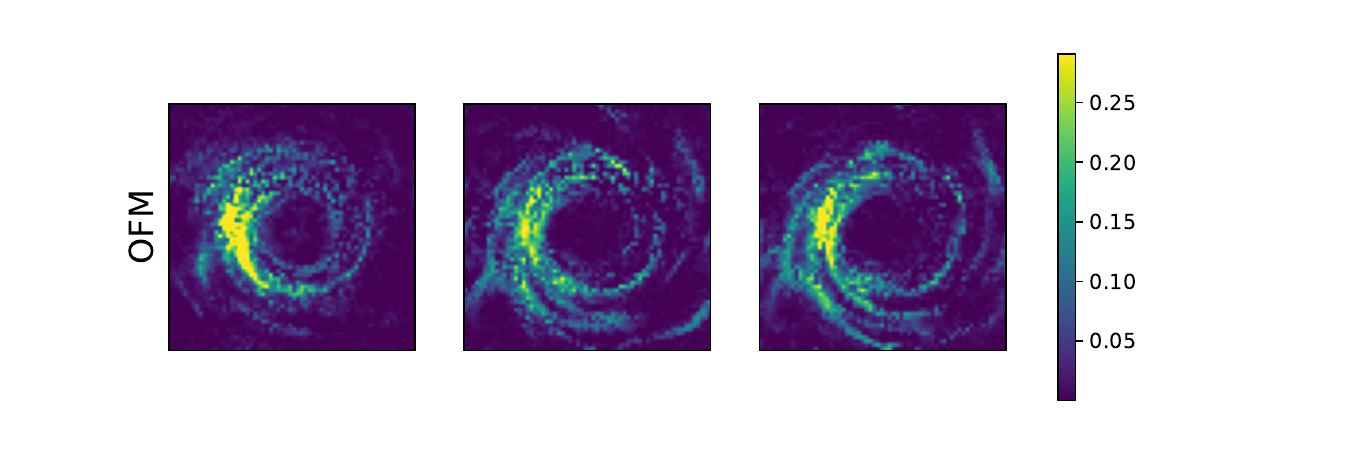}
        \vspace*{-.5cm}
    \end{subfigure}
    \begin{subfigure}{0.45\textwidth}
        \includegraphics[height=.4\textwidth]{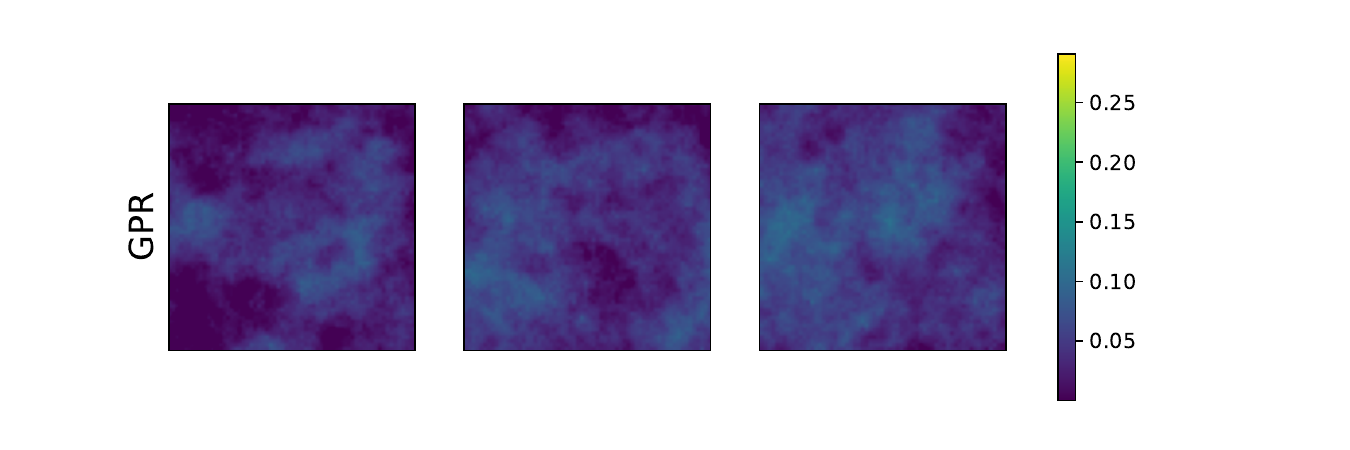}
        \vspace*{-.5cm}
    \end{subfigure}
    \caption{ \alg regression on black hole data. (\textbf{left}) Posterior samples from \alg. (\textbf{right}) Posterior samples from GP regression. }
    \label{fig:OFM_reg_scen_bh}
\end{figure*}

\begin{figure*}[ht]
\vspace*{-1.0cm}
\hspace*{-.5cm}  
    \centering
    \begin{subfigure}{0.45\textwidth}
        \includegraphics[height=.4\textwidth]{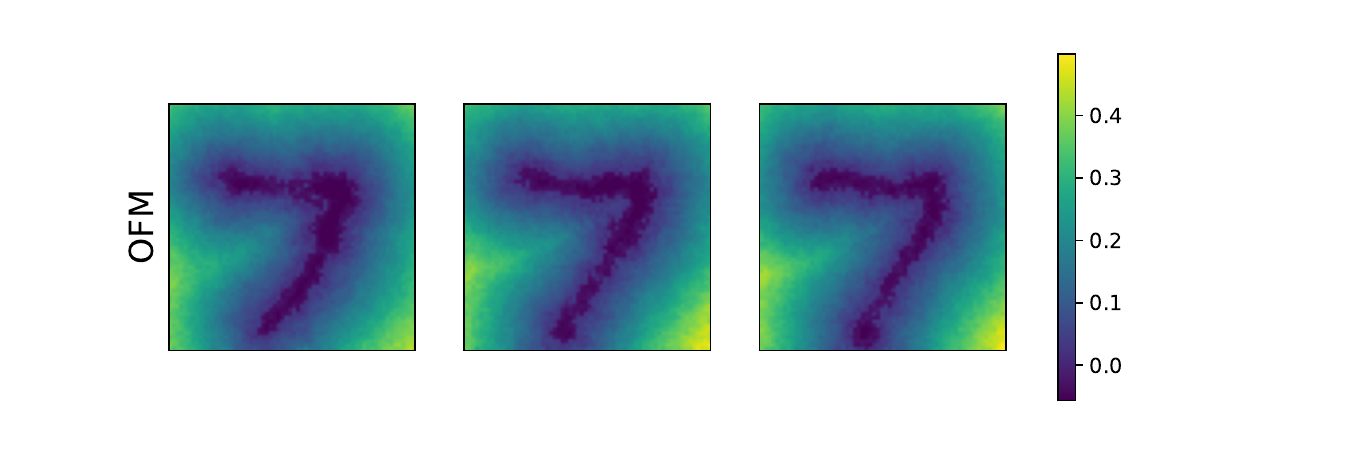}
        \vspace*{-.5cm}
    \end{subfigure}
    \begin{subfigure}{0.45\textwidth}
        \includegraphics[height=.4\textwidth]{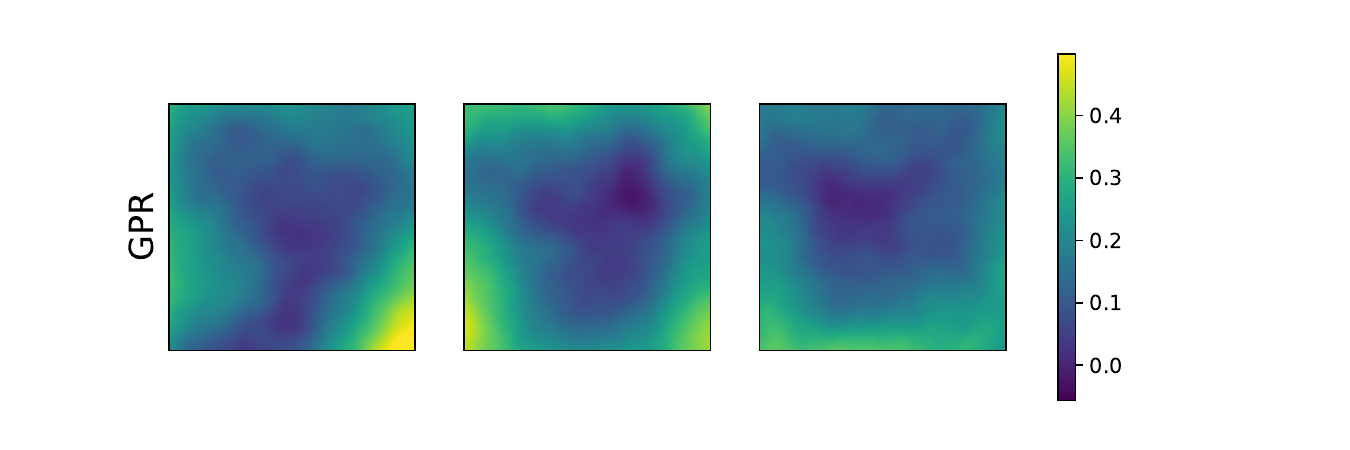}
        \vspace*{-.5cm}
    \end{subfigure}
    \caption{ \alg regression on MNIST-SDF data. (\textbf{left}) Posterior samples from \alg. (\textbf{right}) Posterior samples from GP regression. }
    \label{fig:OFM_reg_scen_sdf}
\end{figure*}



\section{Co-domain functional regression with \alg
}

In this section, we expand our regression framework to accommodate co-domain settings, as many function datasets feature a co-domain dimension greater than one. For example, earthquake waveform data commonly include three directional components, leading to a three-dimensional co-domain. Similarly, the velocity field in fluid dynamics usually features three directional components, also resulting in a dimension of co-domain of three.

We illustrate this extension through a 2D \GP example with a co-domain of 3 (channel dimension of 3). In learning the prior, we define the reference measure ($\nu_0$) as a joint measure (Wiener measure) of three identical but independent Gaussian measures while the target measure ($\nu_1$) is another Wiener measure. We keep all other parameters unchanged as those described in the 2D \GP regression tasks, with the only modification being an increase in the channel dimension from one to three. After training the prior (training detail provided in Appendix \ref{ap:prior_learn}), and provided 32 random observations across the three channels 
at co-locations, we then perform regression with \alg across these channels jointly. As demonstrated in Fig~\ref{fig:OFM_reg_scen_3C}, \alg accurately estimate the mean and uncertainty across three channels.

\begin{figure*}[ht]
\vspace*{0.2cm}
\hspace*{-.3cm}  
    \centering
    \begin{subfigure}{0.4\textwidth}
        \includegraphics[height=.3\textwidth,trim={0.5cm, 0.2cm, 0.4cm, 1cm},clip]{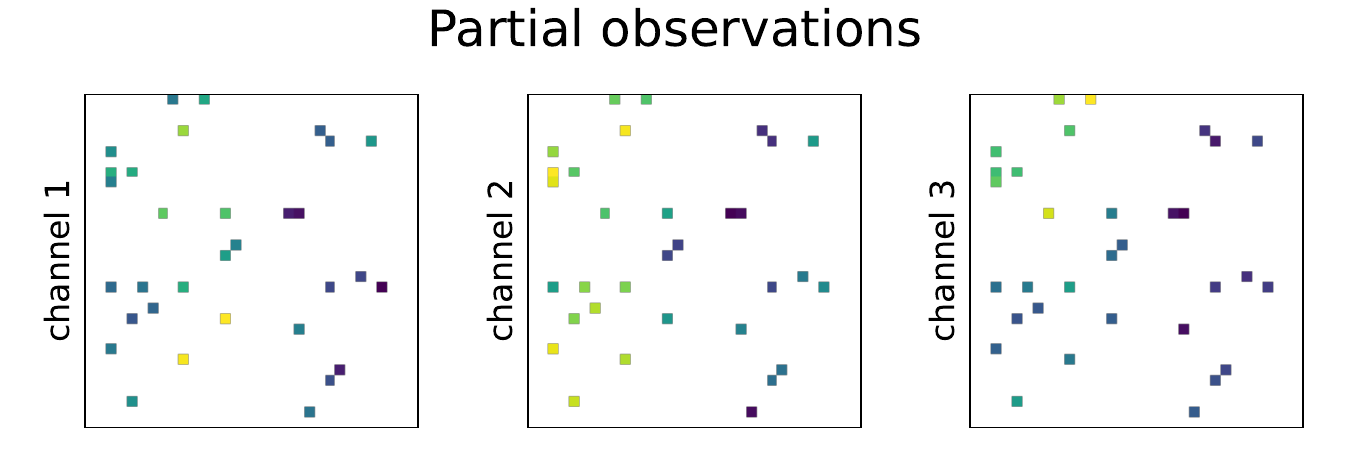}
        \caption{Partial observations}
    \end{subfigure}
    
    \begin{subfigure}{0.42\textwidth}
        \includegraphics[height=.3\textwidth,trim={0.5cm, 0.2cm, 0.4cm, 0.9cm},clip]{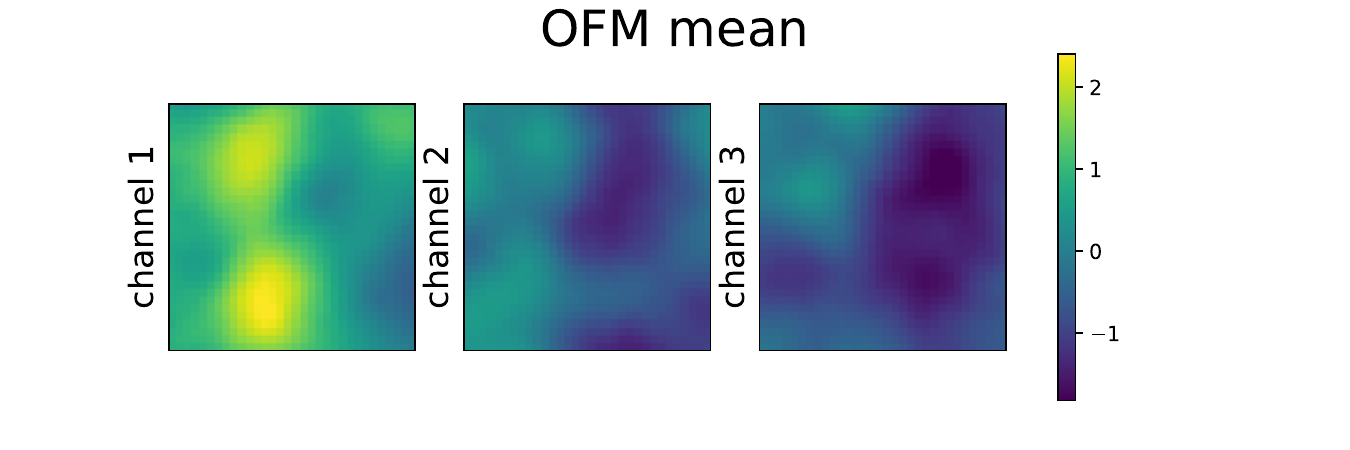}
        \vspace*{-0.6cm}
        \caption{OFM mean}
    \end{subfigure}
    \begin{subfigure}{0.42\textwidth}
        \includegraphics[height=.3\textwidth,trim={0.5cm, 0.2cm, 0.4cm, 0.9cm},clip]{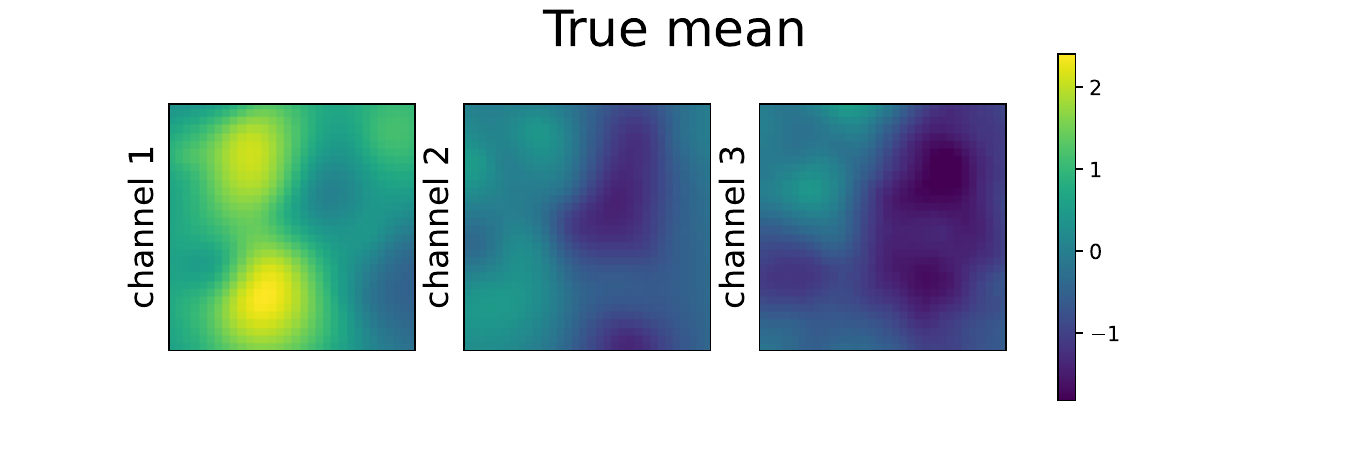}
        \vspace*{-0.6cm}
        \caption{True mean}
    \end{subfigure}

    \begin{subfigure}{0.42\textwidth}
        \includegraphics[height=.3\textwidth,trim={0.5cm, 0.2cm, 0.4cm, 0.9cm},clip]{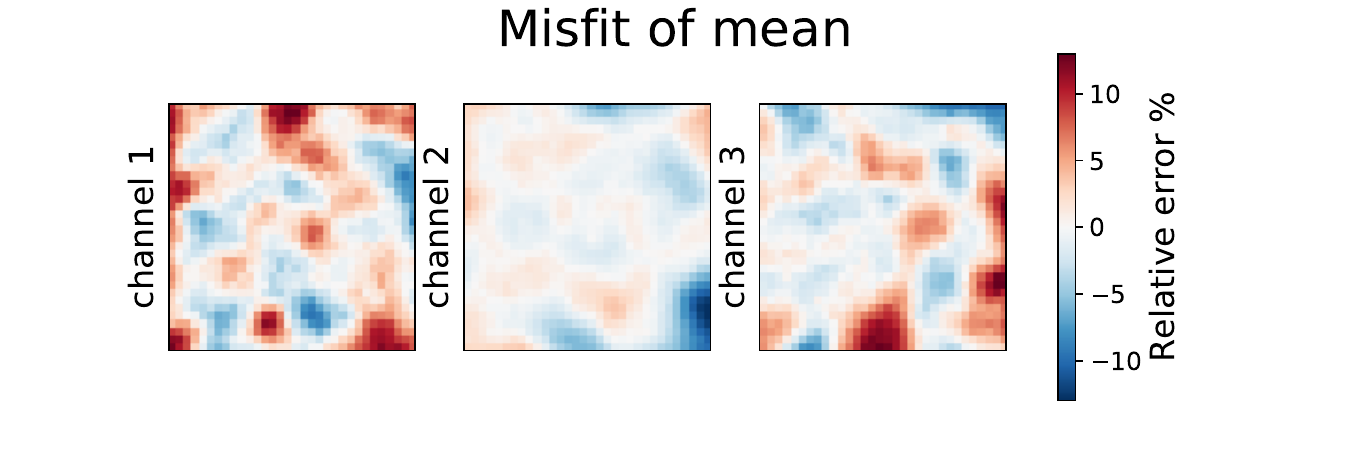}
        \vspace*{-0.6cm}
        \caption{Misfit of mean}
    \end{subfigure}
    \begin{subfigure}{0.42\textwidth}
        \includegraphics[height=.3\textwidth,trim={0.5cm, 0.2cm, 0.4cm, 0.9cm},clip]{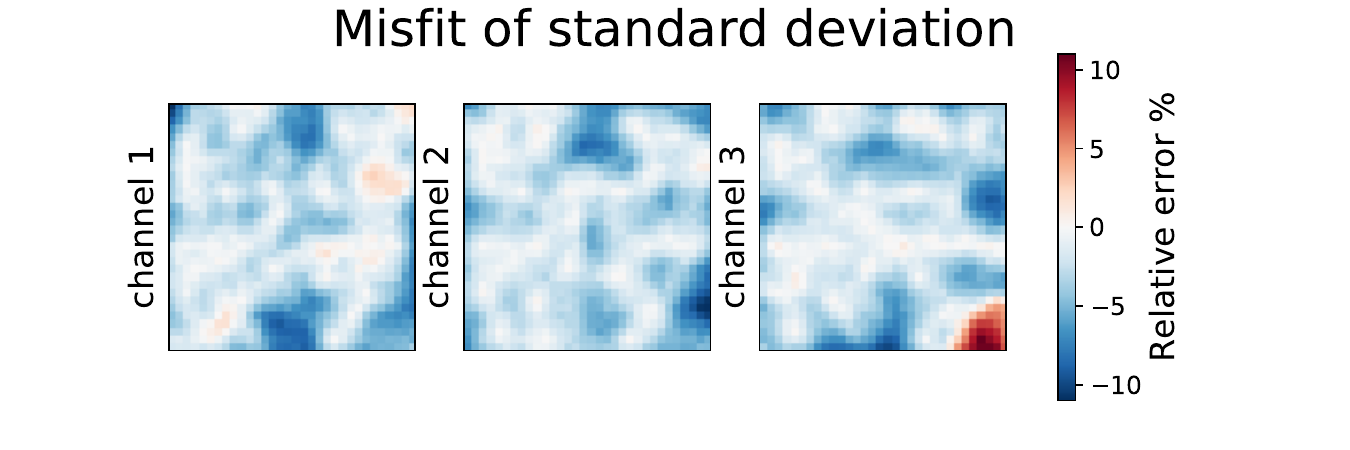}
        \vspace*{-0.6cm}
        \caption{Misfit of standard deviation}
    \end{subfigure}
    \caption{ \alg regression on co-domain \GP data with resolution 32x32. (a) 32 random observations at co-locations. (b) Predicted mean from OFM. (c) Ground truth mean from GP regression. (d) Misfit of the predicted mean. (e) Misfit of predicted standard deviation. }
    \label{fig:OFM_reg_scen_3C}
\end{figure*}
\section{Posterior sampling with Stochastic Gradient Langevin Dynamics\label{Apx:posterior_SGLD}}

In this section, we describe how to sample from posterior distribution with SGLD. We denote logarithmic posterior distribution (Eq.~\ref{eq:closed_posterior}) as $\log \Prob_{\theta}$ and denote a set of posterior samples as $\{u_{\theta}^t\}^N_{t=1}$, where each $u_{\theta}^t$ is defined on a collection of point $\{ x_i\}_{i=1}^m$. 

By following the standard SGLD pipeline as described by \citet{welling_bayesian_2011}, we can obtain a set of $N$ posterior samples $\{u_{\theta}^t\}_{t=1}^N$. However, SGLD is known to be sensitive to the choice of regression parameters and can become trapped in local minima, leading to convergence issues, especially in regions of high curvature \citep{li_preconditioned_2015}. To mitigate these challenges, \citet{shi_universal_2024} proposed that within an invertible framework, drawing a posterior sample $u_{\theta}^t$ is equivalent to drawing a sample $a_{\theta}^t$ in Gaussian space, since $u_{\theta}^t$ uniquely defines $a_{\theta}^t$ and vice versa. This approach can stabilize the posterior sampling process and is less sensitive to the regression parameters due to the inherent smoothness of the Gaussian process. Additionally, ~\citet{shi_universal_2024} suggests starting from maximum a \textit{posteriori} (MAP) estimate of $a_{\theta}^t$, denoted as $\wb{a_{\theta}}$, which can reduces the number of burn-in terations needed in SGLD. We adopt the same sampling strategy and the algorithm is reported in Algorithm~\ref{algo:sgld}

When the size of observations or context points ($\{\wh u(x_i) \}_{i=1}^n$) is 0, sampling from the posterior degrades to sampling from the prior, the results of which are presented in the subsequent section.

\begin{algorithm}
\caption{Posterior sampling with SGLD}
\label{algo:sgld}

\textbf{Input and Parameters:} Logarithmic posterior distribution $\log \Prob_{\theta}$, temperature $T$, learning rate $\eta_t$, MAP $\wb a_{\theta}$, burn-in iteration $b$, sampling iteration $t_N$, total iteration $N$.
\begin{algorithmic}[1]
\STATE \textbf{Initialization}: $a_{\theta}^0= \wb a_{\theta}$  
\FOR{$t = 0, 1, 2, \ldots, N$}
    \STATE Compute gradient of the posterior: $\nabla_{a_{\theta}} \log \Prob_{\theta}$
    \STATE Update $a_{\theta}^{t+1}$: $a_{\theta}^{t+1} = a_{\theta}^{t} + \frac{\eta_t}{2} \nabla \log \Prob_{\theta} + \sqrt{\eta_t T} \mathcal{N}(0,I)$
    \IF{$t\geq b$}
        \STATE Every $t_N$ iterations:
        obtain new sample $a_{\theta}^{t+1}$,
        and corresponding $u_{\theta}^{t+1}$
    \ENDIF
\ENDFOR 
\end{algorithmic}
\end{algorithm}

\section{Prior learning with \alg}\label{ap:prior_learn}
We now elaborate on the prior learning process and the corresponding performance evaluation. As shown in Algorithm~\ref{algo:prior_learn}, the training dataset is sampled from the unknown data measure $\nu_1$. Concretely, the training dataset consists of $M$ discretized functions $\{u_i|D_i\}_{i=1}^M$, where $u_i|D_i$ denotes a discretized observation of the $u_i$ function.

In practice, to simplify dataset preparation, one often uses the same discretization grid $D_i$ for all function samples, e.g. $D_i = \{x_1, \cdots, x_n\}$ regardless of the sample index "i". For the consistency of notions, let $h_0$ represents a batch of i.i.d discretized functions sampled from the training dataset (equivalently, sampled from $\nu_1$). Next, the reference Gaussian process $\nu_0 = \mathcal{N}(m_0, C_0)$ is known and determined by the user. With a slight abuse of notation, We choose to use notation $h_0, h_1$ for consistency purpose, in other parts of this paper, discretized $h_0, h_1$ is replaced with $a, u$ respectively.

For specific experiments setting, we employ Matern kernel to construct the reference GP and to prepare training datasets for 1D \GP, 2D \GP, and 1D TGP. We have set the kernel length $l = 0.01$ with a smoothness factor $\zeta =0.5$ for all reference GPs. \alg maps the GP samples from reference GPs to data samples and is resolution-invariant, which means \alg can be trained with functions at any resolution and evaluated at any resolution.

\begin{algorithm}[H]
\caption{Learning a prior}
\label{alg:ot-cfm}
\textbf{Input:} Reference Gaussian process $\nu_0 = \mathcal{N}(m_0, C_0)$, data measure $\nu_1$, batch size $b$, small constant $\sigma_{\text{min}}$, discretized domain $D = \{x_1, \cdots x_n\}$

\begin{algorithmic}[1]
\WHILE{Training}
    \STATE $h_0 \sim \nu_0; \quad h_1 \sim \nu_1$~~~~~~~\# sample functions of size $b$ i.i.d from the measures on $D$
    \STATE $\pi \leftarrow \mathrm{OT}(h_0, h_1)$  ~~~~~~~~~~\# mini-batch optimal transport plan
    \STATE $(h_0, h_1) \sim \pi$
    \STATE $t \sim \mathcal{U}(0,1)$ 
    \STATE $\mu_t \leftarrow t\,h_1 + (1 - t)\,h_0$
    \STATE $h_t \sim \mathcal{N}(\mu_t, \sigma_{\text{min}}^2C_0)$
    \STATE $\mathcal{L}^{\dagger}_{\mathrm{CFM}}(\theta) \leftarrow \bigl\|\G_\theta(t,x) - (h_1 - h_0)\bigr\|^2$
    \STATE $\theta \leftarrow \mathrm{Update}\bigl(\theta,\nabla_\theta\,\mathcal{L}^{\dagger}_{\mathrm{CFM}}(\theta)\bigr)$
\ENDWHILE
\STATE \textbf{return} $\G_\theta$
\end{algorithmic}
\label{algo:prior_learn}
\end{algorithm}

\textbf{1D GP dataset.}
We choose $l=0.3$ and $\zeta=1.5$ and generate $20,000$ training samples on domain $[0,1]$ with a fixed resolution of 256. We use autocovariance and histogram of point-wise value as metrics for evaluation. We evaluate \alg at several different resolutions shown Fig~\ref{fig:Appen_GP},~\ref{fig:Appen_GP_sup1},~\ref{fig:Appen_GP_sup2}, which demonstrate \alg's excellent capability to learn the function prior.
\begin{figure*}[ht]
\vspace*{-.2cm}
    \centering
    \hspace{-3cm}
    \begin{subfigure}{0.20\textwidth}
        \includegraphics[height=.9\textwidth]{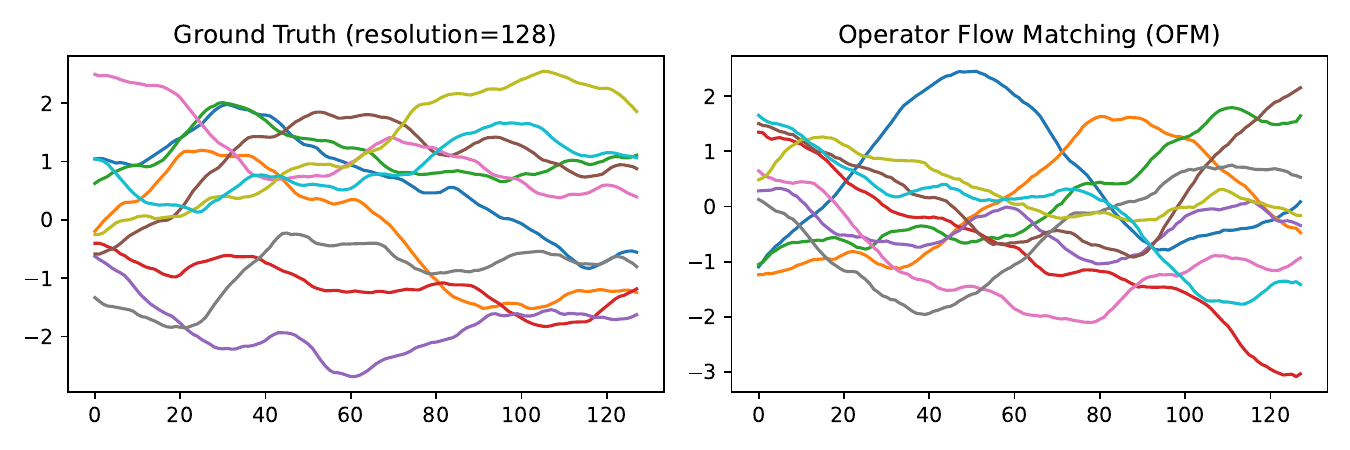}
        \vspace*{-.6cm}
    \end{subfigure}
    \hspace{6cm}
    \begin{subfigure}{0.20\textwidth}
        \includegraphics[height=.9\textwidth]{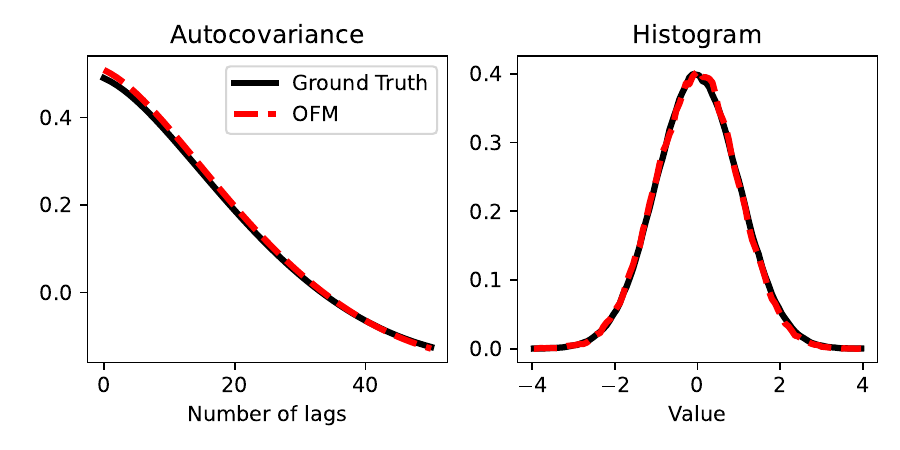}
        \vspace*{-.6cm}
    \end{subfigure}
    \caption{ \alg for 1D GP prior learning, evaluated at resolution=128. (\textbf{left two}) Random samples from ground truth and generated by \alg. (\textbf{right two}) Autocovariance and histogram comparison}
    \label{fig:Appen_GP}
\end{figure*}
\begin{figure*}[ht]
\vspace*{-.4cm}
    \centering
    \hspace{-3cm}
    \begin{subfigure}{0.20\textwidth}
        \includegraphics[height=.9\textwidth]{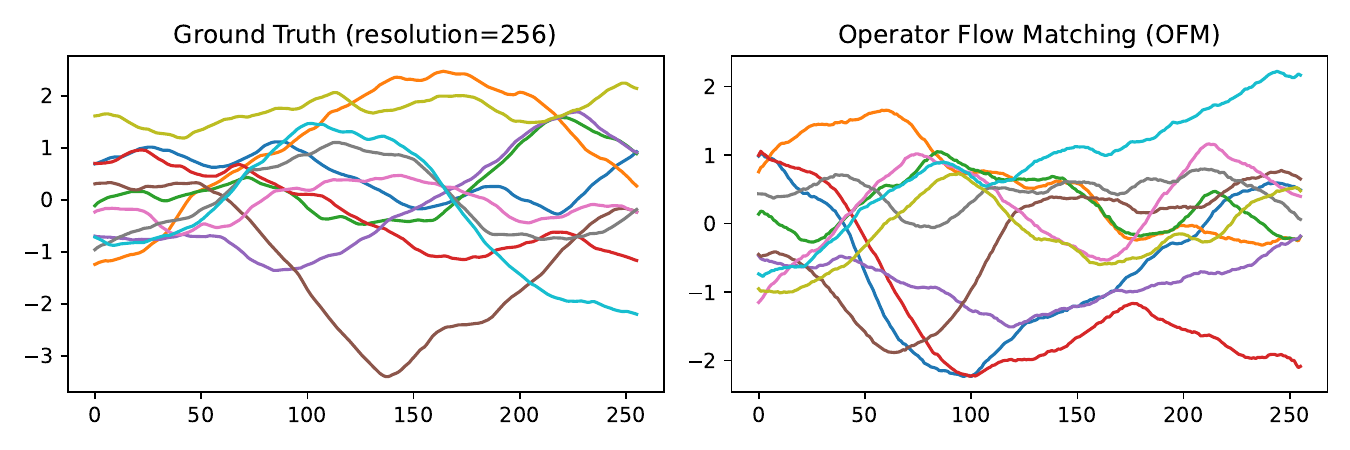}
        \vspace*{-.6cm}
    \end{subfigure}
    \hspace{6cm}
    \begin{subfigure}{0.20\textwidth}
        \includegraphics[height=.9\textwidth]{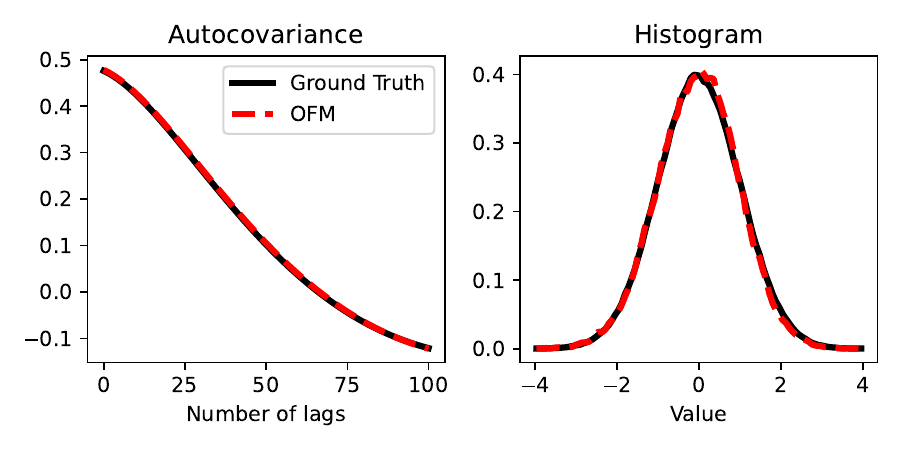}
        \vspace*{-.6cm}
    \end{subfigure}
    \caption{ \alg for 1D GP prior learning, evaluated at resolution=256. (\textbf{left two}) Random samples from ground truth and generated by \alg. (\textbf{right two}) Autocovariance and histogram comparison}
    \label{fig:Appen_GP_sup1}
\end{figure*}
\begin{figure*}[ht]
\vspace*{0.4cm}
    \centering
    \hspace{-3cm}
    \begin{subfigure}{0.20\textwidth}
        \includegraphics[height=.9\textwidth]{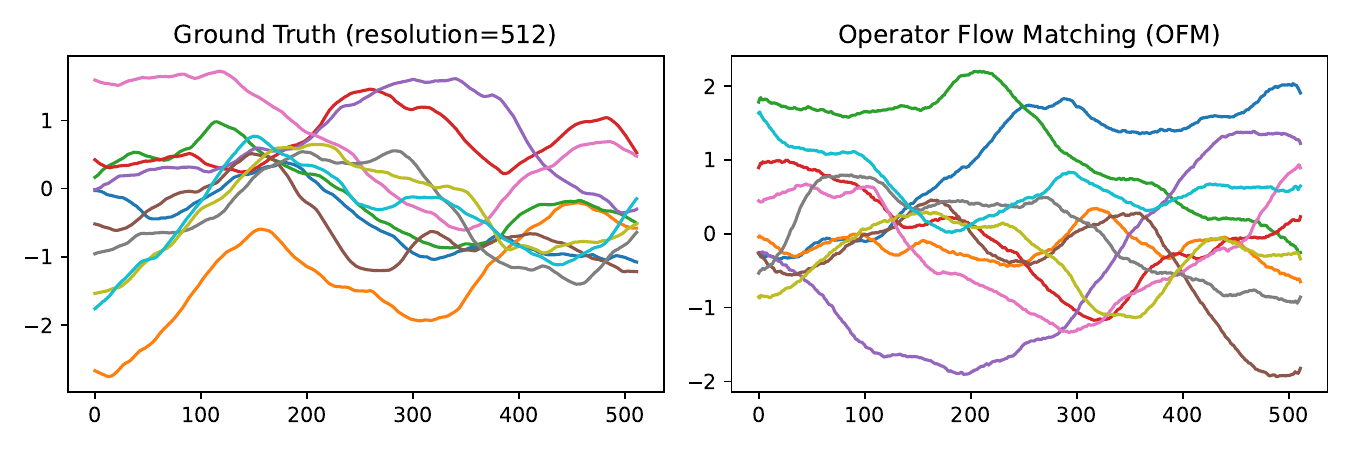}
        \vspace*{-.6cm}
    \end{subfigure}
    \hspace{6cm}
    \begin{subfigure}{0.20\textwidth}
        \includegraphics[height=.9\textwidth]{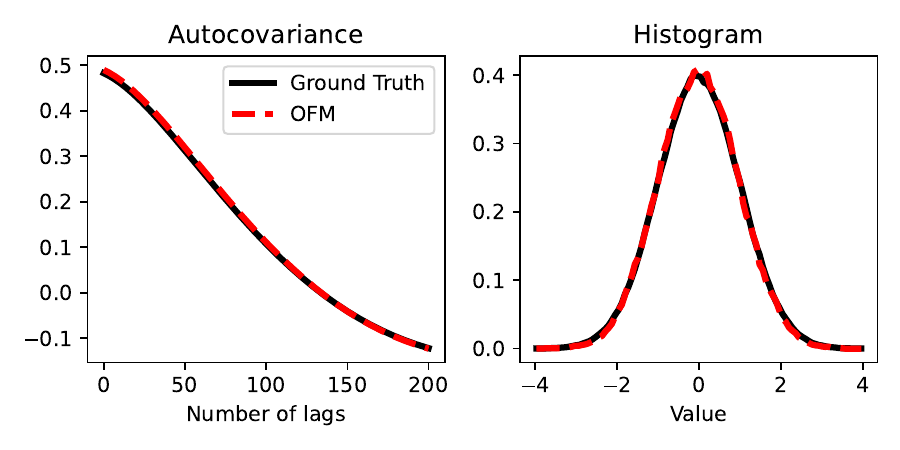}
        \vspace*{-.6cm}
    \end{subfigure}
    \caption{ \alg for 1D GP prior learning, evaluated at resolution=512. (\textbf{left two}) Random samples from ground truth and generated by \alg. (\textbf{right two}) Autocovariance and histogram comparison}
    \label{fig:Appen_GP_sup2}
\end{figure*}

\textbf{1D TGP dataset.}
We choose $l=0.3$ and $\zeta=1.5$ and generating $20,000$ training samples on domain $[0,1]$ with a fixed resolution of 256. We set $[-1.2, 1.2]$ for the bounds. Results provided in Fig~\ref{fig:Appen_TGP},~\ref{fig:Appen_TGP_sup1},~\ref{fig:Appen_TGP_sup2}. 
\begin{figure*}[ht]
\vspace*{0.4cm}
    \centering
    \hspace{-3cm}
    \begin{subfigure}{0.20\textwidth}
        \includegraphics[height=.9\textwidth]{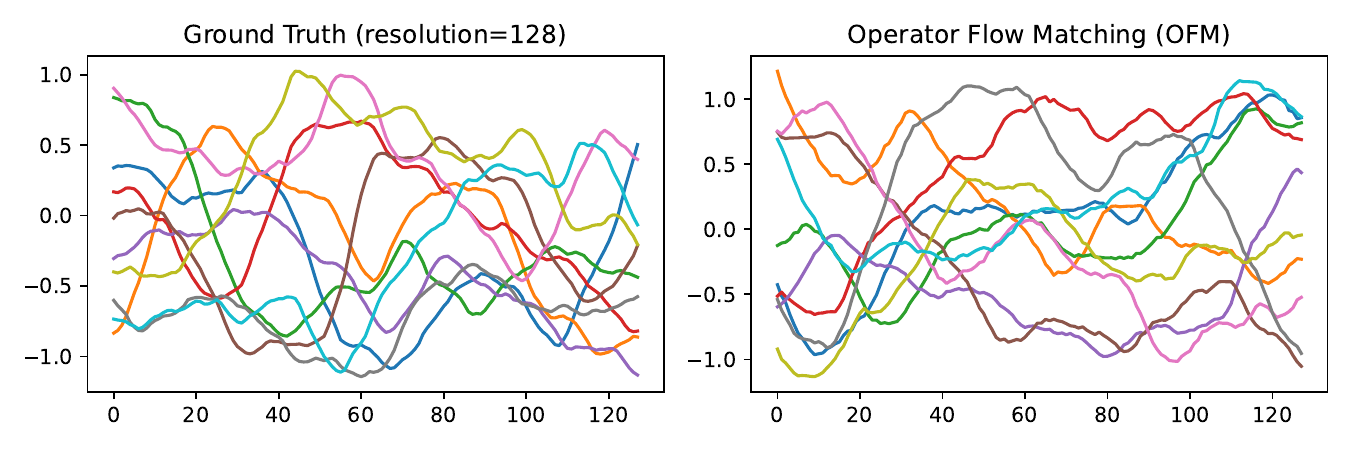}
        \vspace*{-.6cm}
    \end{subfigure}
    \hspace{6cm}
    \begin{subfigure}{0.20\textwidth}
        \includegraphics[height=.9\textwidth]{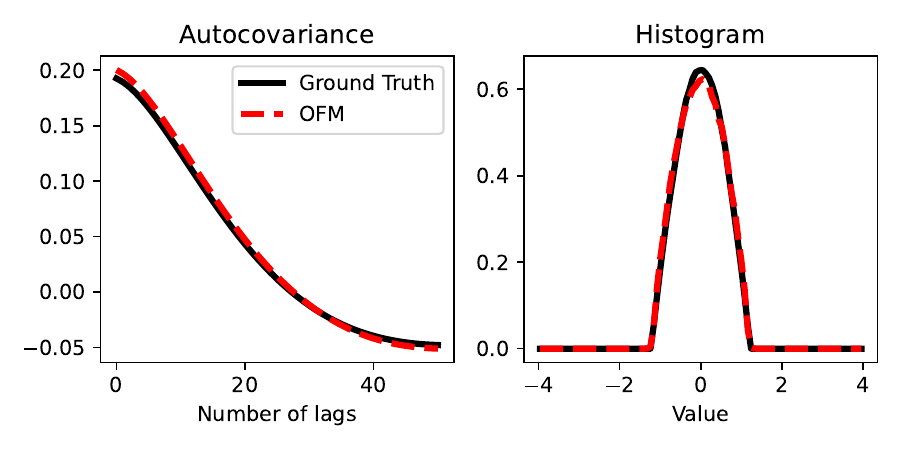}
        \vspace*{-.6cm}
    \end{subfigure}
    \caption{ \alg for 1D TGP prior learning, evaluated at resolution=128. (\textbf{left two}) Random samples from ground truth and generated by \alg. (\textbf{right two}) Autocovariance and histogram comparison}
    \label{fig:Appen_TGP}
\end{figure*}
\begin{figure*}[ht]
\vspace*{0.4cm}
    \centering
    \hspace{-3cm}
    \begin{subfigure}{0.20\textwidth}
        \includegraphics[height=.9\textwidth]{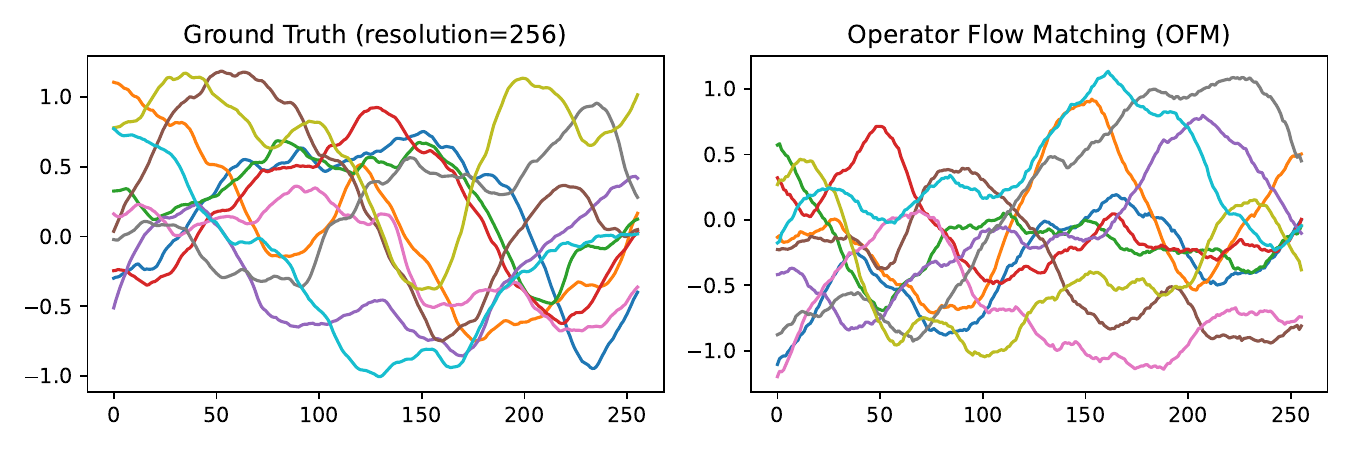}
        \vspace*{-.6cm}
    \end{subfigure}
    \hspace{6cm}
    \begin{subfigure}{0.20\textwidth}
        \includegraphics[height=.9\textwidth]{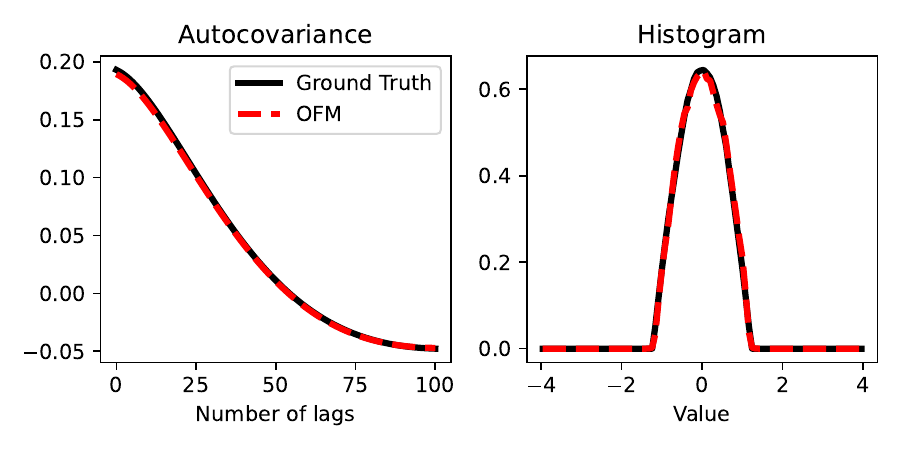}
        \vspace*{-.6cm}
    \end{subfigure}
    \caption{ \alg for 1D TGP prior learning, evaluated at resolution=256. (\textbf{left two}) Random samples from ground truth and generated by \alg. (\textbf{right two}) Autocovariance and histogram comparison}
    \label{fig:Appen_TGP_sup1}
\end{figure*}
\begin{figure*}[ht]
\vspace*{0.4cm}
    \centering
    \hspace{-3cm}
    \begin{subfigure}{0.20\textwidth}
        \includegraphics[height=.9\textwidth]{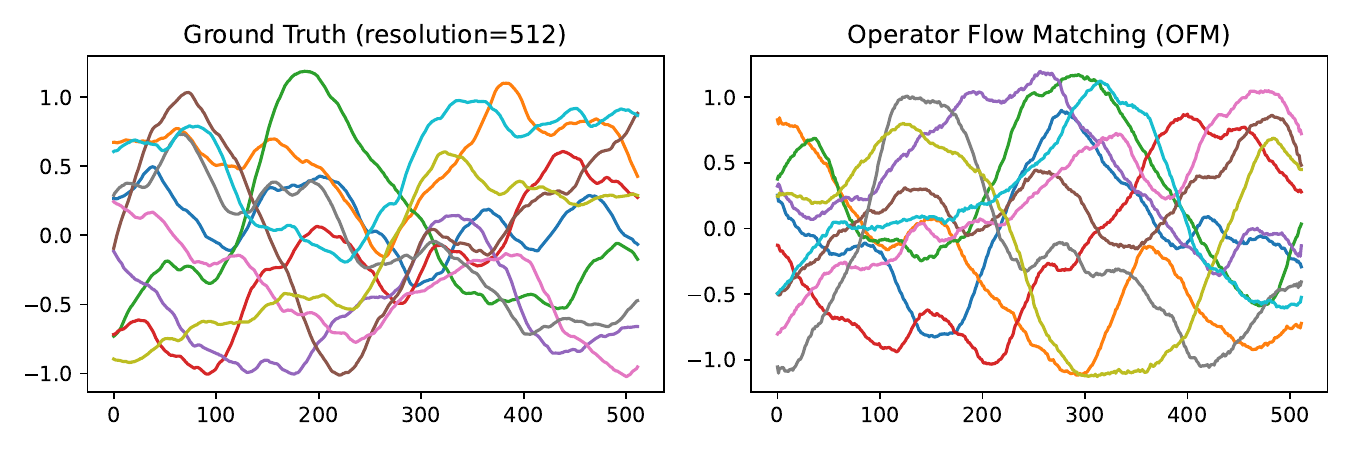}
        \vspace*{-.6cm}
    \end{subfigure}
    \hspace{6cm}
    \begin{subfigure}{0.20\textwidth}
        \includegraphics[height=.9\textwidth]{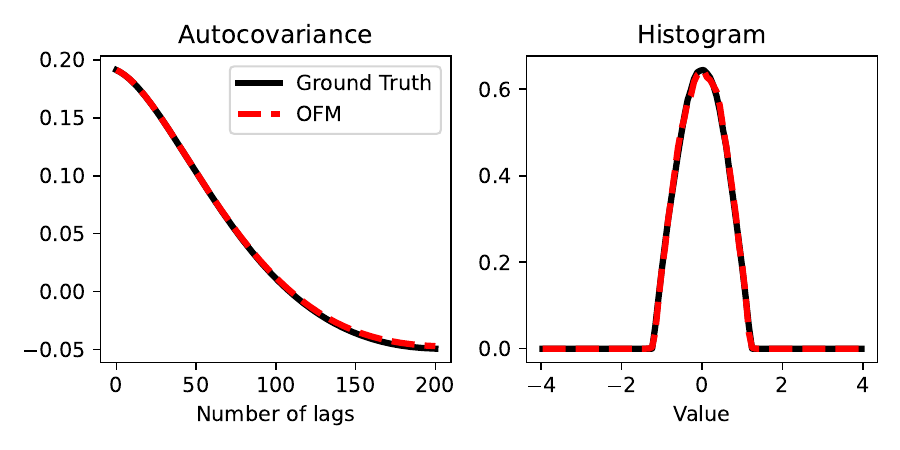}
        \vspace*{-.6cm}
    \end{subfigure}
    \caption{ \alg for 1D TGP prior learning, evaluated at resolution=512. (\textbf{left two}) Random samples from ground truth and generated by \alg. (\textbf{right two}) Autocovariance and histogram comparison}
    \label{fig:Appen_TGP_sup2}
\end{figure*}

\textbf{2D Naiver-Stokes, Black hole, MNIST-SDF datasets.} All the following 2D datasets are defined on domain $[0,1]\times[0,1]$ and have a resolution of $64 \times 64$.
We collected a 2D Navier-Stokes dataset consisting of 20000 samples, with viscosity $=1e-4$ . The results, including zero-shot super-resolution, are provided in Fig~\ref{fig:Appen_NS}, ~\ref{fig:Appen_NS_sup}. The learning of Black hole dataset, generated using expensive Monte Carlo method, is detailed in Fig~\ref{fig:Appen_bh},~\ref{fig:Appen_bh_sup}.
Additionally, we trained \alg on $20,000$ MNIST-SDF samples, the outcomes are illustrated in Fig~\ref{fig:Appen_sdf},~\ref{fig:Appen_sdf_sup}.
\begin{figure*}[ht]
\vspace*{-.2cm}
    \centering
    \hspace{-1.5cm}
    \begin{subfigure}{0.42\textwidth}
    \includegraphics[height=.28\textwidth,trim={2cm 0 2cm 0},clip]{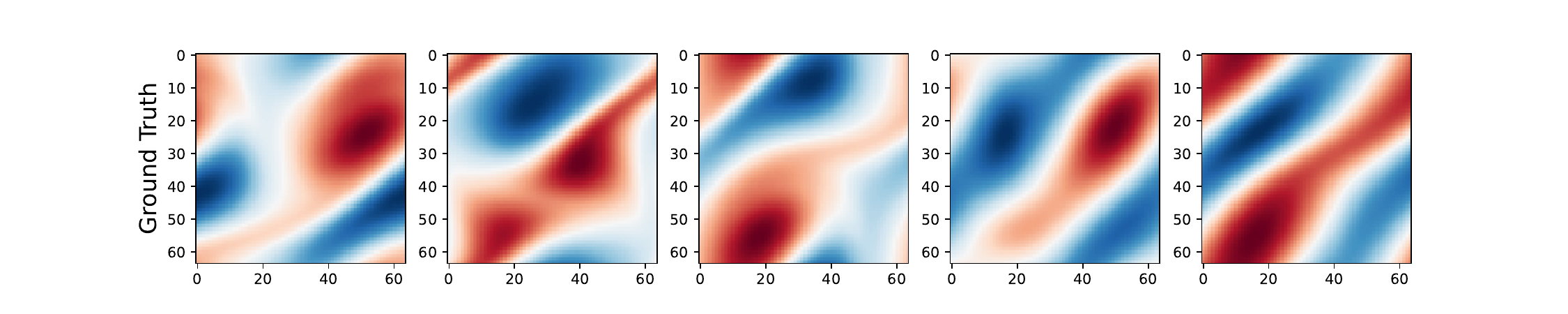}
    \end{subfigure}
    \hspace{1.5cm}    
    \begin{subfigure}{0.42\textwidth}
        \includegraphics[height=.28\textwidth,trim={2cm 0cm 2cm 0cm},clip]{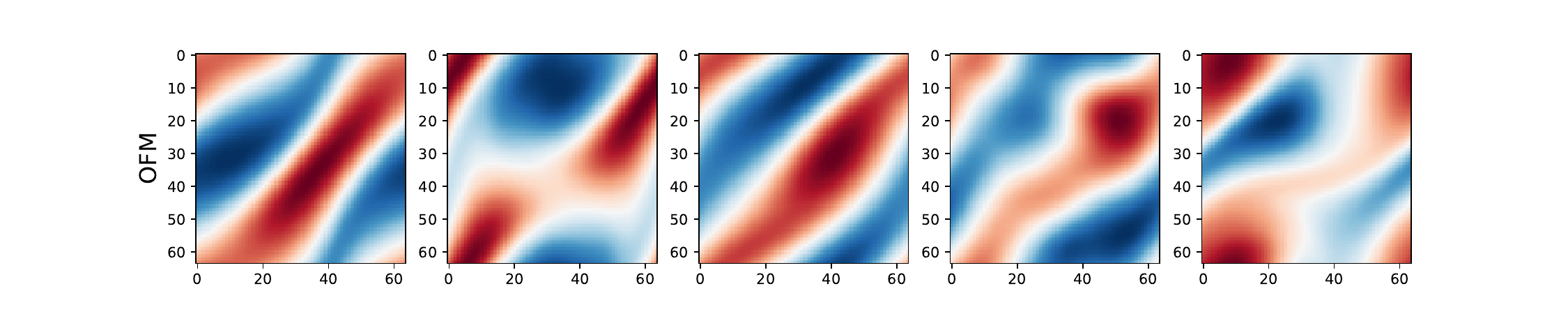}
        \vspace*{-.4cm}
    \end{subfigure}

    \begin{subfigure}{0.22\textwidth}
        \includegraphics[height=.8\textwidth]{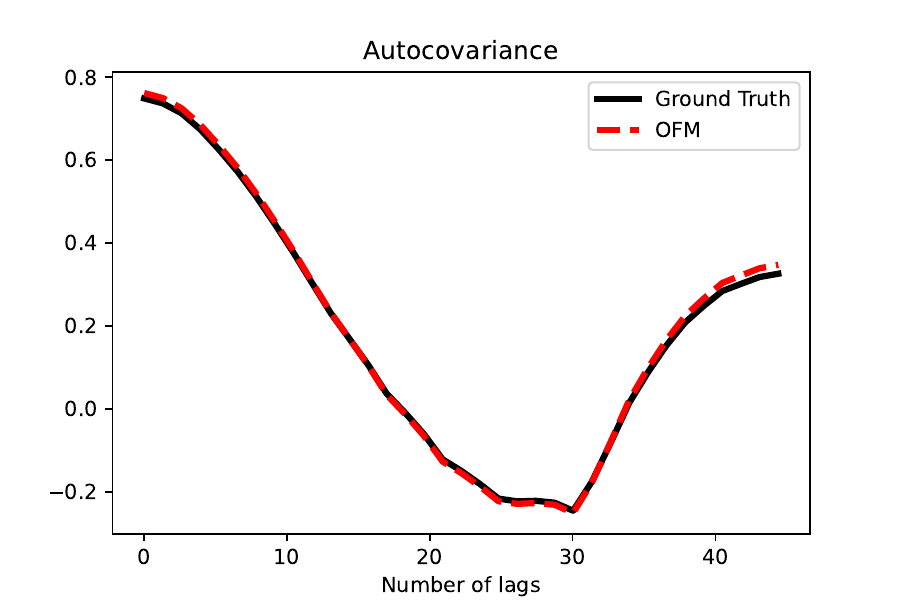}
        \vspace*{-.4cm}
    \end{subfigure}
    \quad

    \caption{\alg for 2D N-S prior learning, evaluated at resolution=$64\times64$. (\textbf{top left}) Random samples from ground truth. (\textbf{top right}) Random samples generated by \alg. (\textbf{bottom}) Autocovariance comparison}
    \label{fig:Appen_NS}
\end{figure*}
\begin{figure*}[ht]
\vspace*{-.2cm}
    \centering
    \hspace{-3.5cm}
    \begin{subfigure}{0.7\textwidth}
    \includegraphics[height=.3\textwidth,trim={2cm 0 2cm 0},clip]{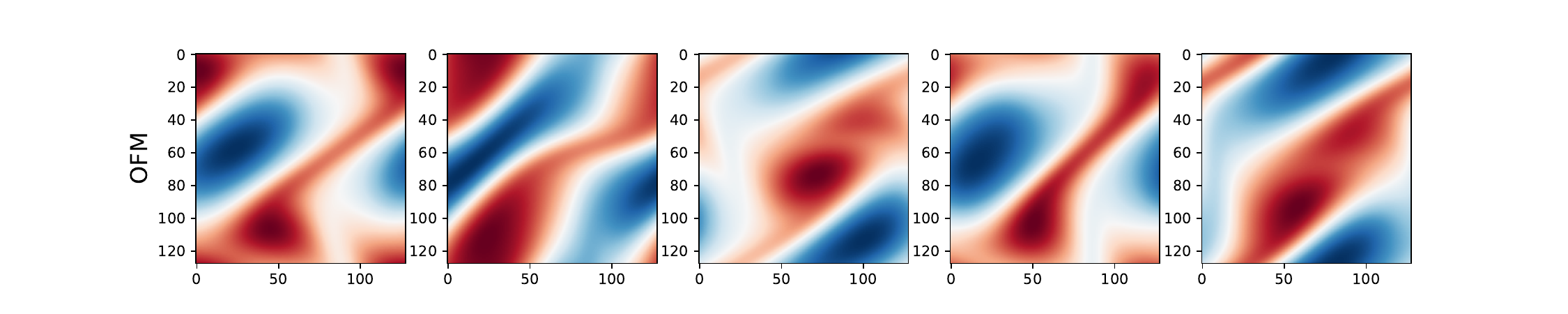}
        \vspace*{-.6cm}
    \end{subfigure}
    \caption{\alg for 2D N-S prior learning, evaluated at $128\times128$ resolution (zero-shot super-resolution)}
    \label{fig:Appen_NS_sup}
\end{figure*}

\begin{figure*}[h]
\vspace*{-.2cm}
    \centering
    \hspace{-1.5cm}
    \begin{subfigure}{0.42\textwidth}
    \includegraphics[height=.28\textwidth,trim={2cm 0 2cm 0},clip]{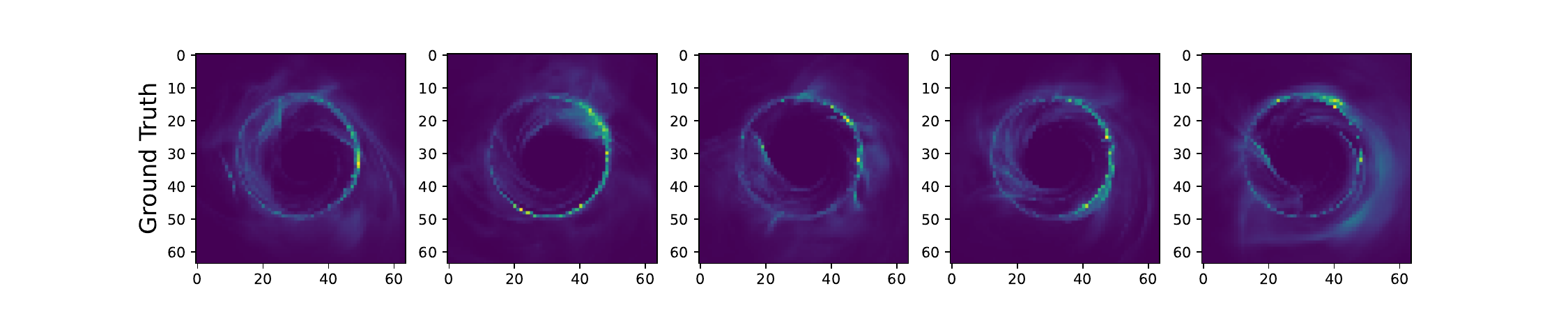}
    \end{subfigure}
    \hspace{1.5cm}
    \begin{subfigure}{0.42\textwidth}
        \includegraphics[height=.28\textwidth,trim={2cm 0cm 2cm 0cm},clip]{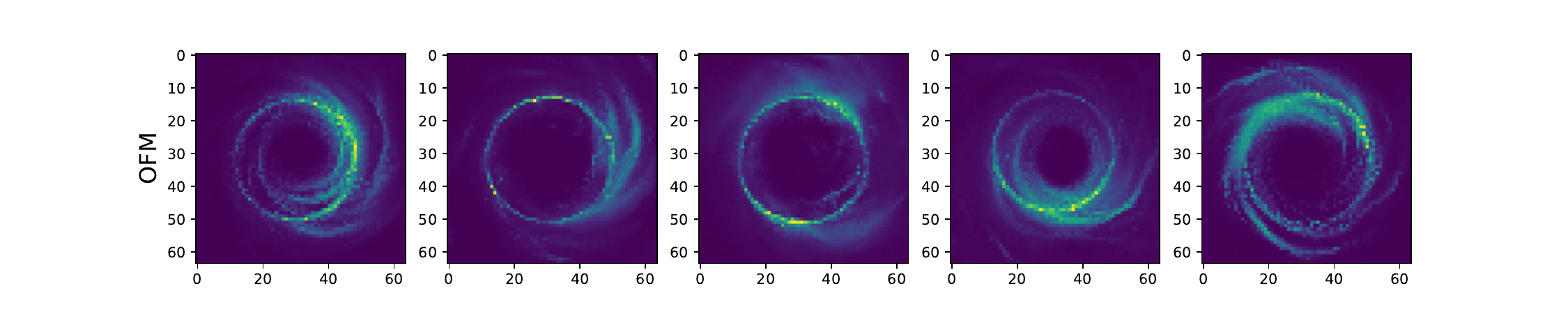}
        \vspace*{-.4cm}
    \end{subfigure}

    \begin{subfigure}{0.22\textwidth}
        \includegraphics[height=.85\textwidth]{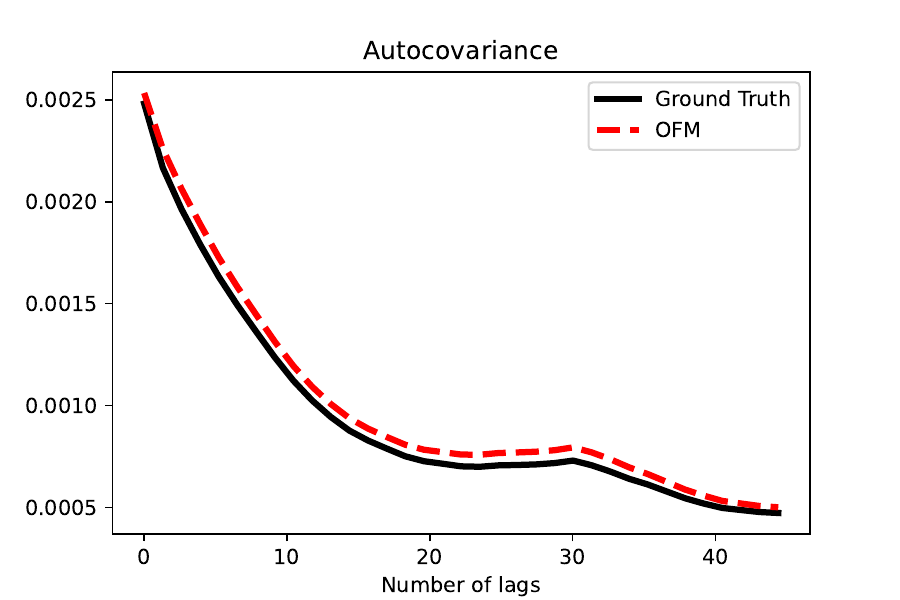}
        \vspace*{-.4cm}
    \end{subfigure}
    \quad

    \caption{\alg for 2D black hole prior learning, evaluated at resolution=64. (\textbf{top left}) Random samples from ground truth. (\textbf{top right}) Random samples generated by \alg. (\textbf{bottom}) Autocovariance comparison}
    \label{fig:Appen_bh}
\end{figure*}

\clearpage
\newpage
\begin{figure*}
\vspace*{-.2cm}
    \centering
    \hspace{-3.5cm}
    \begin{subfigure}{0.7\textwidth}
    \includegraphics[height=.3\textwidth,trim={2cm 0 2cm 0},clip]{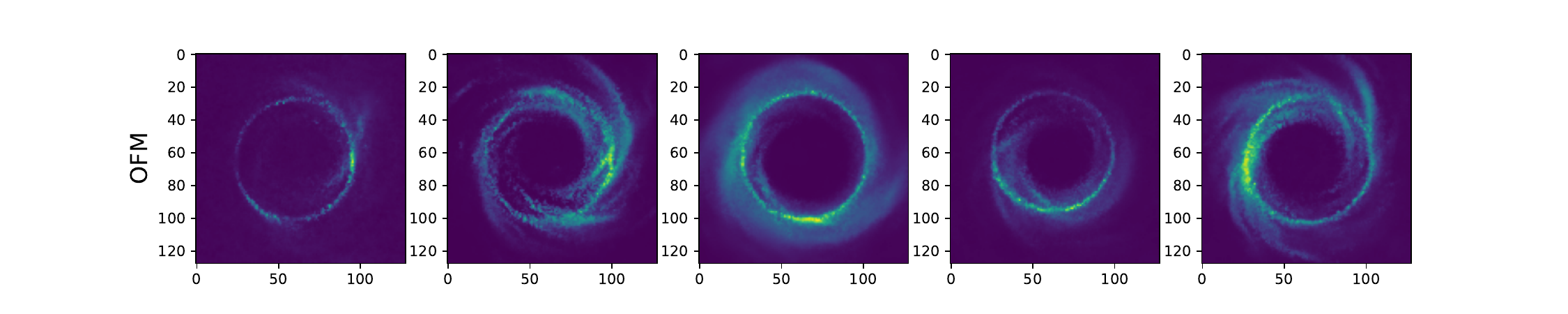}
        \vspace*{-.6cm}
    \end{subfigure}
    \caption{\alg for 2D black hole prior learning, evaluated at $128\times128$ resolution (zero-shot super-resolution)}
    \label{fig:Appen_bh_sup}
\end{figure*}

\begin{figure*}
\vspace*{-.2cm}
    \centering
    \hspace{-7cm}
    \begin{subfigure}{0.4\textwidth}
    \includegraphics[height=.45\textwidth,trim={2cm 0 2cm 0},clip]{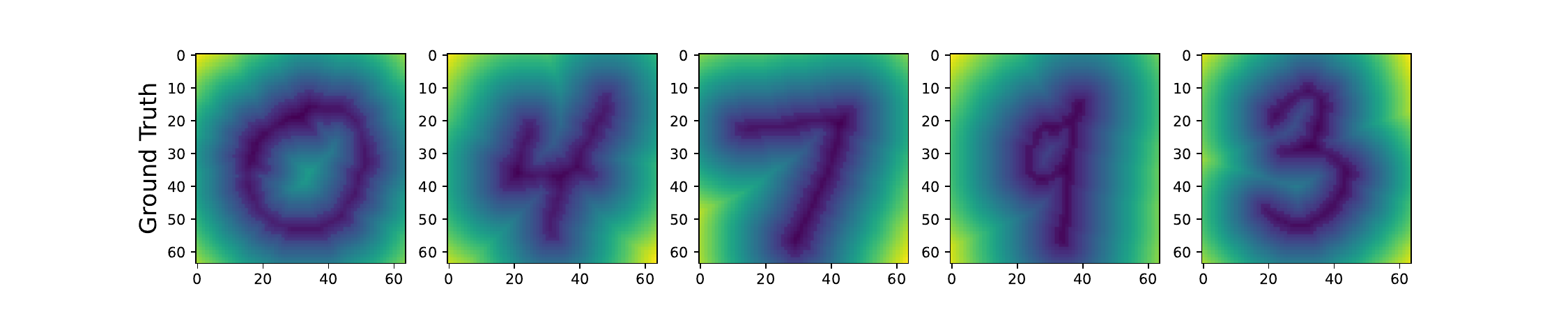}
    \end{subfigure}

    \hspace{-7cm}
    \begin{subfigure}{0.4\textwidth}
        \includegraphics[height=.45\textwidth,trim={2cm 0cm 2cm 0cm},clip]{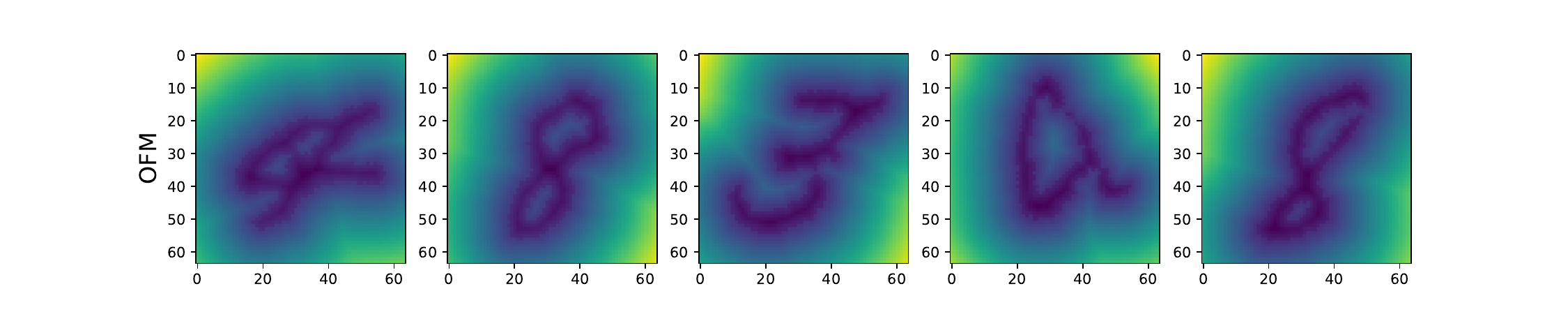}
        \vspace*{-.4cm}
    \end{subfigure}

    \caption{\alg for 2D MNIST-SDF prior learning, evaluated at $64\times64$ resolution. (\textbf{top}) Random samples from ground truth. (\textbf{bottom}) Random samples generated by \alg.}
    \label{fig:Appen_sdf}
\end{figure*}

\begin{figure*}
\vspace*{-.2cm}
    \centering
    \hspace{-7cm}
    \begin{subfigure}{0.4\textwidth}
    \includegraphics[height=.45\textwidth,trim={2cm 0 2cm 0},clip]{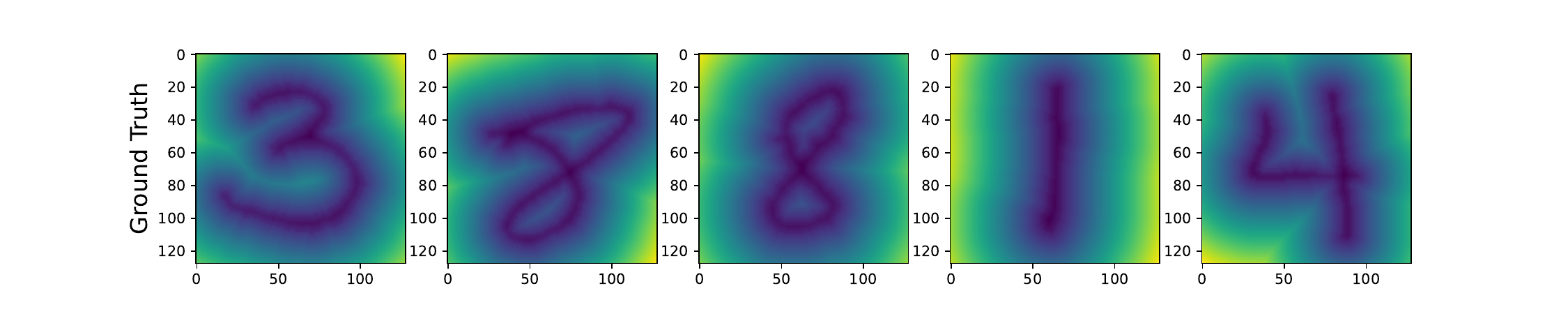}
    \end{subfigure}

    \hspace{-7cm}
    \begin{subfigure}{0.4\textwidth}
        \includegraphics[height=.45\textwidth,trim={2cm 0cm 2cm 0cm},clip]{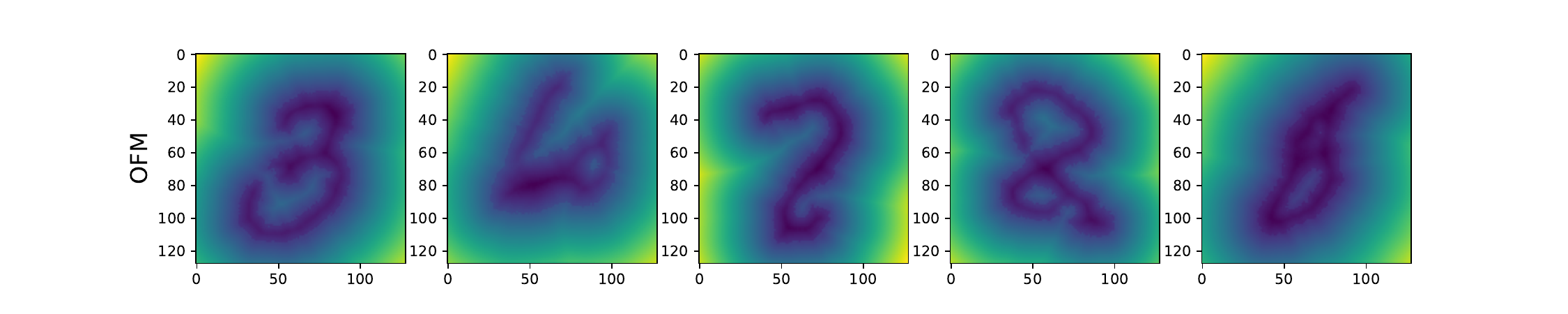}
        \vspace*{-.4cm}
    \end{subfigure}

    \caption{\alg for 2D MNIST-SDF prior learning, evaluated at $128\times128$ resolution. (\textbf{top}) Random samples from ground truth. (\textbf{bottom}) Random samples generated by \alg. }
    \label{fig:Appen_sdf_sup}
\end{figure*}
\clearpage
\newpage

\section{Additional results of 1D GP regression with more complex kernels}\label{app:add_GP}

In this section, we extend the 1D GP experiments by introducing more complex kernels for both data generation and regression, while holding all other settings identical to those used with the Matérn kernel. Specifically, we report results for the non-stationary Gibbs kernel and the Rational Quadratic (RQ) kernel.

For the Gibbs kernel, we use an input-dependent length-scale
\[
\ell(x) = \ell_0 + \ell_1 x,\qquad x\in[0,1],
\]
which induces the covariance
\[
k(x,x') \;=\; \sigma^{2}
\sqrt{\frac{2\,\ell(x)\,\ell(x')}{\ell(x)^{2} + \ell(x')^{2}}}\,
\exp\!\left(-\frac{(x - x')^{2}}{\ell(x)^{2} + \ell(x')^{2}}\right).
\]
In our setup we take $\ell_0=0.05$, $\ell_1=0.25$, and $\sigma=1.0$.

For the RQ kernel, we use \texttt{sklearn.gaussian\_process.kernels.RationalQuadratic} with length-scale $0.15$. As shown in Table~\ref{tab:OFM_table_GP}, \alg consistently outperforms all baselines on these tasks.

\begin{table*}[h]
\caption{Comparison of \alg with baseline models on 1D GP with Gibbs kernel and RQK. Best performance in bold. }
\centering
{\footnotesize 
\setlength{\tabcolsep}{4pt} %
\begin{tabular}{@{}cccccc@{}} 
\toprule
\multicolumn{1}{c}{Dataset $\rightarrow$} & \multicolumn{2}{c}{1D GP-Gibbs} & \multicolumn{2}{c}{1D GP-RQK} \\ 
\cmidrule(lr){2-3} \cmidrule(lr){4-5}  
Algorithm $\downarrow$ Metric $\rightarrow$  & SMSE & MSLL & SMSE & MSLL \\ 
\midrule

NP & $4.0 \cdot 10^{-1}$ & $7.1 \cdot 10^{-1}$ & $5.8 \cdot 10^{-1}$& $2.2 \cdot 10^{~0}$ \\
ANP & $3.5 \cdot 10^{-1}$ & $6.2 \cdot 10^{-1}$ &$2.7 \cdot 10^{-1}$ & $2.1 \cdot 10^{~0}$  \\
ConvCNP & $3.4 \cdot 10^{-1}$ & $1.1 \cdot 10^{-1}$ &$2.2 \cdot 10^{-1}$ & $6.9 \cdot 10^{-1}$\\
DGP & $4.3 \cdot 10^{-1}$ & $8.6 \cdot 10^{-1}$ &$2.4 \cdot 10^{~-1}$ & $1.2 \cdot 10^{~0}$ \\
DSPP & $4.2 \cdot 10^{-1}$ & $7.1 \cdot 10^{-1}$ &$2.5 \cdot 10^{-1}$ & $4.2 \cdot 10^{-1}$  \\
OpFlow & $ 3.1 \cdot 10^{-1}$ & $6.9 \cdot 10^{-1}$ & $2.9 \cdot 10^{-1}$ & $ 5.0 \cdot 10^{-1}$ \\    \rowcolor{yellow!25} $\mathbf{OFM  (Ours)}$ &  $ \mathbf{2.9 \cdot 10^{-1}}$ & $ \mathbf{8.7 \cdot 10^{-2}}$ & $ \mathbf{2.1 \cdot 10^{-1}}$ & $ \mathbf{9.4 \cdot 10^{-2}}$\\
\bottomrule

\end{tabular}
}
\label{tab:OFM_table_GP}
\end{table*}

\section{Details of experimental setup\label{apx:exp_set}}

In this section, we outline the details of experiments setup used in this paper. Since regression with \alg requires learning the prior first, we list the parameters used for learning the prior and regression separately. We employ FNO as the backbone, implemented using $\texttt{neuraloperator}$ library~\citep{li_fourier_2021}. All time reported in the subsequent tables are based on one computations performed using a single NVIDIA RTX A6000 (48 GB) graphics card. In all experiments, we use the \texttt{dopri5} ODE solver provided by \texttt{torchdiffeq}~\citet{chen_neural_2019} with \texttt{atol=1e-5} and \texttt{rtol=1e-5}. Detailed posterior sampling algorithm is provided in Appendix~\ref{Apx:posterior_SGLD}.

Table~\ref{tab:train_prior_table} details the parameters used for training the prior. For instance, in the 1D GP prior learning experiment, the dataset consists of 20,000 samples, each with a co-domain dimension (or channel) of one. The batch size is set at 1024, and the model is trained over 500 epochs. The total training time is about 0.76 hours, and the size of the trained model is 37.1 megabytes.

Tables~\ref{tab:reg_table_part_a}, \ref{tab:reg_table_part_b}, and \ref{tab:reg_table_part_c} detail the parameters for SGLD sampling as described in Algorithm~\ref{algo:sgld}. For example, in the 1D GP regression experiment, the regression takes 40,000 iterations with a burn-in phase of 3,000 iterations. Posterior samples are collected every 10 iterations. The temperature for the injected noise during the gradient update is set at 1, and the learning rate decays exponentially from 0.005 to 0.004 (defined in Algorithm~\ref{algo:sgld}). We average 32 runs with the Hutchinson trace estimator to evaluate the likelihood, utilizing GPU parallel computing. The noise level, as specified in Equation~\ref{eq:closed_posterior}, is 0.01 in this regression task. Then given 6 random observations, we ask for the posterior samples across 128 points. The GPU memory usage for the regression task is 4 gigabytes, with the total runtime to 4.91 hours.

\begin{table}[h]
\caption{Parameters used in experiments of prior learning}
\centering

{\footnotesize 
\begin{tabular}
{@{}ccccccc@{}}
\hline
Datasets & Size of Dataset & Channels & Batch Size & Epochs & Training Time & Model Size \\ \hline
1D GP & $2 \cdot 10^{4}$& 1 & 1024 &  $5 \cdot 10^{2}$ & $0.76$ h & 37.1 MB \\
1D TGP & $2 \cdot 10^{4}$ & 1  & 1024 & $5 \cdot 10^{2}$ & 1.24 h &37.1 MB \\
2D GP & $2 \cdot 10^{4}$ & 1  & 256 &  $5 \cdot 10^{2}$ & 1.14 h &76 MB \\
2D co-domain GP & $2 \cdot 10^{4}$  &  3 & 256 & $5 \cdot 10^{2}$ & 1.01 h & 76 MB\\
2D N-S &  $2 \cdot 10^{4}$ & 1 & 256 & $5 \cdot 10^{2}$ & 3.79 h& 286 MB \\
2D Black hole & $1.2 \cdot 10^{4}$ & 1 & 256 & $5 \cdot 10^{2}$ & 2.28 h& 286 MB  \\
2D MNIST-SDF &  $2 \cdot 10^{4}$ & 1  & 256 & $5 \cdot 10^{2}$ & 8.31 h& 286 MB  \\\hline
\end{tabular}}
\label{tab:train_prior_table}
\end{table}

\begin{table}[h]
\caption{Parameters used in regression experiments - Part A}
\centering

{\footnotesize 
\begin{tabular}
{@{}ccccc@{}}
\hline
Datasets & Total Iteration & Burn-in Iteration & Sampling Iterations & Temperature of Noise \\ \hline
1D GP & $4 \cdot 10^{4}$&  $3 \cdot 10^{3}$ &  10  & 1 \\
1D TGP & $4 \cdot 10^{4}$ &  $3 \cdot 10^{3}$  & 10 & 1 \\
2D GP & $2 \cdot 10^{4}$ & $3 \cdot 10^{3}$  & 10 & 1 \\
2D co-domain GP & $2 \cdot 10^{4}$  & $3 \cdot 10^{3}$ & 10 & 1 \\
2D N-S &  $2 \cdot 10^{4}$ & $3 \cdot 10^{3}$ & 10 & 1\\
2D Black hole & $2 \cdot 10^{4}$ & $3 \cdot 10^{3}$ & 10 & 1  \\
2D MNIST-SDF &  $2 \cdot 10^{4}$ & $3 \cdot 10^{3}$ & 10 & 1  \\\hline
\end{tabular}}
\label{tab:reg_table_part_a}
\end{table}

\begin{table}[h]
\caption{Parameters used in regression experiments - Part B}
\centering

{\footnotesize 
\begin{tabular}
{@{}ccccc@{}}
\hline
Datasets &Initial Learning Rate & End Learning Rate & Hutchinson Samples & Noise Level\\ \hline
1D GP & $5 \cdot 10^{-3}$ &  $4 \cdot 10^{-3}$ & 32 &  $1 \cdot 10^{-2}$ \\
1D TGP & $5 \cdot 10^{-3}$ & $4 \cdot 10^{-3}$  & 32 & $1 \cdot 10^{-3}$  \\
2D GP & $1 \cdot 10^{-3}$ & $8 \cdot 10^{-4}$   & 32 &  $1 \cdot 10^{-2}$\\
2D co-domain GP & $1 \cdot 10^{-3}$  &  $8 \cdot 10^{-4}$ & 16 & $1 \cdot 10^{-2}$  \\
2D N-S &  $3 \cdot 10^{-3}$ & $2 \cdot 10^{-3}$ & 8 & $1 \cdot 10^{-3}$ \\
2D Black hole & $5 \cdot 10^{-3}$ & $4 \cdot 10^{-3}$ & 8 & $1 \cdot 10^{-3}$ \\
2D MNIST-SDF &  $5 \cdot 10^{-3}$ & $4 \cdot 10^{-3}$  & 8 & $1 \cdot 10^{-3}$  \\\hline
\end{tabular}}
\label{tab:reg_table_part_b}
\end{table}

\begin{table}[!htbp]
\caption{Parameters used in regression experiment - Part C}
\centering

{\footnotesize 
\begin{tabular}
{@{}cccccc@{}}
\hline
Datasets & Number of Observations & Inquired Grids & GPU Memory  & Running Time\\ \hline
1D GP & 6 & 128 & 4 GB &  4.91 h \\
1D TGP & 3 & 128  & 4 GB & 5.42 h \\
2D GP & 32 & $32\times32$  & 22 GB &  9.70 h  \\
2D co-domain GP & 32  &   $32\times32$ & 31 GB & 5.05 h  \\
2D N-S &  32 & $64\times64$ & 44 GB & 13.65 h \\
2D Black hole & 32 & $64\times64$ & 44 GB & 13.37 h \\
2D MNIST-SDF &  64 & $64\times64$ & 44 GB & 9.41 h \\\hline
\end{tabular}}
\label{tab:reg_table_part_c}
\end{table}

\section{Ablation and scaling studies for mini-batch optimal transport and Hutchinson trace estimator}\label{app:ablation}

We first present an ablation study of the optimal transport plan. In this study, we revisit prior learning on the N-S dataset by training two models: one using independent coupling and the other employing a mini-batch optimal transport plan. Both models were trained with a batch size of 64. For evaluation, we compare the mean squared error (MSE) of density, autocovariance and spectral characteristics between 1,000 real and generated samples. Additionally, we report the convergent training loss (squared \(L^2\) loss) and the number of function evaluations (NFE) required for sampling with adaptive ODE solver (dopri5). As shown in Table~\ref{tab:minibatch_scaling}, the mini-batch OT plan outperforms the independent coupling approach in terms of pointwise accuracy, spectral fidelity, and convergent \(L^2\) loss, while also requiring fewer NFE and enabling faster sampling. Our findings indicate that as the mini-batch size increases, the model learns the prior with reduced error. This improvement may be attributed to the mini-batch optimal transport plan approaching the true optimal transport plan with larger batch sizes.

Next, we explore the variance of the Hutchinson trace estimator in likelihood estimation with a scaling experiment. For this experiment, we use a 1D GP example with parameters described in Section~\ref{ap:prior_learn} and a resolution of 128. We randomly draw a GP sample from the prior and evaluate the log likelihood by integrating the divergence as shown in Eq.~\ref{eq:OFM_logp}, while also estimating it using the Hutchinson trace estimator from Eq.~\ref{eq:trace_est}.  In this scaling experiment, the number of noise sample (\(n_{\text{noise}}\)) for the Hutchinson estimator is set to \((4, 8, 16, 32, 64, 128)\). For each \(n_{\text{noise}}\), we repeat the experiment 100 times and report the mean and standard deviation of the predicted likelihood. The exact likelihood is computed by directly integrating the trace of the Jacobian. We repeat this procedure for 5 different random 1D GP samples. As reported in Table~\ref{tab:hut_scaling}, the standard deviation of the Hutchinson trace estimator decreases rapidly as \(n_{\text{noise}}\) increases, and the predicted means always align closely with the ground truth.

Furthermore, even at smaller \(n_{\text{noise}}\) values, where a relative larger variance is expected, the performance of the posterior sampling appears robust (see Table~\ref{tab:reg_table_part_b}). We hypothesize that this is due to stochastic nature of posterior sampling algorithm (SGLD), which requires injected perturbations, renders its performance relatively insensitive to the choice of \(n_{\text{noise}}\).

\begin{table}[h]
\caption{Scaling study for the size of mini-batch given optimal transport plan,  best performance in bold}
\centering

{ \footnotesize 
\setlength{\tabcolsep}{6pt}
\begin{tabular}
{@{}ccccccc@{}}
\hline
Metrics & Density-MSE  & Autocovariance-MSE & Spectra-MSE & Convergent $L^2$ Loss & NFE \\ \hline
\midrule
Independent & $ 3.4 \cdot 10^{-5}$& $7.4 \cdot 10^{-5}$ & $ 1.3 \cdot 10^{1}$ &  $ 5.1 \cdot 10^{-2}$  &  $168$  \\
mini-batch = 32 & $ 3.0 \cdot 10^{-5}$& $1.3 \cdot 10^{-4}$ & $ 5.0 \cdot 10^{0}$ &  $ 2.3 \cdot 10^{-2}$  &  $\mathbf{141}$  \\
mini-batch = 64 & $ \mathbf{9.8 \cdot 10^{-6}}$& $9.8 \cdot 10^{-5} $ & $6.3 \cdot 10^{0}$ &  $1.9 \cdot 10^{-2}$ & $153$ \\
mini-batch = 128 & $ 2.4 \cdot 10^{-5}$& $ \mathbf{1.7 \cdot 10^{-5}} $ & $\mathbf{3.6 \cdot 10^{0}}$ &  $\mathbf{1.5 \cdot 10^{-2}}$ & $182$ 
\\\hline
\end{tabular}}
\label{tab:minibatch_scaling}
\end{table}

\begin{table}[h]
\caption{Scaling study for the number of noise samples of Hutchinson trace estimator}
\centering

{\footnotesize 
\setlength{\tabcolsep}{3pt} 
\begin{tabular}
{@{}cccccccc@{}}
\hline
 & $n_{\text{noise}}=4$  & $n_{\text{noise}}=8$ & $n_{\text{noise}}=16$ & $n_{\text{noise}}=32$ & $n_{\text{noise}}=64$ & $n_{\text{noise}}=128$ & exact\\ \hline
\midrule

sample 1& $ -481.7 \pm 9.4$& $-482.9 \pm 7.7$ & $ -480.6 \pm 4.9 $&  $ -481.4 \pm 3.8$  &  $-481.1 \pm 2.5$  &  $-480.8 \pm 1.8$ &  $-482.7$ \\

sample 2& $ -482.8 \pm 9.8$& $-481.8 \pm 6.7$ & $ -481.3 \pm 4.9 $&  $ -480.8 \pm 3.5$  &  $-481.1 \pm 2.7$  &  $-481.2 \pm 1.7$ &  $-482.0$ \\

sample 3& $ -479.7 \pm 10.2$& $-479.7\pm 7.6$ & $ -478.2 \pm 4.7 $&  $ -478.6 \pm 3.4$  &  $-478.8 \pm 2.6$  &  $-478.8 \pm 1.7$ &  $-479.1$ \\

sample 4& $ -476.9 \pm 10.0$& $-477.0 \pm 6.6$ & $ -477.9 \pm 5.6 $&  $ -478.5 \pm 4.0$  &  $-478.1 \pm 2.7$  &  $-478.0 \pm 1.8$ &  $-477.4$ \\

sample 5& $ -479.4 \pm 10.7$& $-479.2 \pm 6.6$ & $ -479.2 \pm 5.3 $&  $ -479.4 \pm 3.6$  &  $-479.6 \pm 2.8$  &  $-479.3 \pm 2.0$ &  $-478.5$ 
\\\hline
\end{tabular}}
\label{tab:hut_scaling}
\end{table}

\begin{figure*}[ht]
\vspace*{-.2cm}
\hspace*{-.4cm}  
    \centering
    \begin{subfigure}{0.49\textwidth}
        \includegraphics[height=0.7\textwidth]{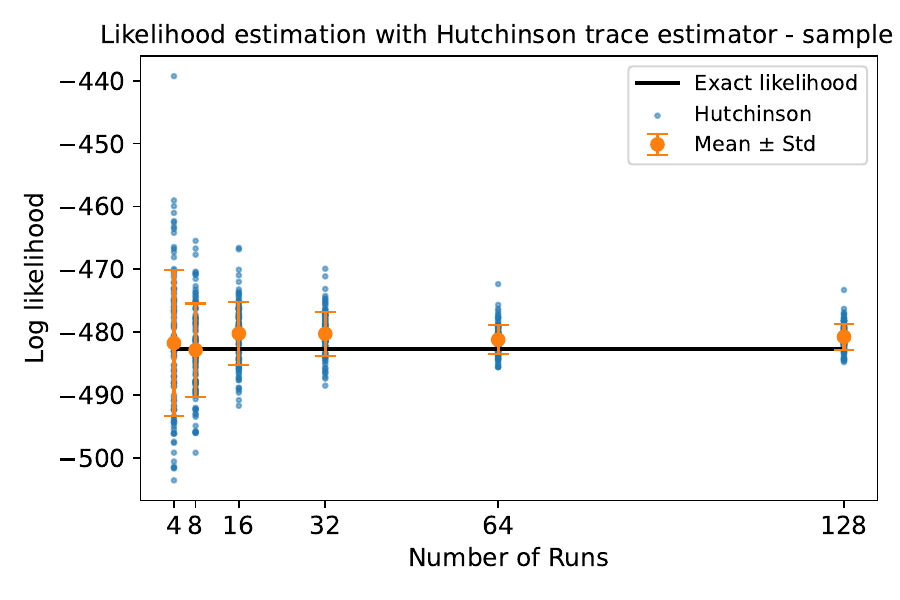}  
        \vspace*{-.6cm}
    \end{subfigure}
    \quad
    \begin{subfigure}{0.49\textwidth}
        \centering
        \includegraphics[height=.7\textwidth]{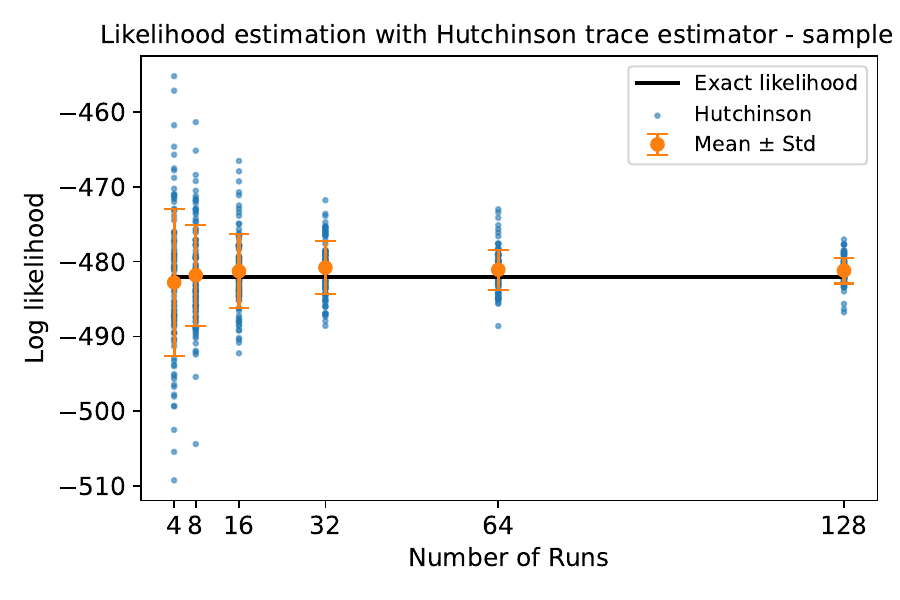}
        \vspace*{-.6cm}
    \end{subfigure}

    \caption{Log likelihood by the integration of divergence, (\textbf{left}) plot for first GP sample. (\textbf{right}) plot for second GP sample}
    \label{fig:Hutchinson_plot}
\end{figure*}

\clearpage
\newpage
\section{Detailed analysis of \alg and comparison with existing methods\label{Apx:analysis}}

In this section, we elaborate the connection and difference with pervious work, highlight contributions and potential limitations of our work. The regression with \alg involves a two-steps process: (i) learning a prior on function space, and (ii) sampling from the posterior given observations. Consequently, the \alg framework has connections with both generative models on function space and the models developed for functional regression. In the following, we provide a comprehensive comparative analysis with related models and baselines, including operator flow (\OpFlow)~\citep{shi_universal_2024}, conditional optimal transport flow matching (COT-FM)~\citep{kerrigan_dynamic_2024}, conditional models (NPs)~\citep{garnelo_neural_2018, dutordoir_neural_2023}

\textbf{Comparison with \OpFlow.} \OpFlow introduces invertible neural operators, which generalizes RealNVP~\citep{dinh_density_2017} to function space and maps any collection of points sampled from a \GP to a new collection of points in the data space, using the maximum likelihood principle~\citep{shi_universal_2024}. This method captures the likelihood of any collection of point consistently as the resolution increases and allows for \UFR using SGLD. Despite these advantages, the requirement for an invertible neural operator brings training and expressiveness challenges. To be specific, \OpFlow{} failed on all non‑\GP{} regression tasks in this paper because the prior learning stage suffered from mode collapse during training. On the contrary, \alg adopts a simulation-free ODE framework for prior learning, which offers enhanced expressiveness and ensures training stability through a simple regression objective while avoiding using the invertible neural operator. In addition, \alg proposes a non-trivial extension of \UFR to the simulation-free ODE framework. These improvements render \alg a more practical solution for challenging functional regression tasks. 

\textbf{Comparison with COT-FM.} 
COT-FM~\citep{kerrigan_dynamic_2024} proposes a conditional generalization of Benamou-Brenier Theorem~\citep{benamou_computational_2000}, formulating a conditional optimal transport plan that applicable for both Euclidean and Hilbert space. In contrast, \alg employs an unconditional optimal transport plan in Hilbert space based on dynamic Kantorovich formulation, which is initially generalized for unbalanced optimal transport~\citep{chizat_unbalanced_2018}. The advantage of COT-FM lies in its ability to flexibly incorporate specific conditions tailored for conditional generative tasks. However, COT-FM is not suitable for functional regression tasks due to: (i) COT-FM is contingent upon both the reference and target being influenced by conditions, and the vector field learnt is triangular, designed to transport jointly the coupling of a reference measure and a condition measure. In \UFR setting, the learnt prior is required to be unconditioned, (ii) the coupling with condition measure typically prevents inducing valid stochastic process, even when the reference measure is a Gaussian measure, (iii) cannot provide point
evaluation of probability density. Last, We should notice, the development of \alg is different and independent of COT-FM, the former with a focus on stochastic process learning and Bayesian functional regression. 

\textbf{Comparison with conditional models.} NPs were developed to address the computational and restrictive prior challenges of Gaussian Processes, utilizing neural networks for efficiency ~\citep{garnelo_neural_2018}. However, several recent studies have discussed the drawbacks in the formulation of NPs, raising concerns that NPs might not learn the underlying function distribution \citep{rahman_generative_2022, dupont_generative_2022, shi_universal_2024}.

Notably, NPs treats the point cloud data as a set of values, ignoring the metric space of the data~\citep{dupont_generative_2022}. This can lead to misinterpretations of a function sampled at different resolutions as distinct functions (Appendix A.1 of~\citep{rahman_generative_2022}).  Furthermore, NPs rely on encoding input data into finite-dimensional, Gaussian-distributed latent variables before projecting these into an infinite-dimensional space. This process tends to lose consistency at higher resolutions. Moreover, the Bayesian regression framework underpinning NPs focuses on point sets rather than the functions themselves, leading to a dilution of prior information with increasing data points.

In recent study, diffusion-based variants of NPs (NDP)~\citep{dutordoir_neural_2023}, was proposed to leverage the expressiveness of diffusion models \citep{ho_denoising_2020, song_score-based_2021}. Nonetheless, the formulation of NDP does not address the aforementioned issues of NPs and introduces two more problems: (i) NDP fails to induce a valid stochastic process as it does not satisfy the marginal consistency criterion required by Kolmogorov Extension Theorem~\citep{kolmogorov_foundations_2018}, and (ii) it relies on uncorrelated Gaussian noise for denoising, which is not applicable in function spaces~\citep{lim_score-based_2023}. Oppositely, \alg establishes a more theoretically sound framework by rigorously defining learning within function spaces. Additionally, Bayesian functional regression within the \alg framework adheres to valid stochastic processes, offering a robust and theoretically grounded solution. Last, we provide a high-level comparison of \alg and NP as shown in Table~\ref{app:last_table}.


\begin{table}[t]
\centering

\footnotesize
\renewcommand{\arraystretch}{1.25}
\caption{High-level comparison of OFM and NP. }
\begin{tabularx}{0.98\linewidth}{lLL}
\toprule
\textbf{Aspect} & \textbf{OFM} & \textbf{Neural Processes (NP)} \\
\midrule
Modeling target &
Global \emph{prior} over stochastic processes with a valid joint density. &
Amortized \emph{conditional} model without explicit tractable joint density over full functions.\\
Arbitrary queries &
Set-size / location agnostic; evaluate on any grid or point set. &
Set-size / location agnostic; trained with random context/target splits. Performance may degrade under extreme sparsity or distribution shift \\
Uncertainty \& likelihoods &
Tractable log-densities and posterior sampling enable principled Bayesian inference. &
Predictive densities are available and trained via conditional likelihood; no closed-form function-level likelihood is typically specified. Uncertainty reflects model/context coverage and may be miscalibrated. \\
Structure \& extrapolation &
Captures global geometry and non-stationarity via the learned flow. &
Learns inductive biases from data through context summaries; strong interpolation, but extrapolation and long-range generalization are not guaranteed and depend on training distribution. \\
\bottomrule
\end{tabularx}
\label{app:last_table}
\end{table}

\textbf{Limitations.}
Despite these advances, the current regression framework with \alg is primarily limited to low-dimensional data (1D and 2D in this study). This limitation stems from the challenges associated with learning operators for functions defined on high-dimensional domains—an area that remains underdeveloped both computationally and in terms of dataset availability~\citep{kovachki_neural_2023}. Additionally, while the time complexity for regression with \alg is $\mathcal{O}(m^2)$, the incorporation of additional components significantly increases its computational resource requirements compared to classical GP regression.

There are several potential paths to mitigate the computation concerns: (i) Carefully adjusting the step size $( \eta_t)$ and temperature $(T)$ in the SGLD algorithm to prevent overly large gradient updates. (ii) Running the trace estimator on multiple GPUs, use mixed-precision FNO~\citep{sitzmann_metasdf_2020}, or adopt efficient neural operators~\citep{shi_mesh-informed_2025}. (iii) Using more efficient ODE solvers. Our current implementation uses a high-order ODE solver (dopri5) that requires over 100 steps for sampling. Switching to a more modern and efficient solver, such as the DPM-Solver~\citep{lu_dpm-solver_2022}, could potentially reduce the number of evaluations, freeing up significant GPU resources.

\clearpage
\newpage
\section*{NeurIPS Paper Checklist}


\begin{enumerate}

\item {\bf Claims}
    \item[] Question: Do the main claims made in the abstract and introduction accurately reflect the paper's contributions and scope?
    \item[] Answer: \answerYes{} 
    \item[] Justification: The abstract and introduction succinctly outline the paper’s claims and contributions, which are fully supported by the theoretical analysis and experimental results. 

    \item[] Guidelines:
    \begin{itemize}
        \item The answer NA means that the abstract and introduction do not include the claims made in the paper.
        \item The abstract and/or introduction should clearly state the claims made, including the contributions made in the paper and important assumptions and limitations. A No or NA answer to this question will not be perceived well by the reviewers. 
        \item The claims made should match theoretical and experimental results, and reflect how much the results can be expected to generalize to other settings. 
        \item It is fine to include aspirational goals as motivation as long as it is clear that these goals are not attained by the paper. 
    \end{itemize}

\item {\bf Limitations}
    \item[] Question: Does the paper discuss the limitations of the work performed by the authors?
    \item[] Answer: \answerYes{} 
    \item[] Justification: We also highlight the limitations in the main text and add explain into detail in appendix~\ref{Apx:analysis} 
    
    \item[] Guidelines:
    \begin{itemize}
        \item The answer NA means that the paper has no limitation while the answer No means that the paper has limitations, but those are not discussed in the paper. 
        \item The authors are encouraged to create a separate "Limitations" section in their paper.
        \item The paper should point out any strong assumptions and how robust the results are to violations of these assumptions (e.g., independence assumptions, noiseless settings, model well-specification, asymptotic approximations only holding locally). The authors should reflect on how these assumptions might be violated in practice and what the implications would be.
        \item The authors should reflect on the scope of the claims made, e.g., if the approach was only tested on a few datasets or with a few runs. In general, empirical results often depend on implicit assumptions, which should be articulated.
        \item The authors should reflect on the factors that influence the performance of the approach. For example, a facial recognition algorithm may perform poorly when image resolution is low or images are taken in low lighting. Or a speech-to-text system might not be used reliably to provide closed captions for online lectures because it fails to handle technical jargon.
        \item The authors should discuss the computational efficiency of the proposed algorithms and how they scale with dataset size.
        \item If applicable, the authors should discuss possible limitations of their approach to address problems of privacy and fairness.
        \item While the authors might fear that complete honesty about limitations might be used by reviewers as grounds for rejection, a worse outcome might be that reviewers discover limitations that aren't acknowledged in the paper. The authors should use their best judgment and recognize that individual actions in favor of transparency play an important role in developing norms that preserve the integrity of the community. Reviewers will be specifically instructed to not penalize honesty concerning limitations.
    \end{itemize}

\item {\bf Theory assumptions and proofs}
    \item[] Question: For each theoretical result, does the paper provide the full set of assumptions and a complete (and correct) proof?
    \item[] Answer: \answerYes{} 
    \item[] Justification: 
    We provide detailed explanations, derivations, and proofs for all theoretical results (see Appendix \ref{sec:spl}, \ref{Apx:KET}, and \ref{apx:conditional}).
    \item[] Guidelines:
    \begin{itemize}
        \item The answer NA means that the paper does not include theoretical results. 
        \item All the theorems, formulas, and proofs in the paper should be numbered and cross-referenced.
        \item All assumptions should be clearly stated or referenced in the statement of any theorems.
        \item The proofs can either appear in the main paper or the supplemental material, but if they appear in the supplemental material, the authors are encouraged to provide a short proof sketch to provide intuition. 
        \item Inversely, any informal proof provided in the core of the paper should be complemented by formal proofs provided in appendix or supplemental material.
        \item Theorems and Lemmas that the proof relies upon should be properly referenced. 
    \end{itemize}

    \item {\bf Experimental result reproducibility}
    \item[] Question: Does the paper fully disclose all the information needed to reproduce the main experimental results of the paper to the extent that it affects the main claims and/or conclusions of the paper (regardless of whether the code and data are provided or not)?
    \item[] Answer: \answerYes{} 
    \item[] Justification:   We provide the detailed experimental setup in Appendix \ref{ap:prior_learn}, \ref{Apx:posterior_SGLD}, and \ref{apx:exp_set}. We also include code with additional supplementary examples, log files, and figures to ensure that all results are fully reproducible.
    \item[] Guidelines:
    \begin{itemize}
        \item The answer NA means that the paper does not include experiments.
        \item If the paper includes experiments, a No answer to this question will not be perceived well by the reviewers: Making the paper reproducible is important, regardless of whether the code and data are provided or not.
        \item If the contribution is a dataset and/or model, the authors should describe the steps taken to make their results reproducible or verifiable. 
        \item Depending on the contribution, reproducibility can be accomplished in various ways. For example, if the contribution is a novel architecture, describing the architecture fully might suffice, or if the contribution is a specific model and empirical evaluation, it may be necessary to either make it possible for others to replicate the model with the same dataset, or provide access to the model. In general. releasing code and data is often one good way to accomplish this, but reproducibility can also be provided via detailed instructions for how to replicate the results, access to a hosted model (e.g., in the case of a large language model), releasing of a model checkpoint, or other means that are appropriate to the research performed.
        \item While NeurIPS does not require releasing code, the conference does require all submissions to provide some reasonable avenue for reproducibility, which may depend on the nature of the contribution. For example
        \begin{enumerate}
            \item If the contribution is primarily a new algorithm, the paper should make it clear how to reproduce that algorithm.
            \item If the contribution is primarily a new model architecture, the paper should describe the architecture clearly and fully.
            \item If the contribution is a new model (e.g., a large language model), then there should either be a way to access this model for reproducing the results or a way to reproduce the model (e.g., with an open-source dataset or instructions for how to construct the dataset).
            \item We recognize that reproducibility may be tricky in some cases, in which case authors are welcome to describe the particular way they provide for reproducibility. In the case of closed-source models, it may be that access to the model is limited in some way (e.g., to registered users), but it should be possible for other researchers to have some path to reproducing or verifying the results.
        \end{enumerate}
    \end{itemize}

\item {\bf Open access to data and code}
    \item[] Question: Does the paper provide open access to the data and code, with sufficient instructions to faithfully reproduce the main experimental results, as described in supplemental material?
    \item[] Answer: \answerYes{} 
    \item[] Justification: The code is provided as supplementary material, along with example scripts, log files, and figures to ensure full reproducibility of our results.
    \item[] Guidelines:
    \begin{itemize}
        \item The answer NA means that paper does not include experiments requiring code.
        \item Please see the NeurIPS code and data submission guidelines (\url{https://nips.cc/public/guides/CodeSubmissionPolicy}) for more details.
        \item While we encourage the release of code and data, we understand that this might not be possible, so “No” is an acceptable answer. Papers cannot be rejected simply for not including code, unless this is central to the contribution (e.g., for a new open-source benchmark).
        \item The instructions should contain the exact command and environment needed to run to reproduce the results. See the NeurIPS code and data submission guidelines (\url{https://nips.cc/public/guides/CodeSubmissionPolicy}) for more details.
        \item The authors should provide instructions on data access and preparation, including how to access the raw data, preprocessed data, intermediate data, and generated data, etc.
        \item The authors should provide scripts to reproduce all experimental results for the new proposed method and baselines. If only a subset of experiments are reproducible, they should state which ones are omitted from the script and why.
        \item At submission time, to preserve anonymity, the authors should release anonymized versions (if applicable).
        \item Providing as much information as possible in supplemental material (appended to the paper) is recommended, but including URLs to data and code is permitted.
    \end{itemize}

\item {\bf Experimental setting/details}
    \item[] Question: Does the paper specify all the training and test details (e.g., data splits, hyperparameters, how they were chosen, type of optimizer, etc.) necessary to understand the results?
    \item[] Answer: \answerYes{} 
    \item[] Justification: See Appendix~\ref{apx:exp_set}
    
    \item[] Guidelines:
    \begin{itemize}
        \item The answer NA means that the paper does not include experiments.
        \item The experimental setting should be presented in the core of the paper to a level of detail that is necessary to appreciate the results and make sense of them.
        \item The full details can be provided either with the code, in appendix, or as supplemental material.
    \end{itemize}

\item {\bf Experiment statistical significance}
    \item[] Question: Does the paper report error bars suitably and correctly defined or other appropriate information about the statistical significance of the experiments?
    \item[] Answer: \answerYes{} 
    \item[] Justification: Following the convention in functional regression, we assess posterior learning using SMLL and MSME for \GP examples. For non-\GP{} tasks, we also provide visual checks by plotting the predicted mean alongside posterior samples.
    \item[] Guidelines:
    \begin{itemize}
        \item The answer NA means that the paper does not include experiments.
        \item The authors should answer "Yes" if the results are accompanied by error bars, confidence intervals, or statistical significance tests, at least for the experiments that support the main claims of the paper.
        \item The factors of variability that the error bars are capturing should be clearly stated (for example, train/test split, initialization, random drawing of some parameter, or overall run with given experimental conditions).
        \item The method for calculating the error bars should be explained (closed form formula, call to a library function, bootstrap, etc.)
        \item The assumptions made should be given (e.g., Normally distributed errors).
        \item It should be clear whether the error bar is the standard deviation or the standard error of the mean.
        \item It is OK to report 1-sigma error bars, but one should state it. The authors should preferably report a 2-sigma error bar than state that they have a 96\% CI, if the hypothesis of Normality of errors is not verified.
        \item For asymmetric distributions, the authors should be careful not to show in tables or figures symmetric error bars that would yield results that are out of range (e.g. negative error rates).
        \item If error bars are reported in tables or plots, The authors should explain in the text how they were calculated and reference the corresponding figures or tables in the text.
    \end{itemize}

\item {\bf Experiments compute resources}
    \item[] Question: For each experiment, does the paper provide sufficient information on the computer resources (type of compute workers, memory, time of execution) needed to reproduce the experiments?
    \item[] Answer: \answerYes{} 
    \item[] Justification: See the Appendix~\ref{apx:exp_set} 
    \item[] Guidelines:
    \begin{itemize}
        \item The answer NA means that the paper does not include experiments.
        \item The paper should indicate the type of compute workers CPU or GPU, internal cluster, or cloud provider, including relevant memory and storage.
        \item The paper should provide the amount of compute required for each of the individual experimental runs as well as estimate the total compute. 
        \item The paper should disclose whether the full research project required more compute than the experiments reported in the paper (e.g., preliminary or failed experiments that didn't make it into the paper). 
    \end{itemize}
    
\item {\bf Code of ethics}
    \item[] Question: Does the research conducted in the paper conform, in every respect, with the NeurIPS Code of Ethics \url{https://neurips.cc/public/EthicsGuidelines}?
    \item[] Answer: \answerYes{} 
    \item[] Justification: The authors do not foresee any ethical implications to this work.
    \item[] Guidelines:
    \begin{itemize}
        \item The answer NA means that the authors have not reviewed the NeurIPS Code of Ethics.
        \item If the authors answer No, they should explain the special circumstances that require a deviation from the Code of Ethics.
        \item The authors should make sure to preserve anonymity (e.g., if there is a special consideration due to laws or regulations in their jurisdiction).
    \end{itemize}

\item {\bf Broader impacts}
    \item[] Question: Does the paper discuss both potential positive societal impacts and negative societal impacts of the work performed?
    \item[] Answer: \answerNA{} 
    \item[] Justification: This work is not directly tied to any harmful applications.
    \item[] Guidelines: 
    \begin{itemize}
        \item The answer NA means that there is no societal impact of the work performed.
        \item If the authors answer NA or No, they should explain why their work has no societal impact or why the paper does not address societal impact.
        \item Examples of negative societal impacts include potential malicious or unintended uses (e.g., disinformation, generating fake profiles, surveillance), fairness considerations (e.g., deployment of technologies that could make decisions that unfairly impact specific groups), privacy considerations, and security considerations.
        \item The conference expects that many papers will be foundational research and not tied to particular applications, let alone deployments. However, if there is a direct path to any negative applications, the authors should point it out. For example, it is legitimate to point out that an improvement in the quality of generative models could be used to generate deepfakes for disinformation. On the other hand, it is not needed to point out that a generic algorithm for optimizing neural networks could enable people to train models that generate Deepfakes faster.
        \item The authors should consider possible harms that could arise when the technology is being used as intended and functioning correctly, harms that could arise when the technology is being used as intended but gives incorrect results, and harms following from (intentional or unintentional) misuse of the technology.
        \item If there are negative societal impacts, the authors could also discuss possible mitigation strategies (e.g., gated release of models, providing defenses in addition to attacks, mechanisms for monitoring misuse, mechanisms to monitor how a system learns from feedback over time, improving the efficiency and accessibility of ML).
    \end{itemize}
    
\item {\bf Safeguards}
    \item[] Question: Does the paper describe safeguards that have been put in place for responsible release of data or models that have a high risk for misuse (e.g., pretrained language models, image generators, or scraped datasets)?
    \item[] Answer: \answerNA{} 
    \item[] Justification: This is not applicable
    \item[] Guidelines:
    \begin{itemize}
        \item The answer NA means that the paper poses no such risks.
        \item Released models that have a high risk for misuse or dual-use should be released with necessary safeguards to allow for controlled use of the model, for example by requiring that users adhere to usage guidelines or restrictions to access the model or implementing safety filters. 
        \item Datasets that have been scraped from the Internet could pose safety risks. The authors should describe how they avoided releasing unsafe images.
        \item We recognize that providing effective safeguards is challenging, and many papers do not require this, but we encourage authors to take this into account and make a best faith effort.
    \end{itemize}

\item {\bf Licenses for existing assets}
    \item[] Question: Are the creators or original owners of assets (e.g., code, data, models), used in the paper, properly credited and are the license and terms of use explicitly mentioned and properly respected?
    \item[] Answer: \answerYes{} 
    \item[] Justification: We cite each reused code snippet and dataset with its original publication.
    \item[] Guidelines:
    \begin{itemize}
        \item The answer NA means that the paper does not use existing assets.
        \item The authors should cite the original paper that produced the code package or dataset.
        \item The authors should state which version of the asset is used and, if possible, include a URL.
        \item The name of the license (e.g., CC-BY 4.0) should be included for each asset.
        \item For scraped data from a particular source (e.g., website), the copyright and terms of service of that source should be provided.
        \item If assets are released, the license, copyright information, and terms of use in the package should be provided. For popular datasets, \url{paperswithcode.com/datasets} has curated licenses for some datasets. Their licensing guide can help determine the license of a dataset.
        \item For existing datasets that are re-packaged, both the original license and the license of the derived asset (if it has changed) should be provided.
        \item If this information is not available online, the authors are encouraged to reach out to the asset's creators.
    \end{itemize}

\item {\bf New assets}
    \item[] Question: Are new assets introduced in the paper well documented and is the documentation provided alongside the assets?
    \item[] Answer: \answerNA{} 
    \item[] Justification: We do not release new assets
    \item[] Guidelines:
    \begin{itemize}
        \item The answer NA means that the paper does not release new assets.
        \item Researchers should communicate the details of the dataset/code/model as part of their submissions via structured templates. This includes details about training, license, limitations, etc. 
        \item The paper should discuss whether and how consent was obtained from people whose asset is used.
        \item At submission time, remember to anonymize your assets (if applicable). You can either create an anonymized URL or include an anonymized zip file.
    \end{itemize}

\item {\bf Crowdsourcing and research with human subjects}
    \item[] Question: For crowdsourcing experiments and research with human subjects, does the paper include the full text of instructions given to participants and screenshots, if applicable, as well as details about compensation (if any)? 
    \item[] Answer: \answerNA{} 
    \item[] Justification: We do not do crowdsourcing nor research with human subjects
    \item[] Guidelines:
    \begin{itemize}
        \item The answer NA means that the paper does not involve crowdsourcing nor research with human subjects.
        \item Including this information in the supplemental material is fine, but if the main contribution of the paper involves human subjects, then as much detail as possible should be included in the main paper. 
        \item According to the NeurIPS Code of Ethics, workers involved in data collection, curation, or other labor should be paid at least the minimum wage in the country of the data collector. 
    \end{itemize}

\item {\bf Institutional review board (IRB) approvals or equivalent for research with human subjects}
    \item[] Question: Does the paper describe potential risks incurred by study participants, whether such risks were disclosed to the subjects, and whether Institutional Review Board (IRB) approvals (or an equivalent approval/review based on the requirements of your country or institution) were obtained?
    \item[] Answer: \answerNA{} 
    \item[] Justification: We do not do crowdsourcing nor research with human subjects
    \item[] Guidelines:
    \begin{itemize}
        \item The answer NA means that the paper does not involve crowdsourcing nor research with human subjects.
        \item Depending on the country in which research is conducted, IRB approval (or equivalent) may be required for any human subjects research. If you obtained IRB approval, you should clearly state this in the paper. 
        \item We recognize that the procedures for this may vary significantly between institutions and locations, and we expect authors to adhere to the NeurIPS Code of Ethics and the guidelines for their institution. 
        \item For initial submissions, do not include any information that would break anonymity (if applicable), such as the institution conducting the review.
    \end{itemize}

\item {\bf Declaration of LLM usage}
    \item[] Question: Does the paper describe the usage of LLMs if it is an important, original, or non-standard component of the core methods in this research? Note that if the LLM is used only for writing, editing, or formatting purposes and does not impact the core methodology, scientific rigorousness, or originality of the research, declaration is not required.
    \item[] Answer: \answerNA{} 
    \item[] Justification: This work is not related to LLM
    \item[] Guidelines:
    \begin{itemize}
        \item The answer NA means that the core method development in this research does not involve LLMs as any important, original, or non-standard components.
        \item Please refer to our LLM policy (\url{https://neurips.cc/Conferences/2025/LLM}) for what should or should not be described.
    \end{itemize}

\end{enumerate}

\end{document}